\newif\ifarxiv\arxivtrue
\newif\ifmainpaper\mainpapertrue
\newif\ifack
\newif\iftwocolumn
    \newcommand\citep\parencite
    \newcommand\citet\textcite
\newcommand{\gbar}{\bar{g}}
\newcommand{\sigmamin}{\sigma_{\mathrm{min}}}
\DeclareMathOperator{\clip}{\mathtt{clip}}
\newcommand{\uclip}{U-Clip}
\newcommand{\bgamma}{\bar{\gamma}}
\newcommand{\tgamma}{\tilde{\gamma}}
\newcommand{\mhat}{\hat{m}}
\newcommand{\shat}{\hat{s}}
\newcommand{\opt}{\texttt{opt}}
\newcommand{\enum}[2]{{#1}\mathrm{e}{#2}}
\newcommand{\R}{\mathbb{R}}
\newcommand{\N}{\mathbb{N}}
\DeclareMathOperator{\unif}{\text{Unif}}
\DeclareMathOperator{\E}{\mathbb{E}}
\let\P\relax
\DeclareMathOperator{\P}{\mathbb{P}}
\newcommand{\1}{\mathds{1}}
\newcommand{\midvert}{\;\middle\vert{}\;}
\newcommand{\grad}{\nabla}
\DeclareMathOperator*{\argmin}{argmin}
\newcommand{\sign}{\operatorname{sign}}
\newcommand{\abs}[1]{\vert{} #1 \vert{}}
\newcommand{\norm}[1]{\lVert{}#1\rVert{}}
\newcommand{\dd}{\mathop{}\!\mathrm{d}}
\newcommand{\F}{\mathcal{F}}
\newcommand{\X}{\mathcal{X}}
\theoremstyle{plain}
\newtheorem{theorem}{Theorem}[section]
\newtheorem{proposition}[theorem]{Proposition}
\newtheorem{lemma}[theorem]{Lemma}
\newtheorem{corollary}[theorem]{Corollary}
\newtheorem*{lemma*}{Lemma}
\newtheorem*{proposition*}{Proposition}
\theoremstyle{remark}
\newtheorem{remark}[theorem]{Remark}
\newtheorem{example}[theorem]{Example}
\newcommand{\twofigwidth}{0.49\textwidth}
\newcommand{\threefigwidth}{0.3\textwidth}
\newcommand{\thetitle}{\uclip{}: On-Average Unbiased Stochastic Gradient Clipping}
\newcommand{\thekeywords}{Machine Learning, ICML, Online Learning, Optimization, Theory,
    Gradient Descent, SGD, Clipping, Bias}
    \title{\thetitle}
    \author{%
    Bryn Elesedy\thanks{Correspondence to \href{mailto:bryn@robots.ox.ac.uk}{bryn@robots.ox.ac.uk}.
    Work carried out while BE was an intern at DeepMind.}\\
    University of Oxford
    \and
    Marcus Hutter\\
    DeepMind
    }
    \date{\today}
    \icmltitlerunning{\thetitle}
\begin{document}

\ifarxiv
    \maketitle
    \twocolumnfalse
\else
    \twocolumn[
        \icmltitle{\thetitle}

    \icmlsetsymbol{equal}{*}

    \begin{icmlauthorlist}
    \icmlauthor{Bryn Elesedy}{oxford}
    \icmlauthor{Marcus Hutter}{deepmind}
    \end{icmlauthorlist}

    \icmlaffiliation{oxford}{Department of Computer Science, University of Oxford, Oxford, UK}
    \icmlaffiliation{deepmind}{DeepMind, London, UK}

    \icmlcorrespondingauthor{Bryn Elesedy}{bryn@robots.ox.ax.uk}

    \icmlkeywords{\thekeywords}

    \vskip 0.3in
    ]

    \printAffiliationsAndNotice{}
    \twocolumntrue
\fi

\begin{abstract}
    \noindent \uclip{} is a simple amendment to gradient clipping
    that can be applied to any iterative gradient optimization
    algorithm. Like regular clipping, \uclip{} involves using
    gradients that are clipped to a prescribed size
    (e.g.~with component wise or norm based clipping) 
    but instead of discarding the clipped portion of the gradient,
    \uclip{} maintains a buffer of these values that is added to the gradients
    on the next iteration (before clipping).
    We show that the cumulative bias of the~\uclip{} updates is bounded by a constant.
    This implies that the clipped updates are unbiased \emph{on average}.
    Convergence follows via a lemma that guarantees
    convergence with updates $u_i$ as long as
    $\sum_{i=1}^t (u_i - g_i) = o(t)$ where $g_i$ are the gradients.
    Extensive experimental exploration is performed on CIFAR10 with further validation
    given on ImageNet.
\end{abstract}
    \ifarxiv
    \def\contentsname{\centering\normalsize Contents}\setcounter{tocdepth}{2}
    {\parskip=-2.9ex\tableofcontents}
    \fi

\section{Introduction}\label{sec:introduction}
Stochastic gradient descent (SGD)~\citep{robbins1951stochastic}
aims to minimize an expected objective $f(x) = \E[\ell(x, W)]$,
where $W$ is some external source of randomness, by using
an unbiased estimator of $\E[\grad_x \ell(x, W)]$. 
Often in machine learning, $f(x)$ is full batch loss and
the randomness comes from sub-sampling (mini-batching).
Gradient clipping is a standard technique to control the size of gradient updates.
Its applications range from reducing the impact of mini-batch noise to
controlling exploding gradients RNNs~\citep{pascanu2013difficulty} and 
it has even been observed to reduce the generalisation gap~\citep{hardt2016train}.
The gradient is constrained to be
no larger than a pre-specified size, typically coordinate wise or in Euclidean norm.
For $x\in\R$, coordinate clipping to scale $\gamma > 0$ is given by
$
  \clip(x, \gamma) = \max\{\min\{x, \gamma\}, -\gamma\}
  $
and is applied element-wise to vectors in $\R^d$. 
For $x \in \R^d$, norm clipping is given by 
$
  \clip(x, \gamma)  = \min\{1, \gamma / \norm{x}\} x
  $
where $\norm{\cdot}$ denotes the Euclidean norm.
Norm clipping has the benefit of maintaining the direction of the updates, but
will also rescale all coordinates if just one has a very large spurious value.
Coordinate clipping maintains independence between the scale of the coordinates.
Moreover, in respect of reducing the objective,
with convex smooth functions (i.e.~those with Lipschitz gradients) it is 
sufficient for the update to be in a direction with
positive inner product with the gradient (e.g.~by~\citet[Lemma 3.4]{bubeck2015convex}).
Our theory concerns coordinate clipping, but our experiments explore norm
clipping too.

\paragraph{Main Idea.}
Each of the $\clip$ functions above are non-linear which results in 
biased updates.
To reduce this bias, we introduce~\uclip{}.
\uclip{} can be applied to any gradient based optimiser \opt{}
without modification.
\opt{} can be any operation, procedure or sub-routine that accepts
gradients as inputs.
Let $g_t$ be a (possibly stochastic) sub-gradient.
Instead of passing $g_t$ to \opt{}, \uclip{} feeds $u_t$ to \opt{} where
\begin{align}\nonumber
  u_t &= \clip(g_t + \Delta_t, \gamma_t) \\ 
  \Delta_{t+1} &= g_t + \Delta_t - u_t.\label{eq:deltarec}
\end{align}
Above we introduced the \emph{carry} $\Delta_t$, initialized $\Delta_1=0$,
which is a buffer of clipped values. 
We treat \opt{} as a black box and \uclip{} modifies \emph{only} the gradients given to \opt{},
not of its hyperparameters or any other aspect of its configuration.
Note that we clip before applying \opt{}, rather than after. 
This is a design choice motivated by consistency in any internal statistics of \opt{},
such as with Adam~\citep{kingma2014adam}.%
\footnote{Clipping before regulates the size of the inputs to
\opt{}. Clipping after \opt{} would allow spuriously large gradients to distort the 
internal statistics of, for instance, Adam.}
\uclip{} combined with SGD is shown in~\cref{alg:uclip-sgd}.

Via the carry we use all of the information in the gradients, which
reduces the cumulative bias relative to standard clipping.
However, just like standard clipping, we guard against overly large
gradients.
We will show that, under certain conditions, the above procedure has
$\E[\norm{\Delta_t}] = O(1)$. 
This result implies that the updates are on-average unbiased, in the following sense.
Note that $\Delta_{t} = \sum_{i=1}^{t-1}(g_i - u_i)$
by unrolling recursion~\eqref{eq:deltarec}, hence
\[
    \beta_t \coloneqq \frac1t\E[\Delta_t] = \frac1t\sum_{i=1}^{t-1} (\E[g_i] - \E[u_i])
\]
is the average bias. Then, even weaker than above, $\E[\norm{\Delta_t}] = o(t)$ gives
that \uclip{} is unbiased \emph{on average}
\[
    \lim_{t \to \infty} \norm{\beta_t } 
    = \lim_{t\to\infty}\frac1t \norm{\E[\Delta_t]}
  \le \lim_{t\to\infty} \frac1t\E[\norm{\Delta_t}] 
  = 0.
\]

\begin{algorithm}
   \caption{SGD with \uclip{}}\label{alg:uclip}
    \label{alg:uclip-sgd}
    \begin{algorithmic}
        \STATE {\bfseries Input:}
            $f(\theta, \cdot)$ (loss function), 
            $S$ (dataset),
            $\theta_1 \in \R^p$ (initial parameters),
            $\eta \in \R_{>0}$ (learning rate),
            $\gamma \in \R_{>0}$ (clipping region),
            $T \in \N$ (number of timesteps).
        \STATE {\bfseries Return:} $\theta_{T+1}$ (optimized parameters).
        
        \STATE $\Delta_1 \gets 0$
        \FOR{$t=1,\dots,T$}
            \STATE Sample $Z_t \sim \unif S$
            \STATE $g_t \gets \grad_\theta f(\theta_t, Z_t)$
            \STATE $u_t \gets \clip(g_t+ \Delta_t, \gamma)$ 
            \STATE $\Delta_{t+1} \gets g_t + \Delta_t - u_t$ 
            \STATE $\theta_{t+1} \gets \theta_t - \eta u_t$  
        \ENDFOR
    \end{algorithmic}
\end{algorithm}

\paragraph{Summary.}
The purpose of this paper is to present \uclip{} as a simple idea that, on average,
eliminates bias in stochastic gradient clipping.
In particular, \uclip{} introduces no additional hyperparameters over standard gradient clipping.
The simplicity of \uclip{} leads to clean theoretical analyses and convergence
bounds, giving a $O(T^{-1/2})$ convergence of the expected average regret
when combined with SGD on convex objectives.
This work is intended to introduce \uclip{} as a novel concept with guarantees, rather than
to obtain state of the art results.
Nevertheless, we validate our theoretical bounds empirically on toy problems and, in addition,
find that \uclip{} (combined with SGD, momentum~\citep{sutskever2013importance} or Adam)
is better than or competitive with standard methods on CIFAR10 and ImageNet.
We believe \uclip{} is a promising candidate for further exploration into reducing bias
in stochastic gradient clipping and give
suggestions for future work in~\cref{sec:future-work}.

\subsection{Motivation: Clipping and Harmful Bias.}
Gradient clipping is used widely but causes a bias in the updates.
Under certain assumptions on the objective and parameters 
(clipping regions and learning rate) convergence results can be
obtained~\citep{zhang2021understanding,mai2021stability,
gorbunov2020stochastic,zhang2020improved}, but in general this bias
is an impediment from a theoretical perspective.
In this section we give examples of how the bias can be harmful.

\begin{example}[{\citet[Example 2]{chen2020understanding}}]
  Let $f_1(x) = \frac12(x - 3)^2$ and
  $f_2(x) = \frac12 (x+3)^2$. Consider a stochastic optimisation problem
  in which we receive the gradient of each function with probability $\frac12$
  on each iteration. The expected objective is
  $f(x) = \frac12 f_1(x) + \frac12 f_2(x) = \frac14 (x-3)^2 +\frac14(x+3)^2$
  which is minimized at $x=0$. Consider clipped SGD 
  and clipping updates to the region $[-1, 1]$.
  The clipped updates are uniformly distributed on $\{-1, +1\}$ when
  $x_i \in [-2, 2]$, so all of these points are `stationary' in the eyes of the
  algorithm.
\end{example}

\begin{example}[Clipping and aliasing]
  Consider clipping in a stochastic optimization problem. The goal is to
  optimize $f(x) = \E[F(x)]$ where
  \[
    F(x) =
    \begin{cases}
      & \abs{4x - 1} \quad \text{with probability } \frac14\\
      & \abs{x + 1} \quad \text{with probability } \frac34.
    \end{cases}
  \]
  Hence, $f(x) = \frac14 \abs{4x  -1} + \frac34 \abs{x + 1}$ and
  $\argmin_{x \in \R}f(x) = \frac14$ with $f(1/4) = \frac{15}{16}$.
  The algorithm receives stochastic subgradients
  \[
    D(x) =
    \begin{cases}
      & 4\sign(4x - 1) \quad \text{with probability } \frac14\\
      & \sign(x + 1) \quad \text{with probability } \frac34.
    \end{cases}
  \]
  Suppose now that we clip gradients to magnitude $2$.
  This results in clipped gradients distributed as
  \[
    \tilde{D}(x) =
    \begin{cases}
      & 2\sign(4x - 1) \quad \text{with probability } \frac14\\
      & \sign(x + 1) \quad \text{with probability } \frac34
    \end{cases}
  \]
  which are indistinguishable from the stochastic subgradients of
  \[
    \tilde{F}(x) =
    \begin{cases}
      & \frac12 \abs{4x - 1} \quad \text{with probability } \frac14\\
      & \abs{x + 1} \quad \text{with probability } \frac34.
    \end{cases}
  \]
  Set $\tilde{f}(x) = \E[\tilde{F}(x)] = \frac18 \abs{4x - 1} + \frac34\abs{x+1}$.
  Any algorithm that takes subgradients
  and is guaranteed to converge on arbitrary stochastic convex optimization problems
  (e.g.~SGD with some learning rate) must converge to 
  $\argmin_{x\in \R} \tilde{f}(x)$ if it instead receives the clipped
  gradients,
  but $\argmin_{x\in\R}\tilde{f}(x) = -1$ and $f(-1) = \frac54 > \frac{15}{16}$.
  An empirical example of this aliasing phenomenon is given in~\cref{fig:clip-divergence}.
\end{example}

\begin{figure*}[tb]
    \centering
    \begin{subfigure}[t]{\twofigwidth}
        \centering
    \includegraphics[width=\textwidth]{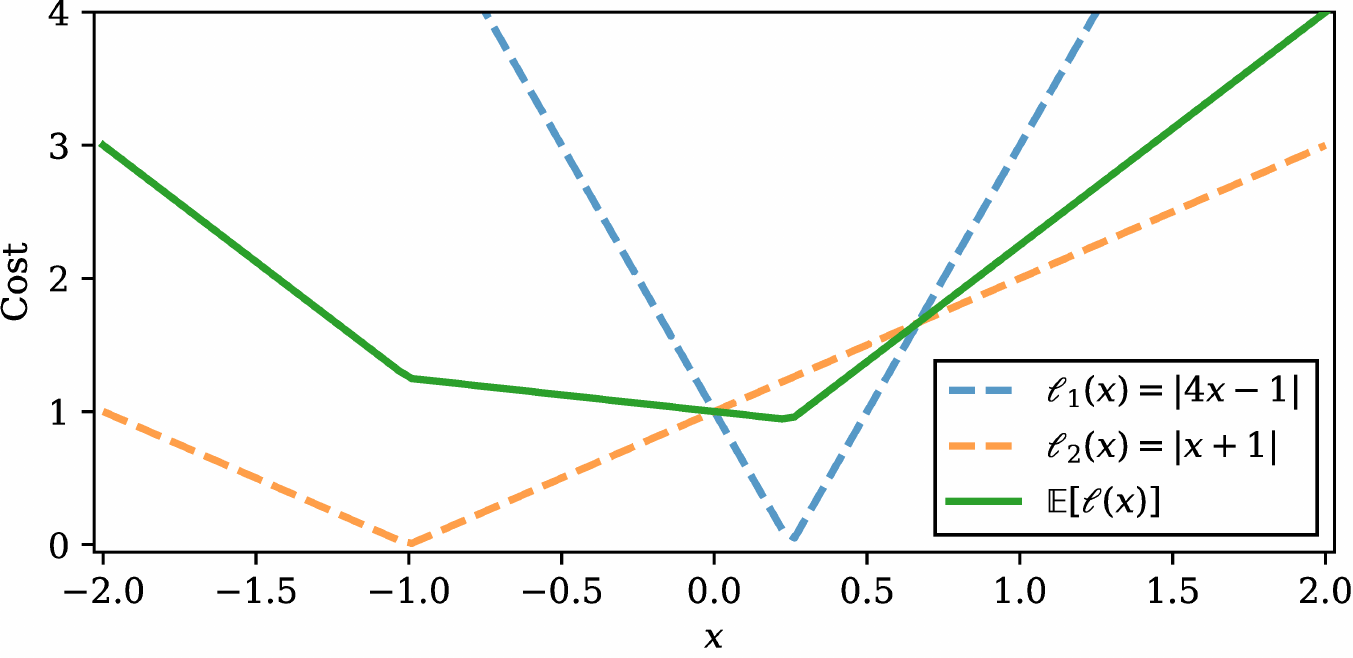}
    \end{subfigure}%
    ~ 
    \begin{subfigure}[t]{\twofigwidth}
        \centering
    \includegraphics[width=\textwidth]{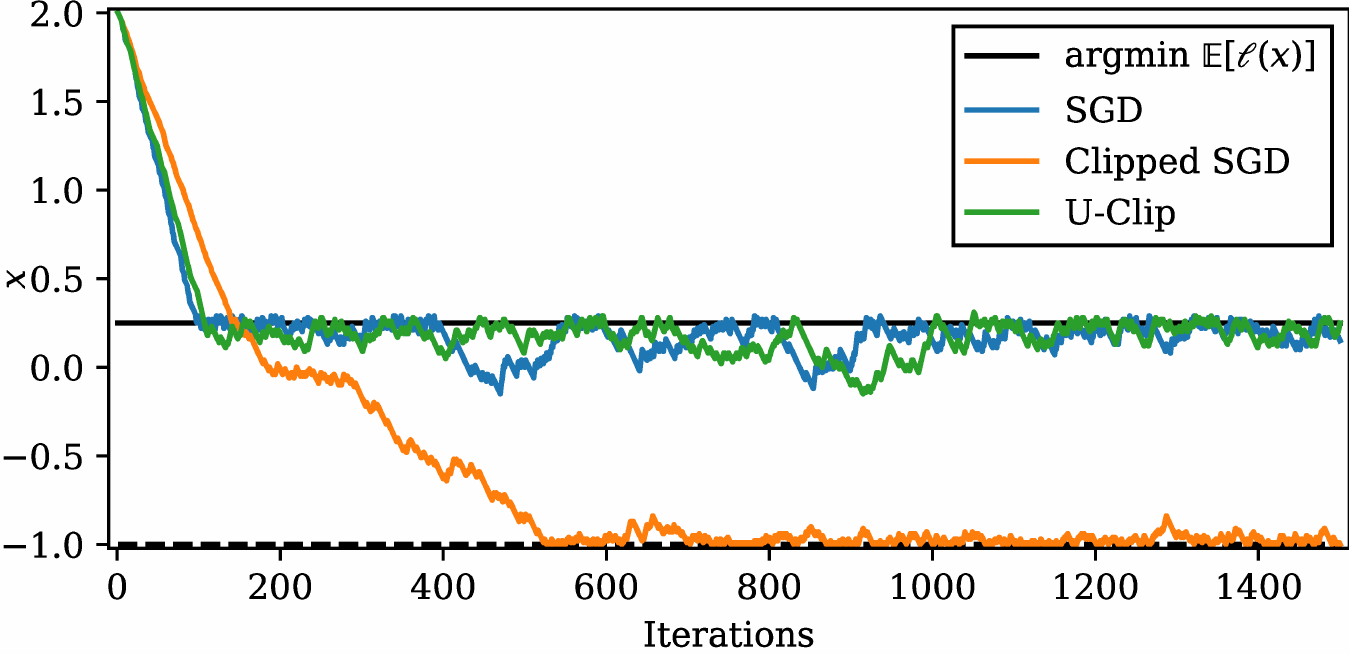}
    \end{subfigure}
    \caption{An example of clipped SGD non-convergence due to aliasing.
    Left: the objective, the algorithm receives a subgradient of $\ell_1$
    with probability $1/4$ and from $\ell_2$ with probability $3/4$.
    We write $\E[\ell(x)] \coloneqq \frac14 \ell_1(x) + \frac34 \ell_2(x)$ for the
    expected objective.
    Right: the optimization trajectories, each initialized at $x=2$ with learning
    rate $0.01$ and ran for 1500 iterations. Note that SGD and SGD with \uclip{} converge
    to the optimum, but clipped SGD arrives instead at $x=-1$ due to aliasing.
    The clip region is $\gamma = 2$ for clipped SGD and \uclip{}.}
    \label{fig:clip-divergence}
\end{figure*}

\subsection{Related Work}
Gradient clipping has had widespread applications for some time~\citep{pascanu2013difficulty},
is important in training large language models~\citep{rae2021scaling,chowdhery2022palm}
and plays a key role in differentially private optimisation~\citep{abadi2016deep}.
It is often motivated by control over the update size in the presence of large gradients.
Alternatives are normalized gradient descent~\citep{hazan2015beyond} (closely related to norm clipping)
and adaptive step size methods~\citep{duchi2011adaptive,kingma2014adam}.
Adaptive choice of the clip region has been studied for
norm clipping~\citep{andrew2021differentially} and coordinate-wise
clipping~\citep{pichapati2019adaclip}.
Recently, gradient clipping was a component in achieving strong generalisation
performance with full batch gradient descent~\citep{geiping2021stochastic}.

The convergence of gradient clipping in the deterministic setting with a convex 
and smooth objective is well known (e.g.~one can use~\citet[Lemma 3.4]{bubeck2015convex}).
Convergence under a relaxation of this smoothness condition is given for norm
clipping by~\citet{zhang2019gradient}, who also study the stochastic case, finding
\emph{faster} convergence for clipped SGD than vanilla SGD in their setting.
Their work is extended by~\citet{zhang2020improved}, taking momentum into account.
\citet{chen2020understanding} give a convergence proof for norm clipping in SGD
for convex smooth functions, quantify the bias due to
norm clipping and propose using perturbed gradients to mitigate it.
A high probability convergence bound for norm clipped SGD on convex smooth objectives 
with heavy-tailed gradient noise is given by~\citet{gorbunov2020stochastic}.
\citet{mai2021stability} prove convergence guarantees for SGD with norm clipping
on convex functions arbitrary growth in the subgradients
(provided the objective grows faster than a quadratic), but some of their results
require the clip region $\gamma$ to increase over time, potentially making the 
clipping redundant.

\section{Theory}\label{sec:theory}

\paragraph{General Assumptions and Notation.}
Our results share some basic assumptions and notation,
which we state once here to avoid repetition. 
We assume $\{f_t\}$ is a sequence of convex functions $\X \to \R$ where
$\X \subset \R^d$ is convex and
$g_t$ is a stochastic subgradient of $f_t$ and $x_t$ with
$\gbar_t \coloneqq \E[g_t\vert{} x_t] \in \partial f(x_t)$ and
$\xi_t \coloneqq g_t - \gbar_t$. This implies that $\E[\xi_t] = 0$.
We assume that each \emph{coordinate} of $g_t$ is bounded in magnitude
by $G$ and that the $\{\xi_i\}$ are conditionally independent for
$i\ne j$ given $x_i$ and $x_j$.
We assume that that $\xi_t \vert{} x_t$ is 
sub-Gaussian with variance proxy $\sigma^2_t \ge 0$ (of course,
sub-Gaussianity is provided by boundedness).
It is possible that $\sigma_t$ depends on $x_t$.
Sometimes the clip region $\gamma$ will be a scalar, in which case the same
clip region is used for each coordinate, other times it will be a vector 
specifying coordinate-wise clip regions.
We will often assume $\gamma_t \ge \abs{\gbar_t}$ so that we clip noise in the gradients
but not the signal, our theory is specifically aimed at this setting and we do not
study the regime in which the signal is clipped.
The bound on the gradients means we are free to apply the bound
$\gamma_t \le G$ for each coordinate, since $\gamma_t \ge G$
is a no-op.
We will frequently abuse notation by writing
$a > h(b)$ for $a, b \in \R^d$ and $h: \R \to \R$ which is to be interpreted
component-wise, i.e.~$a_j > h(b_j)$ for $j=1,\dots, d$, and $a > c$ where $c$
is a scalar which is to be interpreted component-wise.  We will study the
average regret, defined by 
$R_T = \frac1T\sum_{t=1}^T (f_t(x_t) - f_t(x_*))$
where $x_* = x_*(T) = \argmin_{x\in\X}\sum_{t=1}^T f_t(x)$.

\subsection{Convergence Results}\label{sec:convergence}
In this section we give an overview of our main results,
which are regret bounds for~\uclip{} with various clipping
schedules $\{\gamma_t\}$. 

\Cref{thm:uclip-convergence-simple} is the special case of~\cref{thm:uclip-convergence}
with the clip region $\gamma_t = \gamma$ an (absolute) constant.
In these results $\gamma_t$ is a scalar representing the same clip region for each coordinate.
Per-coordinate clip regions are possible by applying~\cref{prop:carry-bounded}
element-wise in the proof of~\cref{thm:uclip-convergence}.
\begin{theorem}\label{thm:uclip-convergence-simple}
  Consider \uclip{} with learning rate $\eta = \frac{1}{\sqrt{T}}$
  and clipping regions $G \ge \gamma > \abs{\gbar_t}$ component-wise.
  The expected average regret is bounded by
  \iftwocolumn
  \begin{align*}
      \E[R_T] &\le \frac{8 G^4 d^{3/2}\gamma}{\alpha^3\sqrt{T}}  \\
      &\phantom{\le}+ \frac{4G\sqrt{d}\gamma + (4\sqrt{d} + d)\gamma^2 + \norm{x_1 - x_*}^2 }{2\sqrt{T}}\\
      &\phantom{\le}+  \left(\frac{4dG^4}{\alpha^3} + G + \gamma\right)\frac{\norm{x_1 - x_*}}{T}\\
     &= O \left(\frac{G^4d^{3/2}}{\alpha^3 \sqrt{T}}\right)
  \end{align*}
  \else
    \begin{align*}
      \E[R_T] &\le \frac{8 G^4 d^{3/2}\gamma}{\alpha^3\sqrt{T}}  
      + \frac{4G\sqrt{d}\gamma + (4\sqrt{d} + d)\gamma^2 + \norm{x_1 - x_*}^2 }{2\sqrt{T}}
      +  \left(\frac{4dG^4}{\alpha^3} + G + \gamma\right)\frac{\norm{x_1 - x_*}}{T}\\
     &= O \left(\frac{G^4d^{3/2}}{\alpha^3 \sqrt{T}}\right)
    \end{align*}
    \fi
  where $x_* = x_*(T) = \argmin_{x\in\X}\sum_{t=1}^T f_t(x)$ and
  $\alpha = \inf(\gamma_t - \abs{\gbar_t})$.
\end{theorem}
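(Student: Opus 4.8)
The plan is to run the standard online-gradient-descent regret analysis against the \emph{actual} updates $u_t$ (since $x_{t+1} = x_t - \eta u_t$), and then to account for the gap between $u_t$ and the true subgradient $\gbar_t$ using the carry recursion. First I would use convexity to write $f_t(x_t) - f_t(x_*) \le \langle \gbar_t, x_t - x_*\rangle$, and then decompose $\gbar_t = u_t - \xi_t - (\Delta_t - \Delta_{t+1})$, where $\xi_t = g_t - \gbar_t$ and $u_t - g_t = \Delta_t - \Delta_{t+1}$ by unrolling~\eqref{eq:deltarec}. Summing over $t$ splits the regret into a \emph{descent} term in $u_t$, a \emph{noise} term in $\xi_t$, and a \emph{telescoping} term in the carry $\Delta_t$.

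For the descent term, expanding $\norm{x_{t+1} - x_*}^2 = \norm{x_t - x_*}^2 - 2\eta\langle u_t, x_t - x_*\rangle + \eta^2\norm{u_t}^2$ and telescoping yields $\sum_t \langle u_t, x_t - x_*\rangle \le \frac{1}{2\eta}\norm{x_1 - x_*}^2 + \frac{\eta}{2}\sum_t \norm{u_t}^2$, and I would bound $\norm{u_t}^2 \le d\gamma^2$ since each coordinate of $u_t$ is clipped into $[-\gamma,\gamma]$. With $\eta = 1/\sqrt{T}$ and division by $T$ this contributes the $\norm{x_1-x_*}^2/(2\sqrt{T})$ and $d\gamma^2/(2\sqrt{T})$ pieces. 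The noise term vanishes in expectation: because $x_*$ is a fixed comparator and $x_t$ depends only on $\xi_1,\dots,\xi_{t-1}$, the assumption $\E[\xi_t \vert x_t] = 0$ gives $\E[\langle \xi_t, x_t - x_*\rangle] = 0$ by the tower property.

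The carry term is the crux. I would treat $\sum_{t}\langle \Delta_t - \Delta_{t+1}, x_t - x_*\rangle$ by summation by parts: since $\Delta_1 = 0$ and $x_t - x_{t-1} = -\eta u_{t-1}$, it equals $-\eta\sum_{t=2}^T \langle \Delta_t, u_{t-1}\rangle - \langle \Delta_{T+1}, x_T - x_*\rangle$. Each summand is controlled by $\norm{\Delta_t}\norm{u_{t-1}} \le \sqrt{d}\,\gamma\,\norm{\Delta_t}$, and the boundary term I would split via $x_T - x_* = (x_1 - x_*) - \eta\sum_{i<T} u_i$ into a piece $\le \norm{\Delta_{T+1}}\norm{x_1 - x_*}$ and a piece $\le \eta(T-1)\sqrt{d}\,\gamma\,\norm{\Delta_{T+1}}$. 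The essential input is then the uniform carry bound $\E[\norm{\Delta_t}] = O(1)$, obtained by applying~\cref{prop:carry-bounded} coordinate-wise; this is where the $G^4 d/\alpha^3$ factor enters, with $\alpha = \inf(\gamma_t - \abs{\gbar_t})$ the margin by which the signal escapes clipping. Taking expectations and collecting the $\eta$-weighted sums gives $O(\sqrt{T})$ before averaging, while the $\norm{x_1-x_*}\,\E[\norm{\Delta_{T+1}}]/T$ piece produces the $\norm{x_1-x_*}/T$ term.

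The main obstacle is entirely contained in establishing the uniform $O(1)$ control of $\E[\norm{\Delta_t}]$, i.e.~\cref{prop:carry-bounded}; everything else is routine OCO bookkeeping, the only non-standard manoeuvre being the summation-by-parts that recasts the carry increments as an $\eta$-weighted, hence $O(1/\sqrt{T})$-after-averaging, contribution. Since the statement is merely the constant-$\gamma$ specialisation of~\cref{thm:uclip-convergence}, I would finish by setting $\gamma_t \equiv \gamma$ in the general bound and checking that the stated closed form, together with its $O(G^4 d^{3/2}/(\alpha^3\sqrt{T}))$ asymptotics, falls out after substituting $\eta = 1/\sqrt{T}$ and simplifying the constants.
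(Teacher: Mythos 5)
Your proposal is correct and follows essentially the same route as the paper: the identical decomposition of $\gbar_t$ into a descent term in $u_t$, a mean-zero noise term in $\xi_t$, and a carry-increment term handled by Abel summation against the $\eta$-weighted updates (this is exactly the content of \cref{lemma:gradient-transformation}), combined with the coordinate-wise application of \cref{prop:carry-bounded} to get $\E[\norm{\Delta_t}] \le 4dG^4\alpha^{-3} + G + \gamma$ and then specializing \cref{thm:uclip-convergence} to constant $\gamma$. The only difference is presentational — you inline the online-convex-optimization bookkeeping rather than isolating it as a standalone lemma — and your constant accounting reproduces the stated bound.
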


\Cref{thm:uclip-convergence-simple} stems from~\cref{lemma:gradient-transformation}
which requires a bound on the carry.
If $\alpha$ becomes 0, then the carry is a random walk $\sum_t \xi_t$ which may diverge in expectation.
A factor of $d$ appears because the clipping is element-wise, but note that the 
bound on the \emph{norm} of the gradients is $\norm{g_t} \le G\sqrt{d}$.
It's possible that this factor would not appear in the corresponding result
for norm clipping.

In respect of the more general~\cref{thm:uclip-convergence} with time-varying $\gamma$,
it may be possible to craft the sequence $\{\gamma_t\}$ such that 
$\gamma_t \to 0$ as the gradients $g_t \to 0$. In which case \uclip{} would be
\emph{asymptotically unbiased}.\footnote{This is not possible in all circumstances, 
e.g.~$f(x) = \abs{x}$.}

We loosely define \emph{adaptive clipping} as any clip region schedule
that adapts to the noise in the gradients.
We now give a convergence result for an instance of adaptive clipping that allows us to improve the 
dependence on $G$, in particular by taking $\gamma_t = \abs{\gbar_t} + \beta \sigma_t^2$
for $\beta > 0$.%
\footnote{Unfortunately, this clip region is not scale equivariant, so for our 
practical experiments we prefer regions of the form 
$\gamma_t = \abs{\gbar_t} + \beta \sigma_t$
(covered by~\cref{thm:uclip-convergence}, if one has a constant lower bound on
$\sigma_t$).
A direct argument with this (scale equivariant) clip region
is likely possibly, albeit by different techniques.
}

\begin{theorem}[Adaptive \uclip{}]\label{thm:adaptive-clipping}
  Consider \uclip{} with clip regions 
  $\gamma_t = \abs{\gbar_t} + \beta \sigma_t^2$ 
  (component-wise) for $\beta > 0$ and learning rate $\eta = \frac{1}{\sqrt{T}}$.
  Suppose there is a constant $\sigmamin \le \sigma_t$ $\forall t$.
  Then the expected regret is bounded by
  \iftwocolumn
  \begin{align*}
      \E[R_T] &\le
      \frac{16d^{3/2} G}{\beta^3 \sigmamin^2 \sqrt{T}} \\
      &\phantom{\le}+ \frac{4(2+ \beta)\sqrt{d}G^2 + dG^2 +\norm{x_1 - x_*}^2 }{2\sqrt{T}}\\
     &\phantom{\le}+ \left( 8 d\beta^{-3} \sigmamin^{-2} + (2 + \beta)G\right)
    \frac{\norm{x_1 - x_*}}{T}\\
      &=O \left( \frac{d^{3/2} G}{\beta^3 \sigmamin^2 \sqrt{T}} + \frac{dG^2}{\sqrt{T}}\right)
  \end{align*}
  \else
  \begin{align*}
      \E[R_T] &\le
      \frac{16d^{3/2} G}{\beta^3 \sigmamin^2 \sqrt{T}} 
      + \frac{4(2+ \beta)\sqrt{d}G^2 + dG^2 +\norm{x_1 - x_*}^2 }{2\sqrt{T}}
     + \left( 8 d\beta^{-3} \sigmamin^{-2} + (2 + \beta)G\right) \frac{\norm{x_1 - x_*}}{T}\\
      &=O \left( \frac{d^{3/2} G}{\beta^3 \sigmamin^2 \sqrt{T}} + \frac{dG^2}{\sqrt{T}}\right)
  \end{align*}
    \fi
\end{theorem}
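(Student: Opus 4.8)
The plan is to mirror the proof of~\cref{thm:uclip-convergence-simple}: first use~\cref{lemma:gradient-transformation} to reduce the expected regret to a standard online gradient descent (OGD) bound plus a correction driven entirely by the carry, and then control that correction with~\cref{prop:carry-bounded}, specialized to the adaptive schedule. Since the optimiser here is plain SGD with $\theta_{t+1} = \theta_t - \eta u_t$, convexity gives $f_t(x_t) - f_t(x_*) \le \langle \gbar_t, x_t - x_*\rangle$, and the decomposition $\gbar_t - u_t = -\xi_t + (\Delta_{t+1} - \Delta_t)$ splits the cumulative regret into (i) the genuine OGD term $\langle u_t, x_t - x_*\rangle$, (ii) a noise term $\langle -\xi_t, x_t - x_*\rangle$ that is mean-zero because $\E[\xi_t \vert x_t] = 0$, and (iii) a carry term $\langle \Delta_{t+1} - \Delta_t, x_t - x_*\rangle$.

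First I would bound (i) by the textbook telescoping argument, using $\norm{u_t} \le \sqrt{d}\,\gamma_t$ with $\gamma_t \le G$ component-wise (the no-op bound from the general assumptions); with $\eta = 1/\sqrt{T}$ this produces the $\norm{x_1 - x_*}^2/(2\sqrt{T})$ and $dG^2/(2\sqrt{T})$ pieces, while the cross-terms coupling $\xi_t$ and the clip scale $\gamma_t$ account for the $\sqrt{d}\,G^2$ contributions. For (iii) I would apply Abel summation: because $\Delta_1 = 0$, the sum telescopes to a boundary term $\langle \Delta_{T+1}, x_T - x_*\rangle$ together with the running sum $\eta\sum_t \langle \Delta_{t+1}, u_t\rangle$. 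Both are controlled once $\E\norm{\Delta_t}$ is bounded, the boundary term (after bounding the iterate--optimum distance) feeding the $\norm{x_1 - x_*}/T$ piece and the running sum, via $\norm{u_t} \le \sqrt{d}\,G$, feeding the leading $d^{3/2}G/(\beta^3 \sigmamin^2 \sqrt{T})$ piece.

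The heart of the argument, and where the improved dependence on $G$ is won, is the carry bound for the adaptive margin. Applying~\cref{prop:carry-bounded} coordinate-wise bounds $\E[\abs{\Delta_t^{(j)}}]$ in terms of the margin $\alpha_t = \gamma_t - \abs{\gbar_t}$ and the noise scale $\sigma_t$, and I expect this bound to scale like $\sigma_t^4/\alpha_t^3$ rather than being controlled through $\alpha_t$ alone. Substituting the adaptive choice $\alpha_t = \beta\sigma_t^2$ makes the $\sigma_t^4$ in the numerator cancel against $\alpha_t^3 = \beta^3\sigma_t^6$, leaving $\E[\abs{\Delta_t^{(j)}}] = O\!\left(1/(\beta^3 \sigma_t^2)\right) \le O\!\left(1/(\beta^3 \sigmamin^2)\right)$, where the final inequality is exactly where the lower bound $\sigmamin \le \sigma_t$ is needed. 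This replaces the $G^4/\alpha^3$ factor of~\cref{thm:uclip-convergence-simple} (obtained there by crudely bounding $\sigma_t \le G$) with the far smaller $1/(\beta^3\sigmamin^2)$. I expect this to be the main obstacle: the cancellation only occurs because the clip margin is tied to the same-timestep noise variance, so the general carry estimate must be stated finely enough to separate $\sigma_t$ from $\alpha_t$, and one must verify that the self-normalizing choice $\alpha_t \propto \sigma_t^2$ keeps the sub-Gaussian overflow probability $\exp(-\Theta(\beta^2\sigma_t^2))$ uniformly small under $\sigma_t \ge \sigmamin$.

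Finally I would assemble the terms: summing over the $d$ coordinates via $\norm{\Delta_t} \le \norm{\Delta_t}_1$ turns the per-coordinate carry estimate into the factors of $d$ and $d^{3/2}$ in the statement, and combining these with the OGD contributions, dividing by $T$, and setting $\eta = 1/\sqrt{T}$ yields the claimed bound with dominant term $O\!\left(d^{3/2}G/(\beta^3\sigmamin^2\sqrt{T}) + dG^2/\sqrt{T}\right)$.
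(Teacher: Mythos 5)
Your proposal is correct and follows essentially the same route as the paper: \cref{lemma:gradient-transformation} reduces everything to a bound on $\E[\norm{\Delta_t}]$, and the refined carry estimate you anticipate --- redoing the Hoeffding step with the per-step variance proxy $\sigma_t^2$ in place of $G^2$ so that the choice $\alpha_t = \beta\sigma_t^2$ produces the cancellation down to $O(\beta^{-3}\sigmamin^{-2})$ --- is exactly \cref{prop:adaptive-clipping}, which the paper substitutes for \cref{prop:carry-bounded} to get $\Delta_+ = 8d\beta^{-3}\sigmamin^{-2} + (2+\beta)G$. The only cosmetic difference is that you aggregate coordinates via $\norm{\Delta_t} \le \norm{\Delta_t}_1$ in expectation while the paper uses a union bound over coordinates on the tail; both yield the same factor of $d$ and hence the $d^{3/2}$ in the leading term.
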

\begin{proof}
  Identical to the proof of~\cref{thm:uclip-convergence} but using 
  $\gamma_t \le G$ WLOG and instead 
  applying~\cref{prop:adaptive-clipping} to get
  $\Delta_+ = 8 d\beta^{-3} \sigmamin^{-2} + (2 + \beta)G$.
\end{proof}

\subsection{Gradient Transformations}
At the base of our analysis is the following lemma, which may
be of independent interest. It says that in the stochastic-online setting
with an additive update rule, if the cumulative difference between the
updates and the subgradients remains bounded, then the regret of the update rule
is also bounded. Intuitively, if an update rule is globally sufficiently 
similar to gradient descent, then it enjoys the same convergence rate.
Note that the gradient descent rate can be achieved with $\eta = O(1/\sqrt{T})$
and $D_t = D$ (defined in the lemma) constant.

\newcommand{\gtlemma}{%
  Let $\{f_t\}$ be a sequence of convex functions $\X \to \R$
  where $ \X \subset \R^d $.
  Consider the update equations 
  \[
    x_{t+1} = x_t - \eta u_t 
    \quad \text{ and }  \quad
    \Delta_{t+1} = \Delta_t + g_t - u_t
  \]
  where $g_t$ is a stochastic subgradient of $f_t$ at $x_t$ satisfying
  $\E[g_t \vert{} x_t] \in \partial f_t(x_t)$, $\Delta_1=0$ and
  $\eta$ is the learning rate.
  We call the first sequence the iterates and the second the carry.
  Set $x_* = x_*(T) = \argmin_{x}\frac1T \sum_{t=1}^Tf_t(x)$.
  Suppose that the updates are bounded
  almost surely by $\norm{u_t} \le \Gamma_t$, and let $D_t$ be a non-decreasing
  sequence.
  \begin{enumerate}[1)]
    \item If the carry is bounded $\E[\norm{\Delta_t}] \le D_t$ in expectation
      for $t = 1, \dots, T+1$ then the expected (average) regret 
      is bounded by 
      \ifmainpaper
      \label{point:gtlemma-1}
      \fi

      \iftwocolumn 
      \begin{align*}
          \E[R_T] \le 
          &\phantom{\le}\frac{2\eta D_{T+1}}{T} \sum_{t=1}^T \Gamma_t
          + \frac{\eta}{2T}\sum_{t=1}^T \Gamma_t^2\\
          &\phantom{\le}+ \frac{1}{2 \eta T} \norm{x_1 - x_*}^2
          + \frac{D_{T+1}}{T} \norm{x_1 - x_*}.
      \end{align*}
      \else 
          \[
          \E[R_T] \le 
          \frac{2\eta D_{T+1}}{T} \sum_{t=1}^T \Gamma_t
          + \frac{\eta}{2T}\sum_{t=1}^T \Gamma_t^2
          + \frac{1}{2 \eta T} \norm{x_1 - x_*}^2
          + \frac{D_{T+1}}{T} \norm{x_1 - x_*}.
          \]
      \fi
    \item Recall the definition $\xi_t = g_t - \E[g_t\vert{} x_t]$.
      Suppose that the variables $Z_t \coloneqq \xi_t \vert{} x_t$
      (independent by assumption) are
      sub-Gaussian with variance proxy
      $\sigma^2$. Assume also that the iterates are bounded
      $\sup_t \norm{x_t - x_*} \le R$.
     Then if $\norm{\Delta_t} \le D_t$ with probability at least $1-\delta$, 
     then with probability at least $1-2\delta$ we have 
        \ifmainpaper
        \label{point:gtlemma-2}
        \fi
     \iftwocolumn 
     \begin{align*}
         R_T &\le \frac{2\eta D_{T+1}}{T} \sum_{t=1}^T \Gamma_t
        + \frac{\eta}{2T}\sum_{t=1}^T \Gamma_t^2 \\
         &\phantom{\le}+ \frac{1}{2 \eta T} \norm{x_1 - x_*}^2 + \frac{D_{T+1}}{T} \norm{x_1 - x_*}\\
         &\phantom{\le}+ \frac{\sqrt{2}R\sigma}{\sqrt{T}}\log(2/\delta).
     \end{align*}
     \else 
        \[
         R_T \le \frac{2\eta D_{T+1}}{T} \sum_{t=1}^T \Gamma_t
        + \frac{\eta}{2T}\sum_{t=1}^T \Gamma_t^2 
        + \frac{1}{2 \eta T} \norm{x_1 - x_*}^2 %
        + \frac{D_{T+1}}{T} \norm{x_1 - x_*}\frac{\sqrt{2}R\sigma}{\sqrt{T}}\log(2/\delta).
        \]
     \fi
  \end{enumerate}
}
\begin{lemma}[Gradient Transformation]\label{lemma:gradient-transformation}
    \gtlemma{}
\end{lemma}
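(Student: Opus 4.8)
The plan is to run the classical potential-function (distance-to-optimum) regret argument, but using the update $u_t$ in place of a subgradient and correcting for the discrepancy through the carry. First I would expand the squared distance using $x_{t+1} = x_t - \eta u_t$,
\[
  \norm{x_{t+1} - x_*}^2 = \norm{x_t - x_*}^2 - 2\eta\langle u_t, x_t - x_*\rangle + \eta^2\norm{u_t}^2,
\]
solve for $\langle u_t, x_t - x_*\rangle$, and use the carry recursion in the form $g_t - u_t = \Delta_{t+1} - \Delta_t$ to write $\langle g_t, x_t - x_*\rangle = \langle u_t, x_t - x_*\rangle + \langle \Delta_{t+1} - \Delta_t, x_t - x_*\rangle$. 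Summing over $t = 1, \dots, T$, the distance terms telescope to $\frac{1}{2\eta}\norm{x_1 - x_*}^2$ (after discarding the nonpositive $-\norm{x_{T+1} - x_*}^2$), leaving $\frac{\eta}{2}\sum_t\norm{u_t}^2$ and the carry cross-sum $\sum_t\langle\Delta_{t+1} - \Delta_t, x_t - x_*\rangle$.

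The crux is that cross-sum, which I would resolve by summation by parts. Using $\Delta_1 = 0$ and $x_{t-1} - x_t = \eta u_{t-1}$ it collapses to
\[
  \sum_{t=1}^T\langle\Delta_{t+1} - \Delta_t, x_t - x_*\rangle = \langle\Delta_{T+1}, x_T - x_*\rangle + \eta\sum_{t=2}^T\langle\Delta_t, u_{t-1}\rangle.
\]
To recover the advertised dependence on $\norm{x_1 - x_*}$ rather than $\norm{x_T - x_*}$, I would then substitute $x_T - x_* = (x_1 - x_*) - \eta\sum_{s=1}^{T-1}u_s$ into the boundary term.

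At this point I would take expectations and apply convexity through $\E[\langle g_t, x_t - x_*\rangle \mid x_t] = \langle\gbar_t, x_t - x_*\rangle \ge f_t(x_t) - f_t(x_*)$, so the left-hand side lower-bounds $T\,\E[R_T]$. Every surviving cross term is then controlled by Cauchy--Schwarz together with the almost-sure bound $\norm{u_t} \le \Gamma_t$ and the carry bound $\E\norm{\Delta_t} \le D_t \le D_{T+1}$ (monotonicity of $D$): the term $\E\langle\Delta_{T+1}, x_1 - x_*\rangle$ gives $D_{T+1}\norm{x_1 - x_*}$, the two $u$-cross-sums each contribute at most $\eta D_{T+1}\sum_t\Gamma_t$ and combine into the $2\eta D_{T+1}\sum_t\Gamma_t$ term, and $\E\norm{u_t}^2 \le \Gamma_t^2$ gives the $\frac{\eta}{2}\sum_t\Gamma_t^2$ term. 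Dividing by $T$ reproduces Part 1 exactly.

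For Part 2 I would run the identical algebra pathwise on the event $\{\norm{\Delta_t} \le D_t\ \forall t\}$, which has probability at least $1 - \delta$, but replace the expectation-based convexity step by $f_t(x_t) - f_t(x_*) \le \langle\gbar_t, x_t - x_*\rangle = \langle g_t, x_t - x_*\rangle - \langle\xi_t, x_t - x_*\rangle$, so that the stochastic error $\sum_t\langle\xi_t, x_t - x_*\rangle$ survives. Since $x_t$ is measurable with respect to the past and $\E[\xi_t \mid x_t] = 0$, this is a martingale whose increments are sub-Gaussian with variance proxy $\sigma^2\norm{x_t - x_*}^2 \le \sigma^2 R^2$; an Azuma--Hoeffding bound controls it with probability at least $1 - \delta$, and a union bound over the two events yields the stated probability $1 - 2\delta$. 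I expect the main obstacle to be the bookkeeping in the summation-by-parts step --- expressing the boundary term through $x_1 - x_*$ and verifying that the two distinct $u$-cross-sums (one produced by the Abel summation, one by expanding $x_T$) add up to precisely $2\eta D_{T+1}\sum_t\Gamma_t$ rather than a looser constant.
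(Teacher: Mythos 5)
Your proposal is correct and follows essentially the same route as the paper's proof: the standard distance-to-optimum expansion, rewriting $g_t - u_t = \Delta_{t+1} - \Delta_t$ and resolving the cross-sum by Abel summation with the boundary term re-expressed through $x_1 - x_*$, then Cauchy--Schwarz with $\norm{u_t}\le\Gamma_t$ and the carry bound, and for Part~2 a sub-Gaussian martingale (Azuma--Hoeffding) bound on $\sum_t \xi_t^\top(x_t-x_*)$ plus a union bound. The only differences are cosmetic indexing choices in the summation by parts.
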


The proof is in~\cref{sec:proof-gradient-transformation}.
Some remarks are in order.
First, although sufficient, the condition $\norm{\Delta_t}$ bounded (either in expectation
or in high probability) is not necessary for convergence.
For instance there is the trivial situation
$g_t = 0$ and $u_t = -1$ $\forall t$ with zero regret but
$\Delta_t \sim t$. Second, in the case of \uclip{} we assume bounded gradients,
so we may always take the clipping region $\gamma_t \le \Gamma_t \le G$,
hence the $\Gamma_t$ terms in~\cref{lemma:gradient-transformation}
can be treated, albeit crudely, by $\frac1T \sum_{t=1}^T \Gamma_t \le \sqrt{d}G$ and
$\frac1T \sum_{t=1}^T \Gamma_t^2 \le dG^2$.
Third, from~\cref{lemma:gradient-transformation},
the condition $\E[\norm{\Delta_t}] = o(t)$ is sufficient for convergence.
Note that this is the same condition required for \uclip{} to be unbiased 
on average (c.f.~\cref{sec:introduction}).
Finally, an alternative version of~\cref{lemma:gradient-transformation} with
a $p$-norm bound on the carry $\norm{\Delta_t}_p \le D_t$
is easily available by using H\"{o}lder's inequality
rather than Cauchy-Schwarz towards the end of the proof.

\subsection{Carry Bounds}
In this section we give bounds on the carry for component-wise clipping
for clip regions satisfying $\gamma_t > \abs{\E[g_t\vert{} x_t]}$
element-wise.
In particular, the $\gamma_t$ may be chosen as a function of past
gradients as long as the sequence is bounded 
and the above assumption is satisfied.
We leave a carry bound for norm clipping to future work.
Using~\cref{lemma:gradient-transformation} we can translate bounds on
the carry of update rules into convergence results. 
For instance, if $\eta = 1/\sqrt{T}$ and 
$\E[\norm{\Delta_t}] = O(1)$ then the expected regret is
$O(1/\sqrt{T})$.
Note that when doing this, we \emph{do not} need to apply a union
bound over $t$.
We have previously assumed that the functions
$\{f_t\}$ are convex. By inspecting the proofs,
the reader will find that this assumption is not
necessary for the carry bounds in this section.

The bounds are given for scalar variables, but the results
can be applied to control $\norm{\Delta_t}$ either by a union bound over
the co-ordinates (not time) or
by application of the expectation results in~\cref{prop:carry-bounded,prop:adaptive-clipping}.
Doing this introduces an additional factor of the dimension $d$.
Results for component specific clip regions $\gamma \in \R^d$ can be obtained this way.
The idea of the proofs in this section is that it is  sufficient to consider the 
behavior of $\Delta_t$ when it is large and not worry about fine-grained
behavior around 0. The proof of~\cref{prop:carry-bounded} is in~\cref{sec:proof-carry-bounded}.
A proof sketch is given below. Notice that when $\alpha =0$ the carry becomes a random walk,
consistent with the $\sqrt{t}$ divergence in the bound.

\newcommand{\propcarrybounded}{%
  In this result, all variables are real scalars.
  Consider the following update rules with $\Delta_1 = 0$
  \[
    u_t = \clip(g_t + \Delta_t, \gamma_t)
    \quad \text{ and } \quad
    \Delta_{t+1} = g_t +\Delta_t - u_t 
  \]
  where $\clip(x, \gamma) = \min\{\max\{g_t + \Delta_t, -\gamma\}, \gamma\}$
  projects $x$ onto the interval $[-\gamma, \gamma]$.
  We assume that the sequence $\{\gamma_t\}$ 
  satisfies $\gamma_+ = \sup_t \gamma_t < \infty$
  and $\gamma_t > \abs{\gbar_t}$.
  Let $\alpha \ge 0$ be a real constant such that
  $\alpha \le \inf_{t} (\gamma_t - \abs{\gbar_t})$.
  Then for any $\epsilon > 0$
  \[
    \P(\abs{\Delta_t} \ge \epsilon + G + \gamma_+) 
    \le 
    2c_\alpha
    \exp\left(-\frac{\epsilon^2 + 2\alpha\epsilon t}{2G^2t}\right)
  \]
  where 
  $c_\alpha = 
    \left(\exp\left(\frac{\alpha^2}{2G^2}\right) -1\right)^{-1}  \le 2G^2 \alpha^{-2}$
  and consequently
  \[
    \E[\abs{\Delta_t}] \le \frac{2c_\alpha G^2}{\alpha} + G + \gamma_+.
  \]
  }
\begin{proposition}[Carry bound: component-wise clipping]\label{prop:carry-bounded}
    \propcarrybounded{}
\end{proposition}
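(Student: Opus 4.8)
The plan is to exploit two structural features of the recursion $\Delta_{t+1} = g_t + \Delta_t - u_t$. First, whenever the clip is inactive ($\abs{g_t + \Delta_t} \le \gamma_t$) the carry resets exactly to zero, and whenever it is active the overflow inherits the sign of $g_t + \Delta_t$; in particular, if $\Delta_s > G + \gamma_+$ then $g_s + \Delta_s \ge \Delta_s - G > \gamma_+ \ge \gamma_s$, so the clip must be active on the positive side and $\Delta_{s+1} = \Delta_s + g_s - \gamma_s$. Second, on this positive branch the increment is $g_s - \gamma_s = \xi_s - (\gamma_s - \gbar_s) \le \xi_s - \alpha$, i.e.\ above the threshold the carry is a sum of the sub-Gaussian noises $\xi_s$ with a deterministic drift of at least $\alpha$ towards the origin. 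By the symmetry between the two signs it suffices to bound $\P(\Delta_t \ge \epsilon + G + \gamma_+)$ and multiply by two.

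To make the sum explicit I would condition on $\tau$, the last index before $t$ at which $\Delta_\tau \le G + \gamma_+$ (this exists since $\Delta_1 = 0$). By maximality every $\Delta_s$ with $\tau < s \le t$ exceeds $G + \gamma_+$, so by the sign-locking observation the clip is positive at each step $s = \tau, \dots, t-1$, and telescoping gives $\Delta_t = \Delta_\tau + \sum_{s=\tau}^{t-1}(g_s - \gamma_s)$. Bounding $\Delta_\tau \le G + \gamma_+$ and writing $g_s - \gamma_s = \xi_s - (\gamma_s - \gbar_s)$, the event $\Delta_t \ge \epsilon + G + \gamma_+$ forces $\sum_{s=\tau}^{t-1}\xi_s \ge \epsilon + \alpha k$ with $k = t - \tau \in \{1, \dots, t-1\}$.

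Then I would sum over the possible window lengths. The events $\{\tau = t-k\}$ are disjoint, so $\P(\Delta_t \ge \epsilon + G + \gamma_+) \le \sum_{k=1}^{t-1}\P(\sum_{s=t-k}^{t-1}\xi_s \ge \epsilon + \alpha k)$, where the inner window is now deterministic. Each $\xi_s \vert x_s$ is sub-Gaussian with variance proxy at most $G^2$ (by boundedness and Hoeffding's lemma), and conditional independence lets the moment generating function of the $k$-term sum factorise, so the sub-Gaussian Chernoff bound gives $\exp(-(\epsilon + \alpha k)^2/(2kG^2))$. Expanding the square as $\epsilon^2/(2kG^2) + \alpha\epsilon/G^2 + \alpha^2 k/(2G^2)$ and using $k \le t$ to replace $1/k$ by $1/t$ in the first term isolates the target exponent $\exp(-(\epsilon^2 + 2\alpha\epsilon t)/(2G^2 t))$ times the geometric factor $\exp(-\alpha^2 k/(2G^2))$; summing the geometric series over $k \ge 1$ produces exactly $c_\alpha$, and $e^x - 1 \ge x$ gives $c_\alpha \le 2G^2\alpha^{-2}$. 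For the expectation I would integrate the tail, bounding $\P \le 1$ on $[0, G+\gamma_+]$ and, for $u = \epsilon + G + \gamma_+ > G + \gamma_+$, dropping the $\epsilon^2$ term to leave $\int_0^\infty 2c_\alpha e^{-\alpha\epsilon/G^2}\dd\epsilon = 2c_\alpha G^2/\alpha$.

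The main obstacle is the structural sign-locking argument: it is what converts the carry — whose sign can in principle oscillate near the origin — into a genuine one-directional walk with drift over the window since the last sub-threshold time, and it is precisely what makes the offset $G + \gamma_+$ and the per-window Chernoff estimate valid. A secondary point to handle with care is justifying the factorised moment generating function from the conditional-independence and sub-Gaussianity assumptions rather than from outright independence; once that is in place, the geometric summation and the tail integral are routine.
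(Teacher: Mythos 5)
Your proposal is correct and follows essentially the same route as the paper's proof: the decomposition over the last sub-threshold time $\tau$ is exactly the paper's partition into the events $K_i$, the sign-locking/telescoping step, the conditional Hoeffding (sub-Gaussian Chernoff) bound on the noise sum with drift $\alpha k$, the geometric summation yielding $c_\alpha$, and the tail integration for the expectation all match. The only point to state carefully in a full write-up is the step at time $\tau$ itself, where $\Delta_\tau$ is below threshold and the positive clip must instead be deduced from $\Delta_{\tau+1}>0$ (via your reset-to-zero observation), exactly as the paper does.
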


\begin{proof}[Sketch Proof]
    If $\Delta_t \ge G + \gamma_+$ then $g_t + \Delta \ge \gamma_t$ and
    clipping is guaranteed on this iteration. This gives
    $\Delta_{t+1} = \Delta_t + \gbar_t + \xi_t - \gamma_t$.
    If this happens for $j$ iterations then
    $\Delta_{t+j} = \Delta_t + \sum_{i=0}^{j-1}(\gbar_{t+i} - \gamma_{t+i}) + \sum_{i=0}^{j-1}\xi_{t+i}$.
    Using $\abs{\sum_{i=0}^{j-1}(\gbar_{t+i} - \gamma_{t+i})} \le j\alpha$ we apply Hoeffding's
    inequality, then integrate the bound to get the expectation result.
\end{proof}

\begin{corollary}
  \Cref{prop:carry-bounded} implies, using the bound on $c_\alpha$, 
  \[
    \P(\abs{\Delta_t} \ge \epsilon + G + \gamma_+) 
    \le 
    4G^2\alpha^{-2}
    \exp\left(-\frac{\alpha\epsilon }{G^2}\right)
  \]
  which in turn gives that with probability at least $1-\delta$
  \[
    \abs{\Delta_t} \le \frac{G^2}{\alpha^2} \log\left(\frac{4G^2}{\alpha^2\delta}\right) + 2G.
  \]
\end{corollary}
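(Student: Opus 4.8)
The corollary packages two elementary consequences of \Cref{prop:carry-bounded}, so the plan is simply to (i) coarsen the tail bound of the proposition into a clean exponential and (ii) invert that exponential to obtain a high-probability bound. Neither step needs anything beyond the inequalities already supplied in the proposition, so I expect no genuine obstacle; the only things to get right are the direction of the inequalities and the bookkeeping of the additive constants and the leading coefficient.

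For the first display I would start from the proposition's bound, $\P(\abs{\Delta_t} \ge \epsilon + G + \gamma_+) \le 2c_\alpha \exp\!\left(-\frac{\epsilon^2 + 2\alpha\epsilon t}{2G^2 t}\right)$, and simplify both factors. The prefactor is handled directly by the stated bound $c_\alpha \le 2G^2\alpha^{-2}$, giving $2c_\alpha \le 4G^2\alpha^{-2}$. For the exponent I would split $\frac{\epsilon^2 + 2\alpha\epsilon t}{2G^2 t} = \frac{\epsilon^2}{2G^2 t} + \frac{\alpha\epsilon}{G^2}$ and discard the nonnegative first term; lowering the exponent only enlarges the exponential, so $\exp\!\left(-\frac{\epsilon^2 + 2\alpha\epsilon t}{2G^2 t}\right) \le \exp\!\left(-\frac{\alpha\epsilon}{G^2}\right)$. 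Combining the two yields exactly $4G^2\alpha^{-2}\exp\!\left(-\frac{\alpha\epsilon}{G^2}\right)$. The conceptually important point is that discarding the $\frac{\epsilon^2}{2G^2 t}$ term is precisely what removes all dependence on $t$, producing a tail that is uniform in time; this is what lets one later invoke the carry bound without a union bound over iterations.

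For the second display I would invert this time-uniform tail. Setting the right-hand side equal to $\delta$, i.e. $4G^2\alpha^{-2}\exp\!\left(-\frac{\alpha\epsilon}{G^2}\right) = \delta$, and solving for $\epsilon$ gives $\epsilon = \frac{G^2}{\alpha}\log\!\left(\frac{4G^2}{\alpha^2\delta}\right)$. Substituting this into the event $\{\abs{\Delta_t} \ge \epsilon + G + \gamma_+\}$ and passing to the complement shows that, with probability at least $1-\delta$, $\abs{\Delta_t} \le \frac{G^2}{\alpha}\log\!\left(\frac{4G^2}{\alpha^2\delta}\right) + G + \gamma_+$. Finally I would collapse the additive constant using $\gamma_+ \le G$ — valid because the clip regions may be taken to satisfy $\gamma_t \le G$ without loss of generality, as noted in the general assumptions — so that $G + \gamma_+ \le 2G$. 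The one place I would double-check is the leading coefficient of the logarithmic term: because the exponent is linear in $\epsilon$, inversion produces a factor $\frac{G^2}{\alpha}$ (which is $\le \frac{G^2}{\alpha^2}$ precisely when $\alpha \le 1$), and this is the only substantive point to verify in an otherwise routine argument.
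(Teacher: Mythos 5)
Your proposal is correct and is exactly the intended argument: the paper offers no separate proof of this corollary, and coarsening the tail by dropping the $\epsilon^2/(2G^2t)$ term, bounding $2c_\alpha \le 4G^2\alpha^{-2}$, inverting at level $\delta$, and using $\gamma_+ \le G$ is the only route available. You are also right to flag the leading coefficient: the inversion genuinely yields $\frac{G^2}{\alpha}\log\bigl(\frac{4G^2}{\alpha^2\delta}\bigr)$, and the printed $\frac{G^2}{\alpha^2}$ only dominates this when $\alpha \le 1$, which the assumptions do not guarantee (only $\alpha \le \gamma_+ \le G$); this appears to be a typo in the corollary rather than a defect in your argument.
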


\Cref{prop:adaptive-clipping} below is the carry bound on which~\cref{thm:adaptive-clipping}
relies on. 
The proof is basically the same but we give it in~\cref{sec:proof-adaptive-clipping}
for completeness. 
Although not scale equivariant, it gives a reduced dependence on $G$.

\newcommand{\propadaptiveclipping}{%
  Set $\gamma_t = \abs{\gbar_t} + \beta \sigma_t^2 \le (1+\beta)G$ 
  for some $\beta > 0$.
  Consider the following update rules with $\Delta_1 = 0$
  \[
    u_t = \clip(g_t + \Delta_t, \gamma_t)
    \quad \text{ and } \quad
    \Delta_{t+1} = g_t +\Delta_t - u_t 
  \]
  where $\clip(x, \gamma) = \min\{\max\{g_t + \Delta_t, -\gamma\}, \gamma\}$
  projects $x$ onto the interval $[-\gamma, \gamma]$.
  Suppose there is a constant $\sigmamin \le \sigma_t$ $\forall i$.
  Then with probability at least $1- \delta$, setting 
  and $m_t = \min\{t, \beta^{-2}\sigmamin^{-2}/2\}$
  we get
  \iftwocolumn 
  \begin{align*}
    \abs{\Delta_t} 
      &\le \frac{2}{\beta}\log\left(\frac{2m_t}{\delta}\right) + (2+\beta)G  \\
      &\le \frac{2}{\beta}\log\left(\frac{\beta^{-2} \sigmamin^{-2}}{ \delta}\right) + (2+\beta)G 
    .
  \end{align*}
  \else 
  \[
    \abs{\Delta_t} 
      \le \frac{2}{\beta}\log\left(\frac{2m_t}{\delta}\right) + (2+\beta)G
      \le \frac{2}{\beta}\log\left(\frac{\beta^{-2} \sigmamin^{-2}}{ \delta}\right) + (2+\beta)G 
    .
  \]
  \fi
  It follows that
  \[
    \E[\abs{\Delta_t}] \le \frac{8}{\beta^3\sigmamin^2} + (2+\beta)G.
  \]
  }
\begin{proposition}[Carry bound: adaptive clipping]\label{prop:adaptive-clipping}
    \propadaptiveclipping{}
\end{proposition}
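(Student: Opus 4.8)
The plan is to mirror the proof of~\cref{prop:carry-bounded}: control $\abs{\Delta_t}$ by studying only its large excursions, since the fine behaviour near $0$ is irrelevant to an upper bound. The starting observation is the same. If $\Delta_s \ge G + \gamma_+$ then, because $\abs{g_s} \le G$, we have $g_s + \Delta_s \ge \gamma_+ \ge \gamma_s$, so clipping to the upper boundary is guaranteed and $u_s = \gamma_s$. Hence on such a step $\Delta_{s+1} = \Delta_s + g_s - \gamma_s = \Delta_s + \xi_s + (\gbar_s - \gamma_s)$, and since $\gamma_s = \abs{\gbar_s} + \beta\sigma_s^2$ we get $\gbar_s - \gamma_s \le -\beta\sigma_s^2 \le -\beta\sigmamin^2$. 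So while it is high, $\Delta$ is a random walk with sub-Gaussian increments $\xi_s$ (variance proxy $\sigma_s^2$) and a negative drift of at least $\beta\sigma_s^2$ per step. By symmetry of the clip the same holds for the lower boundary, so it suffices to bound the upper tail and double.

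The feature that distinguishes this from~\cref{prop:carry-bounded}, and the reason the dependence on $G$ improves, is that here the per-step drift $\beta\sigma_s^2$ is tied to the same quantity $\sigma_s^2$ that controls the increment. Over a run of $j$ consecutive clipped steps starting from just below the threshold, $\Delta_t \le (G+\gamma_+) + \sum_s \xi_s - \beta\sum_s \sigma_s^2$. Writing $V = \sum_s \sigma_s^2$ and applying a Chernoff bound with parameter $\lambda = \beta$, the exponential supermartingale $\exp(\beta\sum_s \xi_s - \tfrac{\beta^2}{2}\sum_s \sigma_s^2)$ (a supermartingale by iterated conditioning, using that $\xi_s \mid x_s$ is sub-Gaussian with proxy $\sigma_s^2$) yields a per-run tail $\lesssim \exp(-\beta\epsilon)\exp(-\tfrac{\beta^2}{2}V) \le \exp(-\beta\epsilon)\exp(-\tfrac{\beta^2\sigmamin^2}{2}j)$. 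The $\epsilon$-dependence is purely exponential, with no $G$ in the rate.

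Next I would remove the conditioning on the run length by decomposing on the last time $\Delta$ was below $G+\gamma_+$ and union bounding over run lengths $j = 1,\dots,t-1$. The geometric series sums to $\sum_{j\ge1}\exp(-\tfrac{\beta^2\sigmamin^2}{2}j) \le \min\{t, \tfrac{2}{\beta^2\sigmamin^2}\}$, which is exactly the origin of $m_t = \min\{t, \beta^{-2}\sigmamin^{-2}/2\}$; combining with the lower-tail copy gives $\P(\abs{\Delta_t} \ge \epsilon + G + \gamma_+) \le 2m_t\exp(-\beta\epsilon)$ up to the bookkeeping of constants. Inverting this at level $\delta$ and using $\gamma_+ \le (1+\beta)G$ (so $G + \gamma_+ \le (2+\beta)G$) gives the stated high-probability bound; the second inequality is immediate from $m_t \le \tfrac12\beta^{-2}\sigmamin^{-2}$. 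The expectation bound then follows by integrating the tail, $\E[\abs{\Delta_t}] \le (2+\beta)G + \int_0^\infty 2m_t\exp(-\beta u)\,du = (2+\beta)G + O(m_t/\beta)$, and $m_t \le \tfrac12\beta^{-2}\sigmamin^{-2}$ turns the second term into $O(\beta^{-3}\sigmamin^{-2})$, matching $\tfrac{8}{\beta^3\sigmamin^2}$.

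The main obstacle is rigour in the concentration step rather than any single inequality. Both the increment proxy $\sigma_s^2$ and the clip region $\gamma_s$ are functions of $x_s$ and hence random, and the excursion endpoints are themselves stopping times, so $\sum_s \xi_s$ is not a sum of a fixed number of independent bounded variables. The clean fix is to run the argument through the exponential supermartingale above and apply optional stopping (or a maximal/Ville inequality) so that the conditional sub-Gaussian MGF bound $\E[\exp(\lambda\xi_s)\mid x_s] \le \exp(\tfrac{\lambda^2}{2}\sigma_s^2)$ is applied one step at a time; the adaptive choice $\gamma_s = \abs{\gbar_s} + \beta\sigma_s^2$ is exactly what makes $\lambda = \beta$ cancel the drift against the MGF and leave the clean geometric factor. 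The only other fiddly point is the single-step overshoot when $\Delta$ first crosses the threshold, which must be absorbed into the additive $O(G)$ offset (this is why the threshold is $G+\gamma_+$ rather than $\gamma_+$); as in~\cref{prop:carry-bounded} this is routine. Convexity of $\{f_t\}$ is not used anywhere in this argument.
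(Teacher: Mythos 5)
Your proposal is correct and follows essentially the same route as the paper's proof: forced clipping above the threshold $G+\gamma_+$, decomposition over the last time the carry was below that threshold, cancellation of the drift $\beta\sum\sigma_j^2$ against the sub-Gaussian moment bound (the paper applies Hoeffding after conditioning on the history, which yields the same exponent $-\epsilon\beta-\tfrac12(t-i)\beta^2\sigmamin^2$ as your fixed-$\lambda=\beta$ Chernoff bound), the geometric series giving $m_t$, a union bound over the two tails, and tail integration for the expectation. The stopping-time concern you raise is resolved exactly as you suggest, by the union over candidate excursion start times, so no supermartingale machinery is needed.
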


\begin{remark}[Proportional adaptive clipping]\label{remark:proportional-adaptive-clipping-noise}
  We define proportional adaptive clipping by the 
  clip region $\gamma_t = c\abs{\gbar_t}$.
  In~\cref{prop:adaptive-clipping} this can be achieved by an assumption
  that the gradient noise variance scales with $\abs{\gbar}$.
  However, we can still say something without assumptions on the noise.
  If $\E[\abs{\Delta_t}]$ is bounded by a constant then we know
  from~\cref{lemma:gradient-transformation} that the regret of the
  average iterate is $O(t^{-1/2})$. 
  On the other hand, if $\E[\abs{\Delta_t}]$ is unbounded in $t$
  then by comparison with~\cref{prop:carry-bounded} we see that
  $\alpha = \inf_t(\gamma_t - \abs{\gbar_t}) = (c-1)\abs{\gbar_t} =0$,
  which in turn implies that the algorithm converges (qualitatively)
  on a domain of diameter $R$ by linearisation
  \[
    \inf_t(f(x_t) - f(x_*)) 
    \le \inf_t(g_t^\top(x_t - x_*))
    \le R \inf_t\norm{g_t} = 0.
  \]
  If, in addition, we know that $\E[\abs{\Delta_t}] = \Omega(b_t)$ for some sequence
  $\{b_t\}$,
  then we get that $\alpha = o(b_t^{-1/3})$ from the bound on $c_\alpha$ 
  in~\cref{prop:carry-bounded} which gives a rate.
\end{remark}

\section{Experiments}\label{sec:experimental-details}

\subsection{Validation of Theoretical Results}\label{sec:validation}
In this section we analyze the results from~\cref{sec:theory} empirically on a simple example.
We run SGD with \uclip{} for 10k iterations, initialized at $x=100$ with learning rate $0.1$ and 
other parameters varying. The (expected) objective is $f(x) = \abs{x}$
and the algorithm receives stochastic
sub-gradients $g_t = \grad f(x_t) + \xi_t$ where $\xi_t$ is independent, symmetric uniform noise.
In this problem, the maximal gradient is related to the noise level (the bounds on
the uniform distribution), which in turn is related to the variance (or variance proxy).
We use the formula $\sigma^2 = \frac{(l_2 - l_1)^2}{4}$ where $\xi_t \sim \unif[l_1, l_2]$.
This is the basic estimate of the sub-Gaussian variance proxy for any bounded random variable,
which is why we use it (despite it not being tight for the uniform distribution).

In~\cref{fig:validation-standard,fig:validation-adaptive} we report our validation results
for~\cref{prop:carry-bounded,prop:adaptive-clipping} respectively.
The charts compare the high probability bounds for $\abs{\Delta_t}$
at failure rate $\delta = 0.01$ to the empirical 99\textsuperscript{th} 
percentile. We vary each parameter independently. In general, up to a constant scaling,
we see good agreement between prediction and experiment.

In~\cref{fig:convergence-validation} we validate~\cref{lemma:gradient-transformation}
with expected carry bounds coming from~\cref{prop:carry-bounded,prop:adaptive-clipping} 
respectively. We report the regret bound from~\cref{lemma:gradient-transformation}
with both the predicted expected carry and the observed carry, finding that the 
regret bound is essentially tight in the latter case (so the slack in the overall result
comes from the expectation bound). In a way this is to be expected, because $f(x) = \abs{x}$
saturates the linearised analysis in~\cref{lemma:gradient-transformation}.

\begin{figure*}[tb]
    \centering
    \begin{subfigure}[t]{\twofigwidth}
        \centering
  \includegraphics[width=\textwidth]{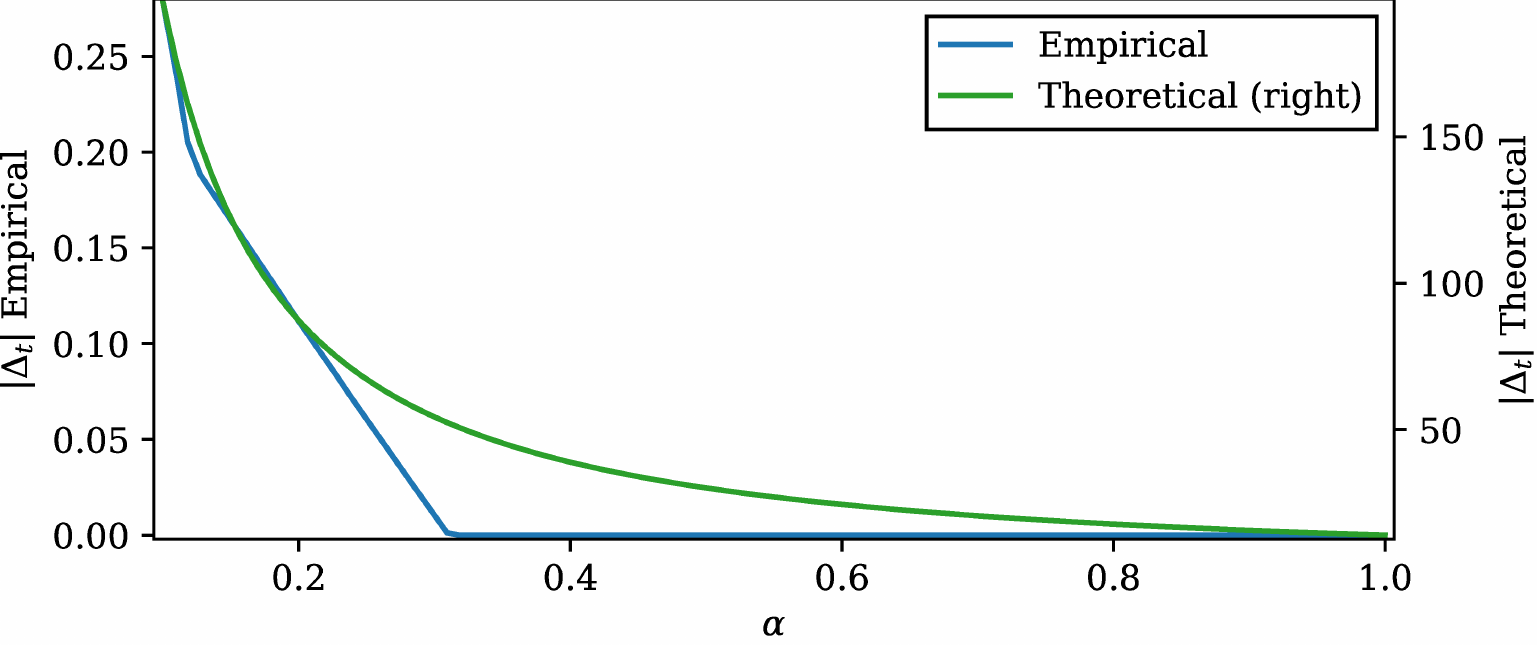}
    \end{subfigure}%
    ~ 
    \begin{subfigure}[t]{\twofigwidth}
        \centering
  \includegraphics[width=\textwidth]{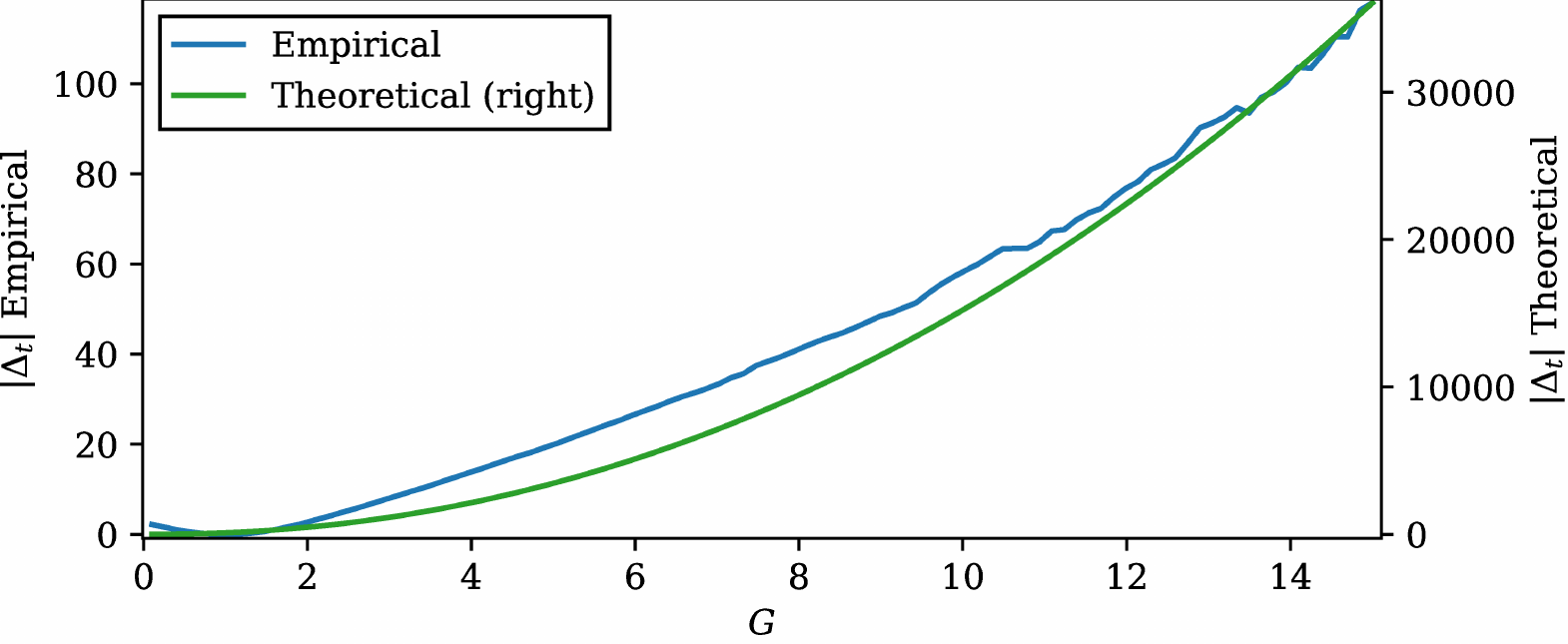}
    \end{subfigure}
    \caption{
      Validation of~\cref{prop:carry-bounded} at failure rate $\delta=0.01$,
      using clip region $\gamma = \gbar + \alpha$.
      Left: vary $\alpha \in [0.1, 1]$ with $G$ fixed by $\sigmamin^2=0.1$.
      Right: vary $G \in [0.1, 15]$ with $\alpha=0.1$.
  }
    \label{fig:validation-standard}
\end{figure*}

\begin{figure*}[tb]
    \centering
    \begin{subfigure}[t]{\twofigwidth}
        \centering
  \includegraphics[width=\textwidth]{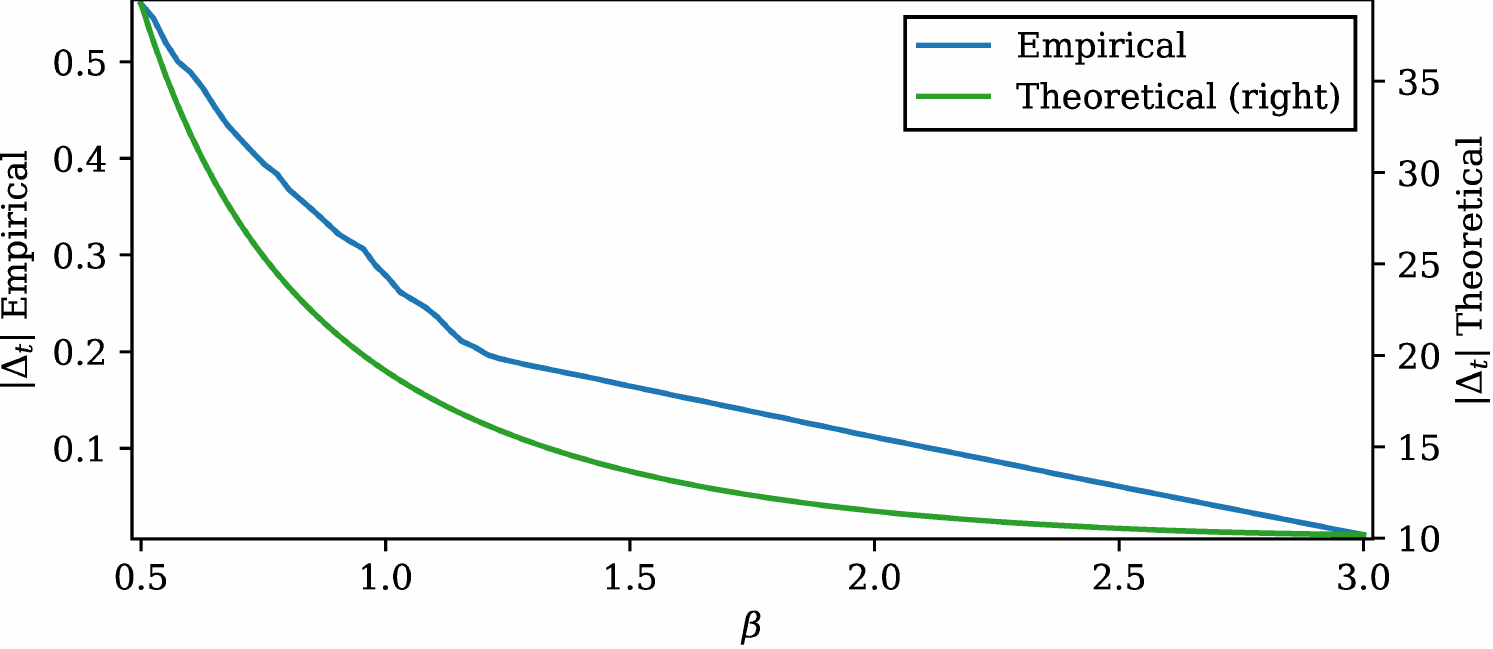}
    \end{subfigure}%
    ~ 
    \begin{subfigure}[t]{\twofigwidth}
        \centering
  \includegraphics[width=\textwidth]{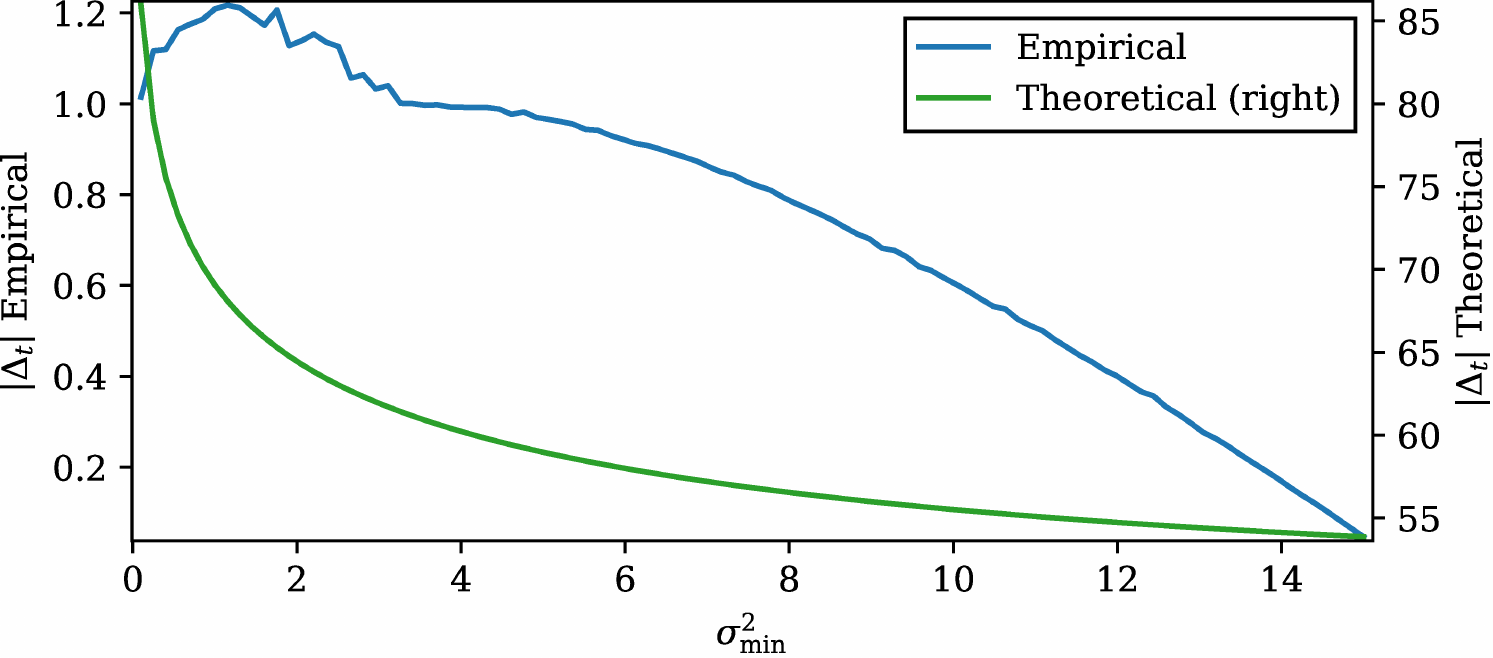}
    \end{subfigure}
    \caption{Validation of~\cref{prop:adaptive-clipping}, adaptive clipping with
    $\gamma = \gbar + \beta \sigma^2$.
    Left: vary $\beta \in [0.5, 3]$ with $\sigmamin^2 = 0.1$.
    Right: vary $\sigmamin^2 \in [0.1, 15]$ with $\beta=0.25$.
    }
    \label{fig:validation-adaptive}
\end{figure*}

\begin{figure*}[tb]
    \centering
    \begin{subfigure}[t]{\twofigwidth}
        \centering
  \includegraphics[width=\textwidth]{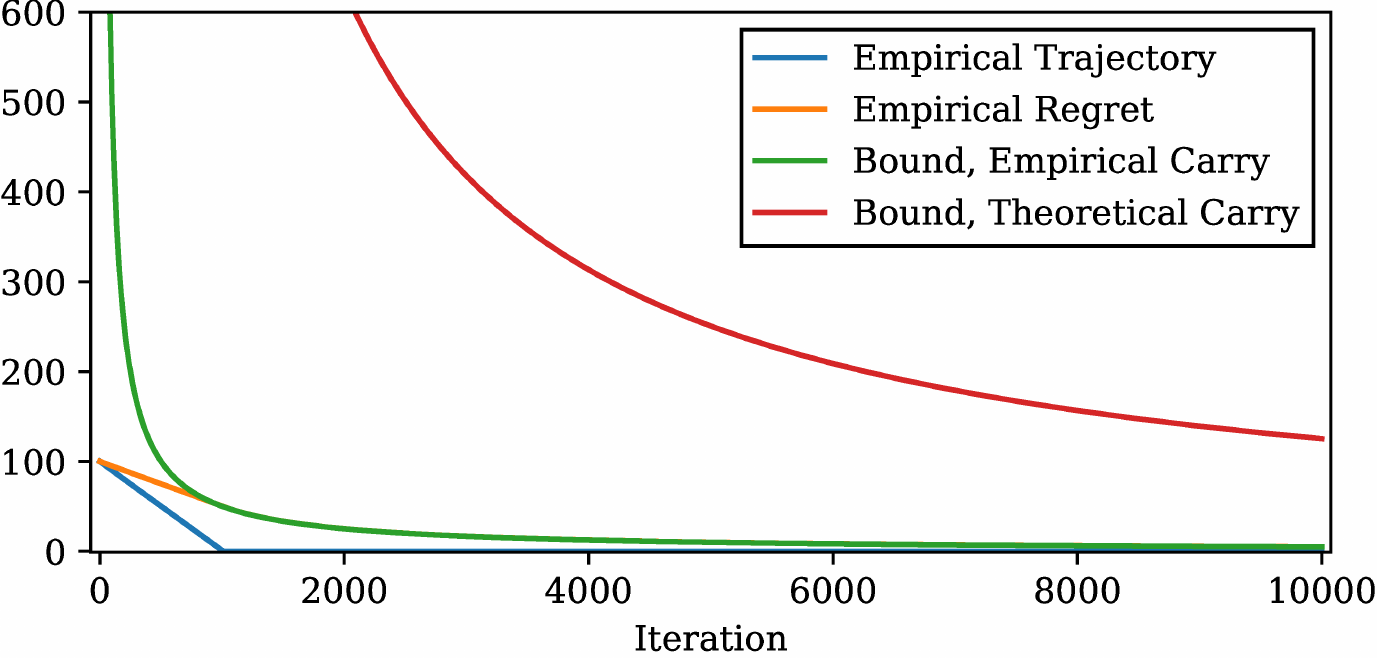}
    \end{subfigure}%
    ~ 
    \begin{subfigure}[t]{0.5\textwidth}
        \centering
  \includegraphics[width=\textwidth]{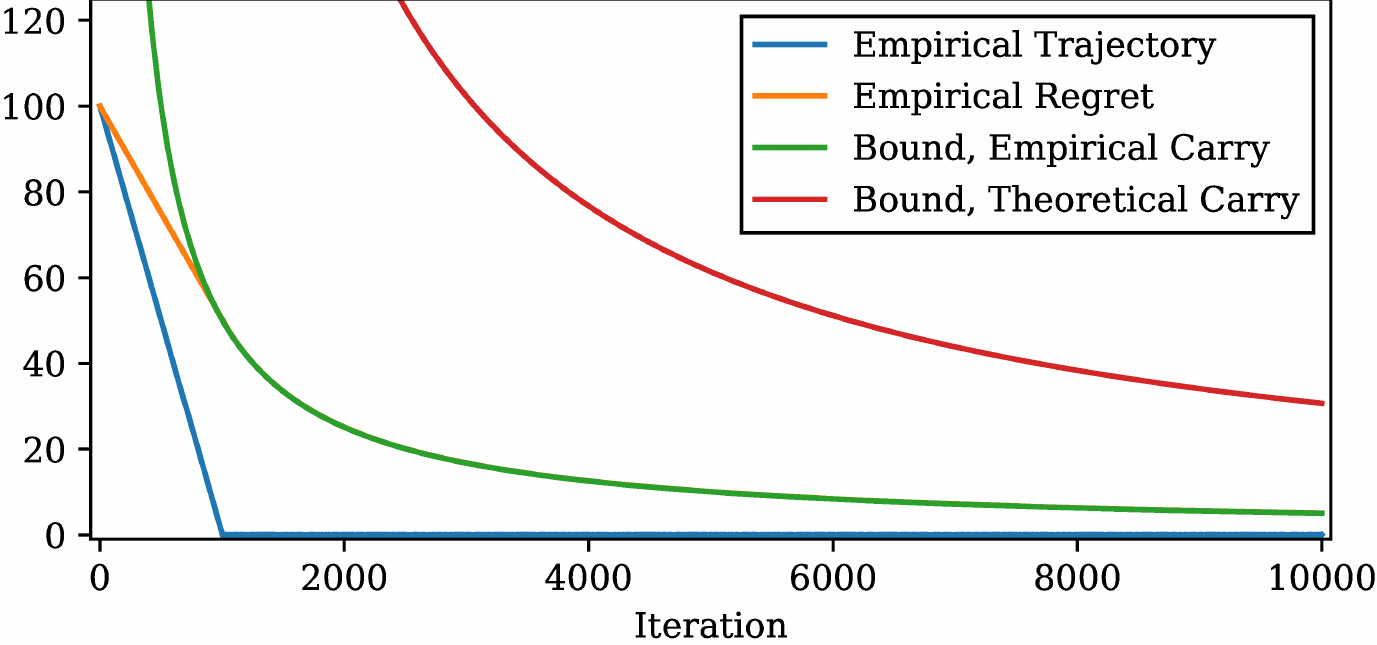}
    \end{subfigure}
    \caption{Validation of~\cref{lemma:gradient-transformation} using expected carry 
    bounds.
    Left: carry bound from~\cref{prop:carry-bounded} with clip region
    $\gamma = \gbar + \alpha$, $\alpha = 0.1$ and $\sigmamin^2 =0.1$.
    Right: carry bound from~\cref{prop:adaptive-clipping} with clip region
    $\gamma = \gbar + \beta \sigma^2$, $\beta = 0.25$ and $\sigmamin^2=0.2$.}
    \label{fig:convergence-validation}
\end{figure*}

\subsection{Practical Performance}
In this section we report the performance of \uclip{} when combined 
(as described in~\cref{sec:introduction})
with standard optimizers (SGD, momentum, Adam) to optimize a small conv-net on
CIFAR10~\citep{krizhevsky2009learning}.  We also provide validation for using
\uclip{} with momentum to train
a ResNet50v2 on ImageNet~\citep{deng2009imagenet}.
We compare the performance of \uclip{} to that of
the base optimizer and of the base optimizer with clipping (but no carry).
The purpose of this section is to demonstrate that \uclip{} works in practical settings.
Although \uclip{} does outperform the base optimizer in some settings,
we make no attempt to achieve state of the art results.

We explore the performance of various clipping regimes: component clipping
and norm clipping with $\gamma \in \R$ constant (same region for each parameter),
and component clipping with clip region chosen adaptively as a function of past gradients.
In the adaptive case, we use $\gamma_t = a |\mhat_t| + b \shat_t$ for
$a,b \in \R_+$
where $\mhat_t$ and $\shat_t$ are component-wise estimates of the mean and standard-deviation
of each parameter respectively.

We consider two methods of estimation.
The first is the sample variance using Welford estimation 
with the standard iterative calculation of the sample mean, 
then using $\shat_t = \sqrt{\texttt{variance}}$.
The second is exponentially weighted moving average (EWMA) of the first and second
moments with decay factor $0.95$, then using $\shat_t = \sqrt{\texttt{second moment}}$.

\subsubsection{CIFAR10}\label{sec:cifar10}
We train a small convolutional net on the CIFAR10 dataset to minimize the
cross-entropy loss.
We explore the performance of \uclip{} in combination
with SGD, momentum (parameter $0.9$) and Adam 
(parameters $\beta_1 = 0.90$ and $\beta_2 = 0.999$). 
We refer to these as the base optimizers.
We calculate the median, minimum and max number of epochs needed to reach 99\% training accuracy
for each configuration over 5 seeds. The learning rate is tuned to the batch size
according to the median (smaller is better) across 5 log-spaced values. 
Further experimental details are given in~\cref{sec:cifar-exp-details}

We report a selection of the results here with the rest in~\cref{sec:cifar-additional-results}.
We find that for smaller batch sizes ($\le$ 100) \uclip{} rarely slows
optimization when compared to the base optimizer and sometimes gives a significant
advantage. On the other hand, for large batch sizes the carry seems to slow training
or make it less stable. We believe the benefit for small batch sizes, 
as shown in~\cref{fig:cifar-norm-0.5-results}, is due to the memory of the carry 
smoothing out mini-batch noise. For small batch sizes this noise is prevalent
so the carry is helpful.  With larger batch sizes 
large gradients are more likely due (possibly transitory) large signals from the underlying
expected objective.
In this case the carry is harmful because a region of large gradients could make the
carry large for many iterations.

\begin{figure*}[htbp]
    \centering
    \begin{subfigure}[t]{\twofigwidth}
        \centering
  \includegraphics[width=\textwidth]{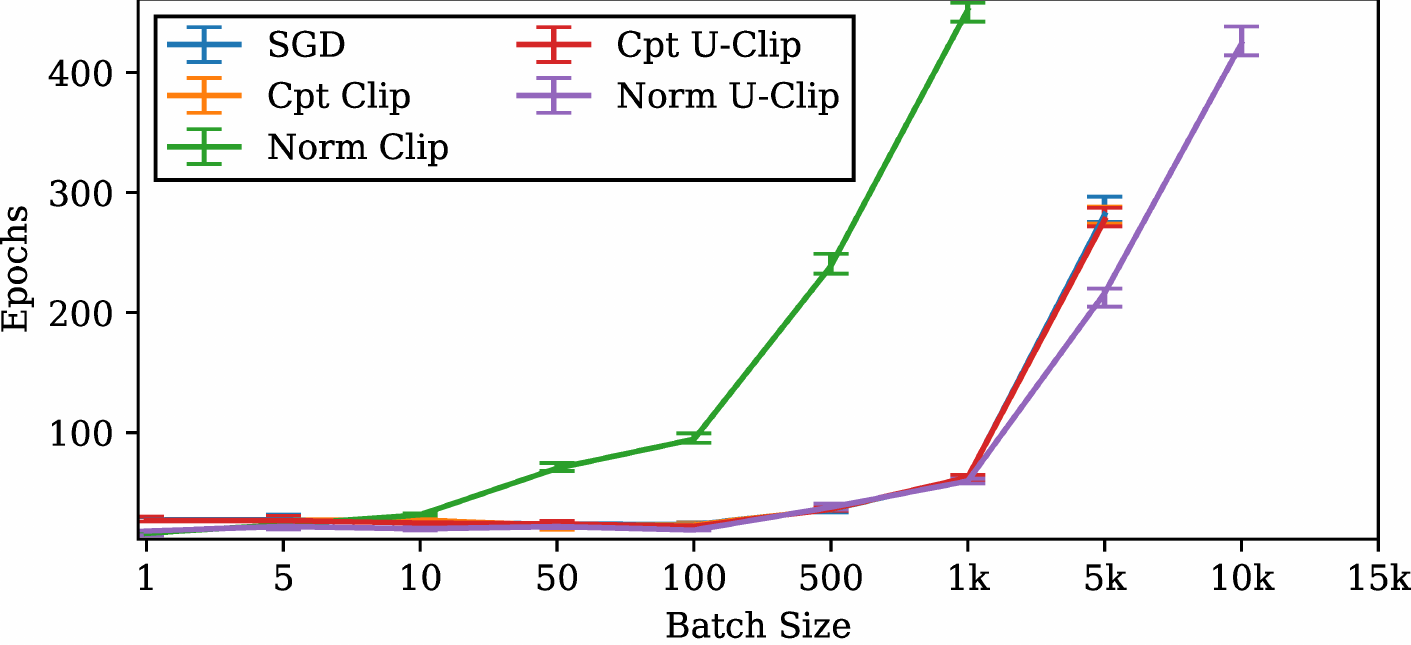}
    \end{subfigure}%
    ~ 
    \begin{subfigure}[t]{\twofigwidth}
        \centering
  \includegraphics[width=\textwidth]{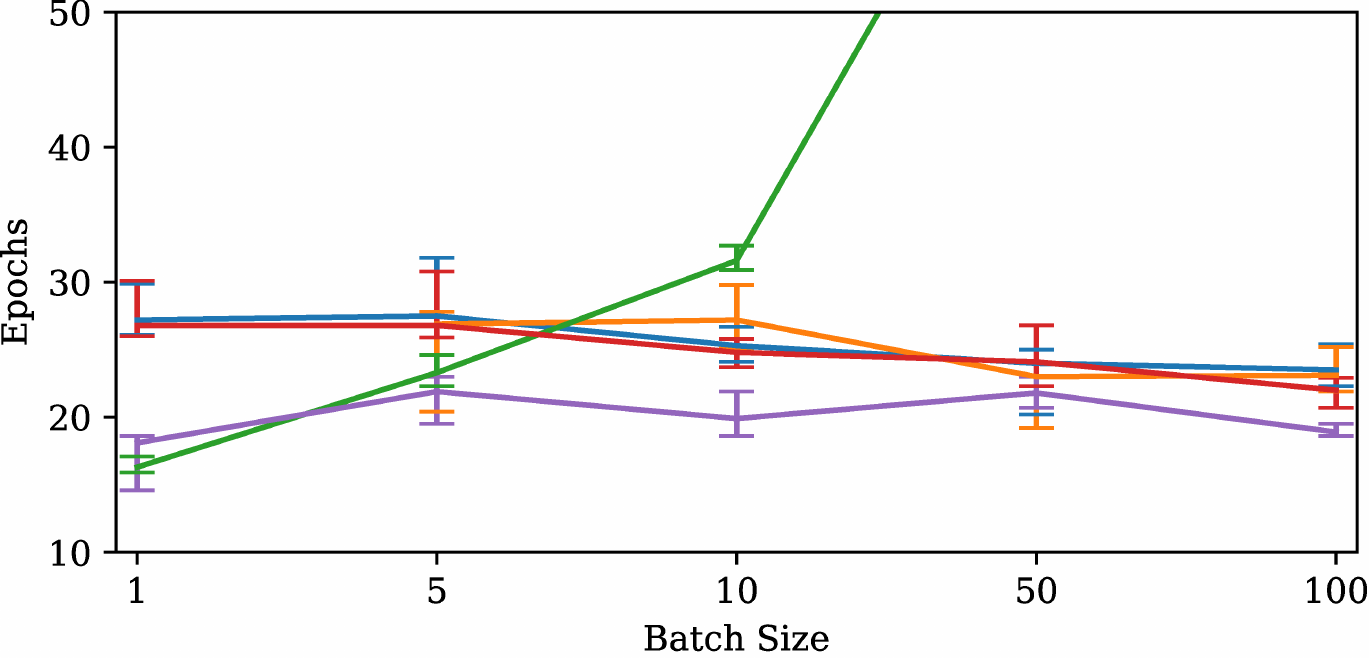}
    \end{subfigure}

    \begin{subfigure}[t]{\twofigwidth}
        \centering
  \includegraphics[width=\textwidth]{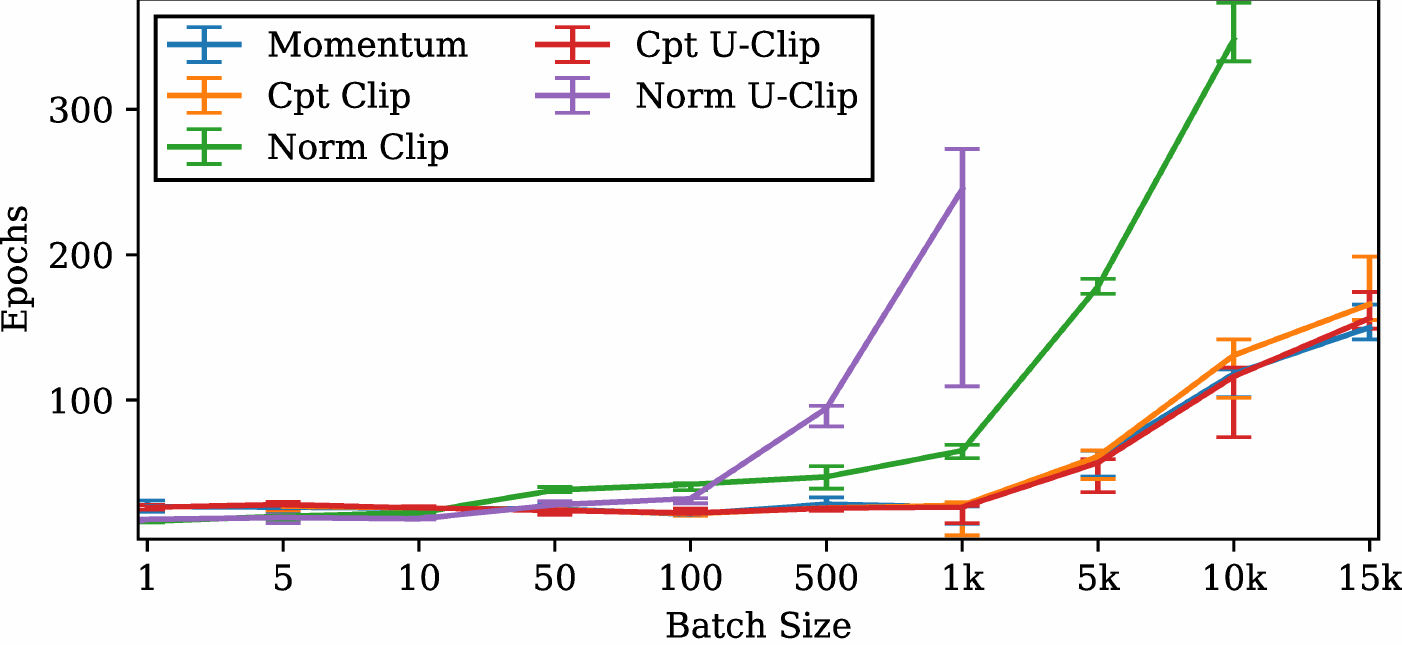}
    \end{subfigure}%
    ~ 
    \begin{subfigure}[t]{\twofigwidth}
        \centering
  \includegraphics[width=\textwidth]{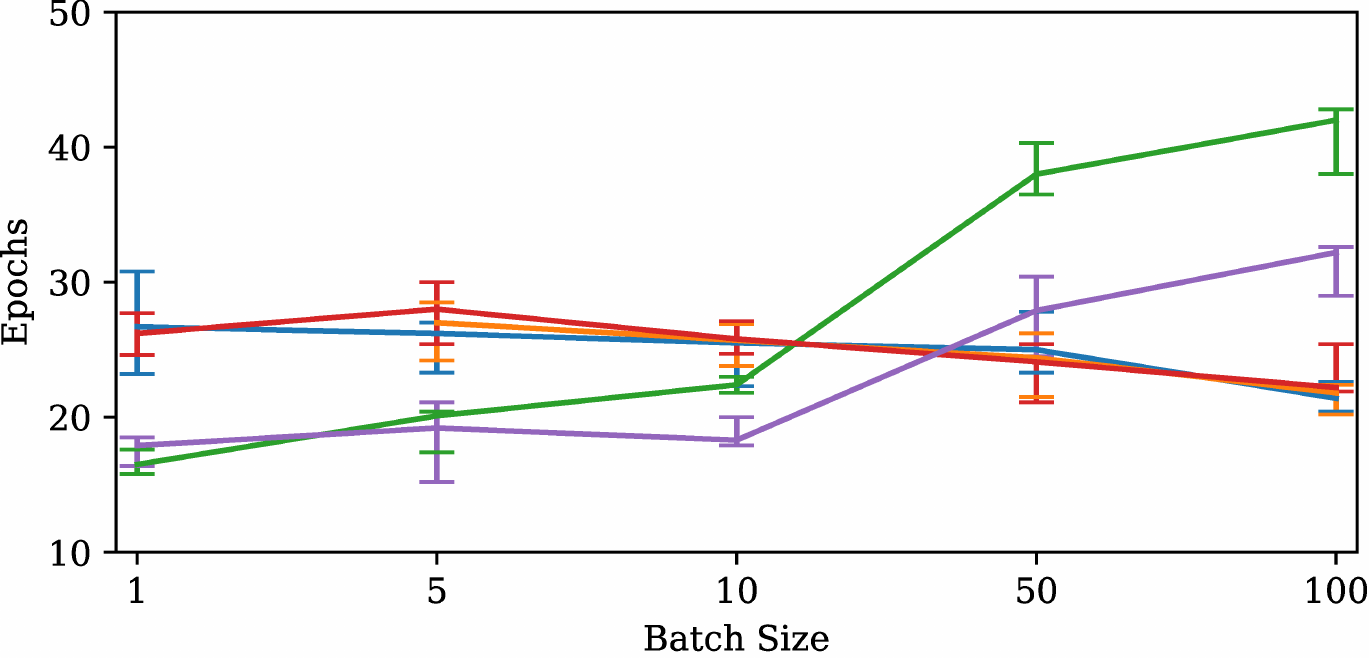}
    \end{subfigure}

    \begin{subfigure}[t]{\twofigwidth}
        \centering
  \includegraphics[width=\textwidth]{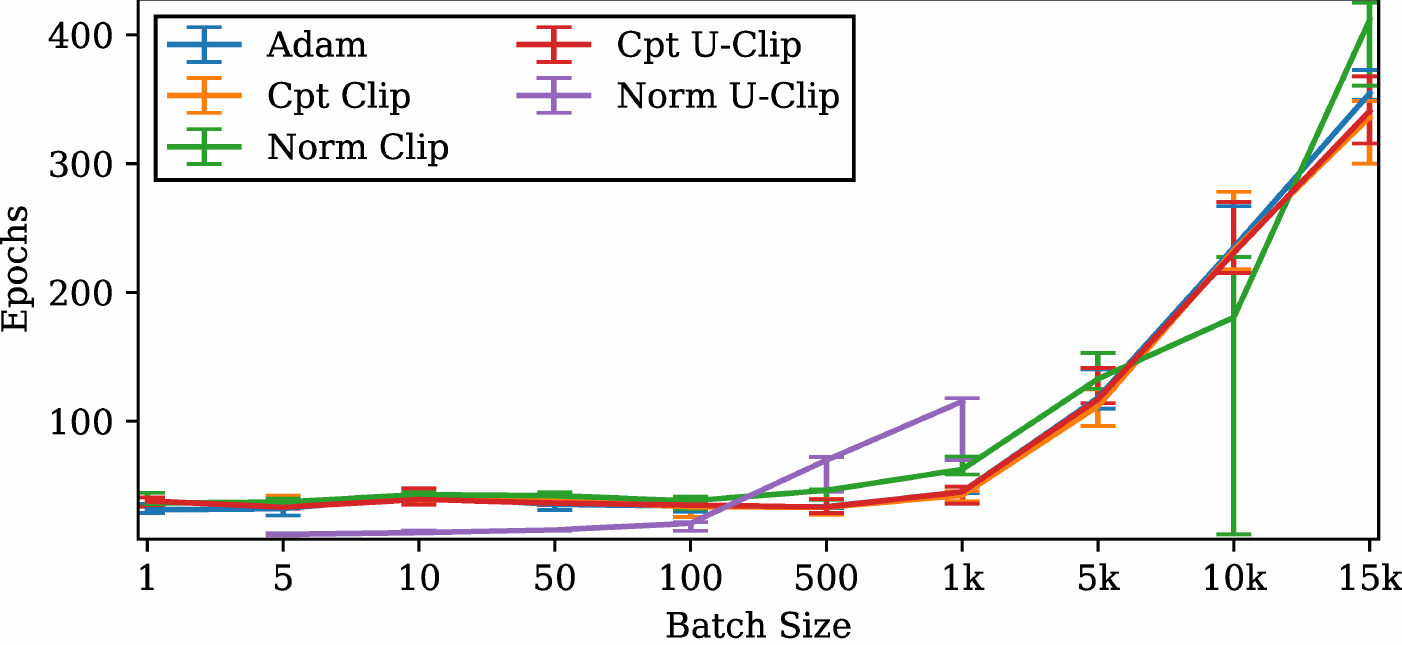}
    \end{subfigure}%
    ~ 
    \begin{subfigure}[t]{\twofigwidth}
        \centering
  \includegraphics[width=\textwidth]{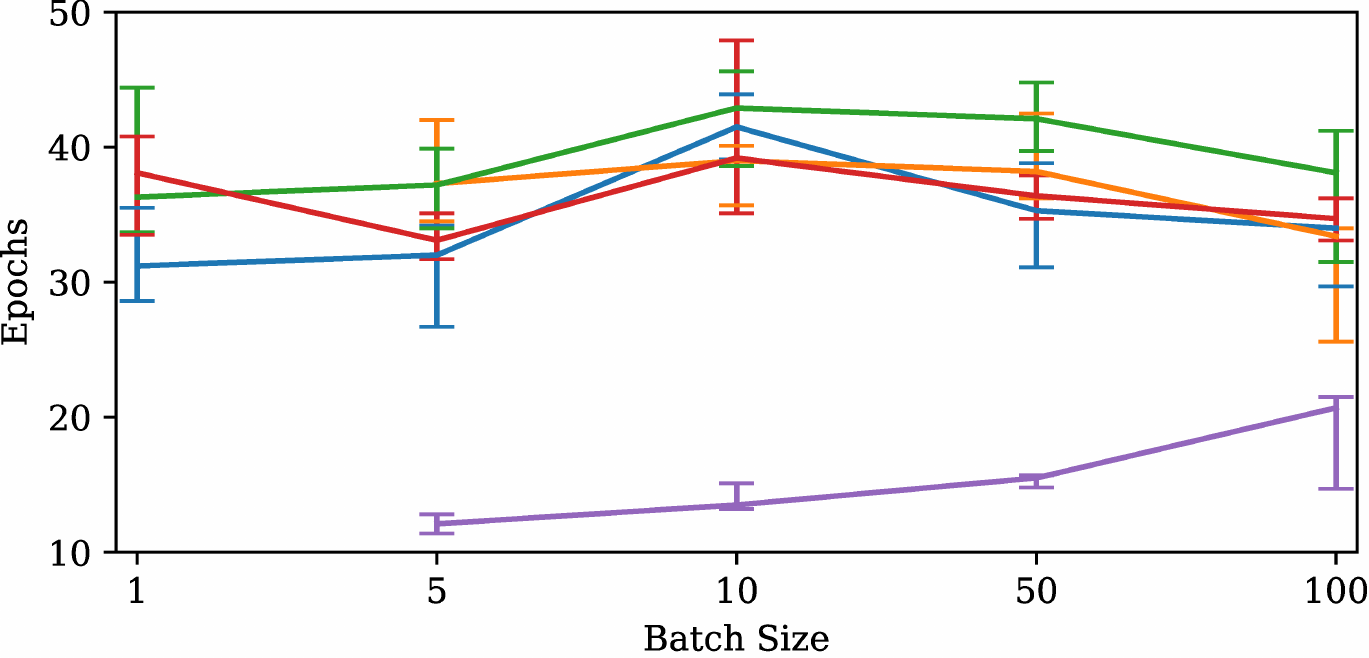}
    \end{subfigure}

    \caption{
      CIFAR10 results for norm clipping with $\gamma=0.5$ constant for
      each base optimizer. From top to bottom: SGD, momentum, Adam.
      Left: full results for each base optimizer.
      Right: zoomed in for small batches, legend the same.
      Errorbars denote min/max over the 5 seeds, line is the median.
      Where a datapoint is missing, some runs did not arrive at 99\%
      training accuracy for that batch size.
      We see a improvement in optimization performance for small batches with both norm
      clipping and norm \uclip{} for SGD and momentum, and a significant improvement for 
      norm \uclip{} with Adam. We believe that this benefit is due to a smoothing out of 
      gradient noise due to the memory of the carry.
    }
    \label{fig:cifar-norm-0.5-results}
\end{figure*}

\subsubsection{ImageNet}\label{sec:imagenet}
We validate the performance of \uclip{} on a larger scale experiment,
training a ResNet50v2~\citep{he2016identity} on ImageNet with Nesterov momentum.
We modify a base setup by adding clipping and \uclip{} for the following clip
regions: constant $\gamma = 0.25$, adaptive $(a, b) = (1, 2)$ and 
proportional adaptive clipping $(a, b) = (10, 0)$.
In each adaptive case we use each of Welford (sample mean/variance) and EWMA
estimation (decay 0.95) as in~\cref{sec:cifar10}.
Further experimental details are given in~\cref{sec:imagenet-exp-details}

We run each configuration for 5 seeds and report 
the median and min/max range over the seeds of the best top 1 accuracy
on a validation set. 
The results are presented in~\cref{tab:imagenet-results}
and a sample optimization trajectory is given in~\cref{fig:imagenet-trajectory}.
We find it encouraging that \uclip{} proves to be competitive with the base setup, 
but it does not provide significant benefit under these conditions,
possibly because the batch size is relatively large. 
We made no attempt to tune the base setup according to the modifications, 
so it is likely that improvements on the results shown are possible.
Time and resource constraints mean we must leave further exploration to
future work.

\begin{table}
  \centering
\begin{tabular}{llllllll}
\toprule
    Median &  Range &  Clip Region & Carry &    $(a, b)$ or $\gamma$ \\
\midrule
 77.2\% & $\pm 0.2$ &    Welford &      True &       (   1 ,    2)  \\
 77.2\% &  $\pm 0.1$ &       EWMA &      True &       (   1 ,    2 ) \\
 77.2\% &  $\pm 0.2$ &  Component &      True &    0.25 \\
 77.1\% &  $\pm 0.2$ &  Component &     False &      0.25 \\
 77.1\% &  $\pm 0.3$ &       EWMA &     False &       (   1 ,    2 ) \\
 77.1\% &  $\pm 0.1$ &No Clipping &         - &       - \\
 77.0\% &  $\pm 0.1$ &       EWMA &     False &       $(  10 ,    0 )$ \\
 76.9\% &  $\pm 0.3$ &    Welford &     False &       $(   1 ,    2 )$ \\
 76.8\% &  $\pm 0.2$ &       EWMA &      True &       $(  10 ,    0 )$ \\
 76.2\% &  $\pm 0.1$ &       Norm &     False &       0.25 \\
 75.6\% &  $\pm 0.3$ &       Norm &      True &       0.25 \\
 72.7\% &  $\pm 0.4$ &    Welford &     False &       $(10 , 0 )$ \\
 61.2\% &  $\pm 1.2$ &    Welford &      True &       $(10 , 0 )$ \\
\bottomrule
\end{tabular}
  \caption{ ResNet50v2 trained on ImageNet for the various clipping and \uclip{} protocols 
    outlined in~\cref{sec:imagenet}.
    The first column is the median top 1 validation accuracy over 5 seeds
    and the second columns is the min/max range. 
    If the Carry column displays True then the row is \uclip{}, if False
    it is clipping (no carry) and otherwise it is the base setup.
    The final column displays the parameters of the clip region,
    $(a, b)$ is for adaptive clipping while the single number $\gamma$
    is a constant clip region.
    We find these results promising, but not conclusive.
    }
  \label{tab:imagenet-results}
\end{table}

\begin{figure}[H]
  \centering
  \iftwocolumn
    \newcommand\thisfigwidth{0.45\textwidth}
    \else
    \newcommand\thisfigwidth{0.75\textwidth}
    \fi
  \includegraphics[width=\thisfigwidth]{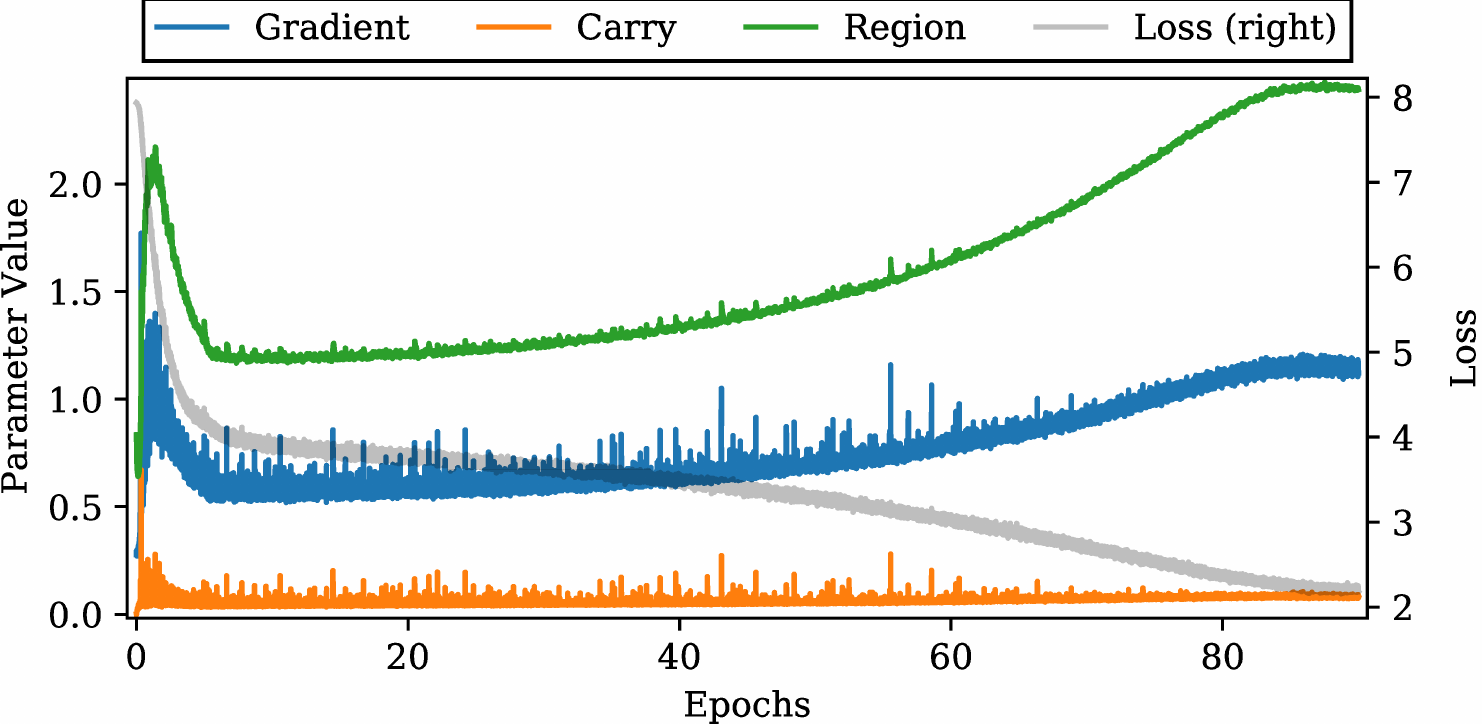}
  \caption{Euclidean norm of gradient, carry buffer and parameter-wise clip region (left)
    and training loss (right) throughout training for ResNet50v2 on ImageNet with U-Clip
    and EWMA clip region with parameters $(a, b) = (1, 2)$.
    See~\cref{sec:imagenet} for details.}
  \label{fig:imagenet-trajectory}
\end{figure}

\section{Conclusion}\label{sec:conclusion}

We have presented \uclip{}, a novel method for achieving on-average
unbiased stochastic gradient clipping. 
\uclip{} has desirable theoretical properties and solid empirical validation.
The simplicity of \uclip{} allows for solid theoretical analysis using elementary techniques,
which we see as a key strength.
In addition, our initial investigations find empirical performance that 
is competitive with standard methods, warranting further study.
We hope that the reader is motivated to experiment with and extend \uclip{}.
We give suggestions for future work in~\cref{sec:future-work}.

\ifack
\section*{Acknowledgements}
This work was performed while BE was an intern at DeepMind.
We would like to thank, in alphabetical order,
the following people for helpful conversations and support: 
Andr\'{a}s Gy\"{o}rgy, Charline Le Lan,
Claire Vernade,
Denizalp Goktas,
Flore Sentenac,
Ilja Kuzborskij,
Laurent Orseau,
Mihaela Rosca,
Sahra Ghalebikesabi
and Sam Smith,
and especially Ilja for valuable feedback on drafts.
\fi

\ifarxiv
    \printbibliography
    \else
    \bibliography{references}
    \bibliographystyle{icml2023}
\newpage
    \appendix
    \onecolumn
    \twocolumnfalse
\fi
\mainpaperfalse

\section{Suggestions for Future Work}\label{sec:future-work}

\begin{itemize} 
    \item Extension of theory to unbounded gradients, possibly with
        heavy-tailed noise which could be particularly relevant in deep
        learning~\citep{simsekli2019tail}.  
    \item Explore the relationship between \uclip{},~\cref{lemma:gradient-transformation}
        and Follow the Regularized Leader with delayed feedback~\citep{joulani2016delay}.
    \item Extension of~\cref{lemma:gradient-transformation} to time varying learning
            rate and tuning the learning rate to get a scale free bound, possibly using
            techniques from~\citet{orseau2021isotuning}.
            In addition, one could adapt~\cref{lemma:gradient-transformation} to
            achieve convergence to a stationary point on non-convex functions.
            Our results could also be generalized to the minibatch setting.
    \item Generalize carry bounds to online (not necessarily stochastic) setting.
        One could use the historical sample mean/variance of gradients 
        as analogues for the statistical quantities
        $\gbar$ and $\sigma^2$, e.g.~$\gbar_t = \frac1t \sum_{i=1}^tg_i$.
    \item Investigate relationship of \uclip{} to Follow the Regularized Leader with
        delayed feedback~\citep{joulani2016delay}.
    \item Tuning of the
            experiments to give optimal performance and experiments on large
            language models, where clipping is particularly relevant.
    \item \uclip{} in alternative application domains, such as reward clipping in
            reinforcement learning.
    \item Exploration, both theoretical and empirical, of alternative
            clipping schemes.  For instance \emph{symmetric clipping}, where
            the gradient is clipped to some interval (e.g.~$\pm 2$ standard
        deviations) centred on an estimate of $\gbar$.
    \item Exploration of \emph{U-Sign}, in which the $\clip$ function is replaced
        $\sign$ element-wise. This gives the update equations (c.f.~\cref{eq:deltarec} and surrounding
        description)
        \begin{align*}
            u_t &= \sign(g_t + \Delta_t) \\
           \Delta_{t+1}&= \Delta_t + g_t - u_t.
        \end{align*}
        While clipping helps with the exploding gradient problem, U-Sign would also
        help prevent vanishing updates. Like \uclip{}, it may be on-average unbiased.
\end{itemize}

\section{Additional Theoretical Results}

\subsection{Convergence Result}
Recall that WLOG we are free to take $\gamma_t \ge G$ element-wise.

In~\cref{thm:uclip-convergence}, $\gamma_t$ is a scalar specifying the same
clip region for each coordinate. Per-coordinate clip regions are possible by
applying~\cref{prop:carry-bounded} element-wise in the proof.

\begin{theorem}[\uclip{} convergence: component-wise clipping]\label{thm:uclip-convergence}
  Consider \uclip{} with learning rate $\eta = \frac{1}{\sqrt{T}}$
  and clipping regions $G \ge \gamma_t > \abs{\gbar_t}$ component-wise.
  The expected regret is bounded by
  \begin{align*}
      \E[R_T]
    &\le 
    \frac{8 G^4 d^{3/2}\bgamma}{\alpha^3\sqrt{T}}
    + \frac{4(G + \gamma_+)\sqrt{d}\bgamma + d\tgamma + \norm{x_1 - x_*}^2 }{2\sqrt{T}}
    +  \left(\frac{4dG^4}{\alpha^3} + G + \gamma_+\right)\frac{\norm{x_1 - x_*}}{T}
  \end{align*}
  where $x_* = x_*(T) = \argmin_{x\in\X}\sum_{t=1}^T f_t(x)$,
  $\alpha = \inf_t(\gamma_t - \abs{\gbar_t})$,
    $\bgamma = \frac1T \sum_{t=1}^T \gamma_t$ and
    $\tgamma = \frac1T \sum_{t=1}^T \gamma_t^2$.
\end{theorem}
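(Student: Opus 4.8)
The plan is to obtain this bound as a direct composition of the two results already proved: the expectation branch (part~1) of \cref{lemma:gradient-transformation}, which converts an almost-sure update bound together with an expected-carry bound into a regret bound, and \cref{prop:carry-bounded}, which supplies the carry bound for component-wise clipping. Indeed, \uclip{} combined with SGD is precisely the update rule $x_{t+1} = x_t - \eta u_t$, $\Delta_{t+1} = \Delta_t + g_t - u_t$ assumed in \cref{lemma:gradient-transformation}, so essentially the entire proof consists of feeding the lemma its two inputs --- a bound $\norm{u_t} \le \Gamma_t$ and a bound $\E[\norm{\Delta_t}] \le D_t$ --- and then substituting $\eta = 1/\sqrt{T}$ together with the definitions of $\bgamma$ and $\tgamma$.

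First I would record the update bound. Because $u_t = \clip(g_t + \Delta_t, \gamma_t)$ is applied coordinate-wise, every component of $u_t$ lies in $[-\gamma_t, \gamma_t]$, whence $\norm{u_t} \le \sqrt{d}\,\gamma_t =: \Gamma_t$. This choice makes the two empirical-average quantities in \cref{lemma:gradient-transformation} collapse to $\frac1T\sum_{t=1}^T \Gamma_t = \sqrt{d}\,\bgamma$ and $\frac1T\sum_{t=1}^T \Gamma_t^2 = d\,\tgamma$, which already explains the $\sqrt{d}\,\bgamma$ and $d\,\tgamma$ combinations appearing in the statement.

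Next I would produce $D_{T+1}$. The key structural point is that component-wise clipping decouples the coordinates: the recursion for the $j$-th carry $\Delta_t^{(j)}$ depends only on the $j$-th coordinates $g_t^{(j)}$, $\gbar_t^{(j)}$ and $\xi_t^{(j)}$, all of which satisfy the scalar hypotheses of \cref{prop:carry-bounded} (bounded by $G$, sub-Gaussian noise, $\gamma_t > \abs{\gbar_t^{(j)}}$) with the common constant $\alpha = \inf_t(\gamma_t - \abs{\gbar_t})$ interpreted as an infimum over coordinates as well as time. Applying \cref{prop:carry-bounded} element-wise and using $c_\alpha \le 2G^2\alpha^{-2}$ gives $\E[\abs{\Delta_t^{(j)}}] \le \frac{4G^4}{\alpha^3} + G + \gamma_+$ for every $j$. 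I would then lift these scalar bounds to the Euclidean norm through $\norm{\Delta_t} \le \sum_{j=1}^d \abs{\Delta_t^{(j)}}$, so that $D_{T+1} = d\bigl(\frac{4G^4}{\alpha^3} + G + \gamma_+\bigr)$; this is exactly the step that injects the dimension dependence and, after multiplying by the $\sqrt{d}\,\bgamma$ coming from $\Gamma_t$, produces the leading term $\frac{8G^4 d^{3/2}\bgamma}{\alpha^3\sqrt{T}}$.

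Substituting $\Gamma_t$, $D_{T+1}$ and $\eta = 1/\sqrt{T}$ into the four terms of \cref{lemma:gradient-transformation} and collecting yields the claimed bound, and the final $O$-rate follows from $\bgamma \le \gamma_+ \le G$ and $\tgamma \le G^2$. I do not anticipate a substantive obstacle: once the lemma and proposition are available this theorem is effectively a corollary, and the only genuine care required is in the norm lift --- keeping track of the factors of $d$ and $\sqrt{d}$ and confirming that the per-coordinate carry recursions are mutually independent enough for \cref{prop:carry-bounded} to be applied coordinate-by-coordinate with a single shared $\alpha$. The special case $\gamma_t \equiv \gamma$ then recovers \cref{thm:uclip-convergence-simple}.
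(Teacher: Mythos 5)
Your overall route is exactly the paper's: treat the theorem as a corollary of part~1 of \cref{lemma:gradient-transformation} with $\Gamma_t = \sqrt{d}\,\gamma_t$, obtain $D_{T+1}$ by applying \cref{prop:carry-bounded} coordinate-wise, and substitute $\eta = 1/\sqrt{T}$. The one place you diverge is the norm lift, and it matters for the constants: you pass through $\norm{\Delta_t} \le \sum_{j}\abs{\Delta_t^{(j)}}$, which gives $D_{T+1} = \tfrac{4dG^4}{\alpha^3} + d(G+\gamma_+)$, whereas the stated bound requires $\Delta_+ = \tfrac{4dG^4}{\alpha^3} + G + \gamma_+$; your version therefore recovers the leading $\tfrac{8G^4 d^{3/2}\bgamma}{\alpha^3\sqrt{T}}$ term but inflates the $(G+\gamma_+)$ contributions in the second and third terms by a factor of $d$. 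The paper instead union-bounds the tail event $\{\norm{\Delta_t}\ge \epsilon + G + \gamma_+\}$ over coordinates and integrates, which keeps $G+\gamma_+$ un-multiplied by $d$ (though, as written, that inclusion is really a max-norm statement, so the paper's own lift quietly absorbs a $\sqrt{d}$). Also, no cross-coordinate independence is needed for either lift --- \cref{prop:carry-bounded} is applied marginally to each coordinate and the bounds are combined by a union bound or by linearity of expectation --- so the caveat at the end of your proposal is moot.
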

\begin{proof}[Proof of~\cref{thm:uclip-convergence}]
  We may apply~\cref{prop:carry-bounded} element-wise to $\Delta_t \in \R^d$.
  For any $\epsilon > 0$
  \[
    \P(\norm{\Delta_t} \ge \epsilon + G + \gamma_+ )
    \le \P(\cup_{i=1}^d \{\abs{(\Delta_{t})_i} \ge  \epsilon + G + \gamma_+\})
    \le 4dG^2 \alpha^{-2} e^{-\frac{\epsilon \alpha}{G^2}}
  \]
  by a union bound and the bound on $c_\alpha$ in the statement
  of~\cref{prop:carry-bounded}.
  The same calculation from the proof of~\cref{prop:carry-bounded} then gives
  \[
    \E[\norm{\Delta_t}] 
    \le \frac{4dG^4}{\alpha^3} + G + \gamma_+
    \eqqcolon \Delta_+
  \]
  We can now apply~\cref{lemma:gradient-transformation} with $\eta = \frac{1}{\sqrt{T}}$
  to bound the the expected regret of \uclip{}. Observe that
  $\frac1T \sum_{t=1}^T\Gamma_t \le \sqrt{d}\frac1T \sum_{t=1}^T \gamma_t$ and 
  $\frac1T \sum_{t=1}^T \Gamma_t^2 \le d\frac1T \sum_{t=1}^T \gamma_t^2$.
    We define $\bgamma = \frac1T \sum_{t=1}^T \gamma_t$ and
    $\tgamma = \frac1T \sum_{t=1}^T \gamma_t^2$.
    With this, item~\ref{point:gtlemma-1} 
  of~\cref{lemma:gradient-transformation} becomes
  \[
    \E\left[\frac1T\sum_{t=1}^T (f_t(x_t) - f_t(x_*))\right]
    \le
    \frac{2\Delta_+ \sqrt{d}\bgamma}{\sqrt{T}}
    + \frac{d\tgamma}{2\sqrt{T}}
    + \frac{\norm{x_1 - x_*}^2}{2\sqrt{T}}
    + \frac{\Delta_+}{T}\norm{x_1 - x_*}
  \]
  which, upon substituting in $\Delta_+$ and using the definition of $R_T$, becomes
  \[
    \E[R_T]
    \le
    \frac{8 G^4 d^{3/2}\bgamma}{\alpha^3\sqrt{T}}
    + \frac{4(G + \gamma_+)\sqrt{d}\bgamma + d\tgamma + \norm{x_1 - x_*}^2 }{2\sqrt{T}}
    +  \left(\frac{4dG^4}{\alpha^3} + G + \gamma_+\right)\frac{\norm{x_1 - x_*}}{T}.
  \]
\end{proof}

\subsection{Omitted Proofs}
We restate the results for the reader's convenience.
\subsection{Proof of~\cref{lemma:gradient-transformation}}\label{sec:proof-gradient-transformation}
\begin{lemma*}[{\cref{lemma:gradient-transformation}}]
    \gtlemma{}
\end{lemma*}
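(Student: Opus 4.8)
The plan is to run the standard online (sub)gradient descent potential argument on $\frac12\norm{x_t - x_*}^2$, and to absorb the discrepancy between the update $u_t$ and the subgradient $g_t$ through the carry via summation by parts. First I would expand the potential: from $x_{t+1} = x_t - \eta u_t$ one gets the identity $u_t^\top(x_t - x_*) = \frac{1}{2\eta}\left(\norm{x_t - x_*}^2 - \norm{x_{t+1} - x_*}^2\right) + \frac{\eta}{2}\norm{u_t}^2$. Summing over $t$, telescoping, dropping the nonpositive $-\frac{1}{2\eta}\norm{x_{T+1} - x_*}^2$ and using $\norm{u_t} \le \Gamma_t$ gives $\sum_t u_t^\top(x_t - x_*) \le \frac{1}{2\eta}\norm{x_1 - x_*}^2 + \frac{\eta}{2}\sum_t \Gamma_t^2$.

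Next I would invoke convexity: since $\gbar_t = \E[g_t \vert x_t] \in \partial f_t(x_t)$, we have $f_t(x_t) - f_t(x_*) \le \gbar_t^\top(x_t - x_*)$, so the regret is controlled by $\sum_t \gbar_t^\top(x_t - x_*)$. Using $g_t = u_t + (\Delta_{t+1} - \Delta_t)$ from the carry recursion, I decompose $\sum_t g_t^\top(x_t - x_*) = \sum_t u_t^\top(x_t - x_*) + E$, where $E \coloneqq \sum_t (\Delta_{t+1} - \Delta_t)^\top(x_t - x_*)$ is the carry-error term. The heart of the argument is handling $E$ by summation by parts: using $\Delta_1 = 0$ and $x_{t-1} - x_t = \eta u_{t-1}$ one obtains $E = \Delta_{T+1}^\top(x_T - x_*) + \eta\sum_{t=2}^T \Delta_t^\top u_{t-1}$. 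I would then bound the cross terms by Cauchy--Schwarz, $\Delta_t^\top u_{t-1} \le \norm{\Delta_t}\Gamma_{t-1}$, and crucially expand $x_T - x_* = (x_1 - x_*) - \eta\sum_{s=1}^{T-1}u_s$ so that $\Delta_{T+1}^\top(x_T - x_*) \le \norm{\Delta_{T+1}}\norm{x_1 - x_*} + \eta\norm{\Delta_{T+1}}\sum_{s=1}^{T-1}\Gamma_s$.

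For part 1 I would take expectations, use the tower property and $\E[g_t^\top(x_t - x_*)\vert x_t] = \gbar_t^\top(x_t - x_*)$ to replace $\gbar_t$ by $g_t$, then apply Jensen ($\norm{\E\Delta_{T+1}} \le \E\norm{\Delta_{T+1}}$) together with the hypothesis $\E\norm{\Delta_t} \le D_t \le D_{T+1}$ to the cross terms, and divide by $T$. The two copies of $\eta D_{T+1}\sum_t \Gamma_t$ (one from $\sum_t \Delta_t^\top u_{t-1}$, one from $\eta\Delta_{T+1}^\top\sum_s u_s$) combine into the leading $\frac{2\eta D_{T+1}}{T}\sum_t \Gamma_t$ term, and collecting the remaining pieces gives the stated bound. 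For part 2 I cannot pass to $\gbar_t$ in expectation, so I keep $\sum_t g_t^\top(x_t - x_*) = \sum_t \gbar_t^\top(x_t - x_*) + \sum_t \xi_t^\top(x_t - x_*)$. The second sum is a martingale whose increments, conditioned on $x_t$, are sub-Gaussian with variance proxy $\le \sigma^2\norm{x_t - x_*}^2 \le \sigma^2 R^2$; an Azuma--Hoeffding-type tail bound controls it with probability $1-\delta$, producing the final $\frac{\sqrt{2}R\sigma}{\sqrt{T}}\log(2/\delta)$ term, and a union bound with the event $\{\norm{\Delta_t}\le D_t\}$ (probability $1-\delta$), on which the cross terms are bounded deterministically, yields the claim with probability $1-2\delta$.

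The hard part, I expect, is the summation-by-parts rearrangement of $E$ and the bookkeeping of its cross terms: in particular, recognizing that expanding $x_T - x_*$ back to the initialization $x_1 - x_*$ is precisely what produces both the $\frac{2\eta D_{T+1}}{T}\sum_t\Gamma_t$ and the $\frac{D_{T+1}}{T}\norm{x_1 - x_*}$ contributions. Everything else is the routine online gradient descent telescoping and a standard sub-Gaussian martingale concentration inequality.
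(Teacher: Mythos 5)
Your proposal is correct and follows essentially the same route as the paper's proof: the online-gradient-descent potential identity for $u_t^\top(x_t-x_*)$, the decomposition $g_t - u_t = \Delta_{t+1}-\Delta_t$ handled by Abel summation and Cauchy--Schwarz (your grouping via $x_T - x_*$ versus the paper's via $x_{T+1}-x_*$ yields the same two $\eta D_{T+1}\sum_t\Gamma_t$ terms plus $D_{T+1}\norm{x_1-x_*}$), and the conditionally sub-Gaussian martingale bound with a union bound for part 2. No gaps.
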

\begin{proof}[Proof of~\cref{lemma:gradient-transformation}]
  We start by using the subgradient inequality following from the convexity of
  the $f_t$ and then use the parallelogram rule for inner products
  \begin{align*}
      f_t(x_t) - f_t(x_*) 
      &\le \gbar_t ^\top ( x_t - x_*) \\
      &= 
      g_t ^\top ( x_t - x_*) 
      - \xi_t^\top(x_t - x_*) \\
      &= 
      (g_t - u_t)^\top (x_t - x_*)
      + u_t^\top (x_t - x_*)
      - \xi_t^\top(x_t - x_*) \\
      &=  (g_t - u_t)^\top (x_t - x_*) 
      + \frac{1}{\eta}(x_t - x_{t+1})^\top (x_t - x_*) 
      - \xi_t^\top(x_t - x_*) \\
      &=  (g_t - u_t)^\top (x_t - x_*) 
      + \frac{1}{2\eta}
      \left(
      \norm{x_{t+1} - x_t}^2  + \norm{x_t - x_*}^2 - \norm{x_{t+1} - x_*}^2
      \right)
      - \xi_t^\top(x_t - x_*) \\
      &=  (g_t - u_t)^\top (x_t - x_*) 
      + \frac{1}{2\eta}
      \left( \norm{x_{t} - x_*}^2 - \norm{x_{t+1} - x_*}^2  \right) 
      + \frac{\eta}{2}\norm{u_t}^2
      - \xi_t^\top(x_t - x_*). 
  \end{align*}
  Then, also by convexity and using the assumption on the updates,
  \begin{align*}
    &\frac1T \sum_{t=1}^T (f_t(x_t) - f_t(x_*))\\
    &\le  
    \frac1T \sum_{t=1}^T (g_t - u_t)^\top (x_t - x_*) 
    + \frac{1}{2\eta T}\norm{x_{1} - x_*}^2 
    - \frac{1}{2\eta T}\norm{x_{T+1} - x_*}
    + \frac{\eta}{2T}\sum_{t=1}^T \norm{u_t}^2 
    - \frac1T\sum_{t=1}^T \xi_t^\top(x_t - x_*)\\
    &\le  
    \frac1T \sum_{t=1}^T (g_t - u_t)^\top (x_t - x_*) 
    + \frac{1}{2\eta T}\norm{x_{1} - x_*}^2 
    + \frac{\eta }{2T}\sum_{t=1}^T \Gamma_t^2
    - \frac1T\sum_{t=1}^T \xi_t^\top(x_t - x_*)
    \label{eq:star}\tag{$\star$}
  \end{align*}
  almost surely.
  Using $\Delta_1 = 0$ and the update rule we consider the first term,
  \begin{align*}
    \frac1T \sum_{t=1}^T (g_t - u_t)^\top (x_t - x_*) 
    &= \frac1T \sum_{t=1}^T(\Delta_{t+1} - \Delta_t) ^\top (x_t - x_*) \\
    &= 
    \frac1T \sum_{t=1}^T(\Delta_{t+1} - \Delta_t) ^\top x_t
    - \frac1T \sum_{t=1}^T(\Delta_{t+1} - \Delta_t) ^\top x_*
    \\
    &= \frac1T \sum_{t=1}^T(\Delta_{t+1} - \Delta_t) ^\top x_t
    - \frac1T \Delta_{T+1}^\top x_* \\
    &= \frac1T \sum_{t=1}^T\Delta_{t+1}^\top (x_t - x_{t+1})
    + \frac1T \Delta_{T+1} x_{T+1} 
    - \frac1T \Delta_{T+1}^\top x_* \\
    &= \frac{\eta}{T} \sum_{t=1}^T \Delta_{t+1}^\top u_t
    + \frac1T \Delta_{T+1}^\top (x_{T+1} - x_*) \\
    &= 
    \frac{\eta}{T} \sum_{t=1}^T \Delta_{t+1}^\top u_t
    + \frac{\eta}{T}\sum_{t=1}^T\Delta_{T+1}^\top u_t
    + \frac1T \Delta_{T+1}^\top(x_1 - x_*)\\
    &\le 
    \frac{\eta}{T} \sum_{t=1}^T \norm{\Delta_{t+1}} \Gamma_t
    + \frac{\eta}{T}\sum_{t=1}^T\norm{\Delta_{T+1}} \Gamma_t
    + \frac1T \norm{\Delta_{T+1}}\norm{x_1 - x_*}
  \end{align*}
  almost surely.
  Putting this back into~\cref{eq:star} gives
  \begin{align*}
    \frac1T \sum_{t=1}^T (f_t(x_t) - f_t(x_*))
    &\le  
    \frac{\eta}{T} \sum_{t=1}^T \norm{\Delta_{t+1}} \Gamma_t
    + \frac{\eta}{T}\sum_{t=1}^T\norm{\Delta_{T+1}} \Gamma_t
    + \frac1T \norm{\Delta_{T+1}}\norm{x_1 - x_*}\\
    &\phantom{\le} + \frac{1}{2\eta T}\norm{x_{1} - x_*}^2 
    + \frac{\eta }{2T}\sum_{t=1}^T \Gamma_t^2
    - \frac1T\sum_{t=1}^T \xi_t^\top(x_t - x_*). \label{eq:spade}\tag{$\spadesuit$}
  \end{align*}
  We complete the proof by specializing~\cref{eq:spade} to the two cases in the statement.

  First~\ref{point:gtlemma-1}.
  The final term in~\cref{eq:spade} has mean 0, since 
  by the law of total expectation
  $\E[\xi^\top(x_t - x_*)] = \E[\E[\xi\vert{}x_t]^\top(x_t -x_*)]=0$.
  So taking expectations and using the condition $\E[\norm{\Delta_t}] \le D_t$
  along with $D_t \ge D_{t-1}$ gives the result.

  Now item~\ref{point:gtlemma-2}. With probability at least $1-\delta$,
  $\norm{\Delta_t} \le D_t$. Along with $D_t$ non-decreasing this gives,
  from~\cref{eq:spade}
  \begin{align*}
    \frac1T \sum_{t=1}^T (f_t(x_t) - f_t(x_*))
    &\le  
    \frac{2\eta D_{T+1}}{T} \sum_{t=1}^T \Gamma_t
    + \frac{D_{T+1}}{T}\norm{x_1 - x_*}
    + \frac{1}{2\eta T}\norm{x_{1} - x_*}^2 
    + \frac{\eta }{2T}\sum_{t=1}^T \Gamma_t^2\\
    &\phantom{\le} - \frac1T\sum_{t=1}^T \xi_t^\top(x_t - x_*). 
  \end{align*}
  It remains only to control the final term. We have
  $\E[\sum_{t=1}^T \xi_t^\top (x_t - x_*)\vert{} x_1, \dots, x_T] = 0$
  and that the quantity $z_T = \frac1T \sum_{t=1}^T \xi_t^\top (x_t - x_*)$ is
  sub-Gaussian with variance proxy $\frac{\sigma^2R^2}{T}$: for any $a$
  \begin{align*}
    &\E\left[\exp\left(a \frac1T \sum_{t=1}^T \xi_t^\top (x_t - x_*)\right)\right]\\
    &= \E\left[\prod_{t=1}^T\exp\left(a \frac1T \xi_t^\top (x_t - x_*)\right)\right] \\ 
    &= \E\left[
      \E\left[\prod_{t=1}^T\exp\left(a \frac1T \xi_t^\top (x_t - x_*)\right)
    \midvert x_1, \dots, x_T\right] 
    \right] \quad \text{tower property}\\ 
    &= \E\left[
    \prod_{t=1}^T
      \E\left[\exp\left(a \frac1T \xi_t^\top (x_t - x_*)\right)
    \midvert x_1, \dots, x_T\right] 
    \right] \quad \text{conditional independence}\\ 
    &\le \E\left[
    \prod_{t=1}^T
      \E\left[
        \exp\left(\frac{\sigma^2 a^2}{2T^2} \norm{x_t - x_*}^2\right)
    \midvert x_1, \dots, x_T\right] 
    \right] \quad \text{conditionally sub-Gaussian}\\ 
    &\le \E\left[
    \prod_{t=1}^T
      \E\left[
        \exp\left(\frac{\sigma^2 a^2}{2T^2} R^2 \right)
    \midvert x_1, \dots, x_T\right] 
  \right] \quad \text{bounded iterates}\\ 
  &= \exp\left(\frac{a^2 \sigma^2 R^2}{2T}\right).
  \end{align*}
  The standard tail bound (Hoeffding) then gives
  \[
    \P(\abs{z_T} \le \epsilon) \le 2 e^{-\frac{\epsilon^2 T}{2R^2 \sigma^2}}.
  \]
  The result is immediate from a simple calculation and a union bound.
\end{proof}

\subsection{Proof of~\cref{prop:carry-bounded}}\label{sec:proof-carry-bounded}
\begin{proposition*}[{\cref{prop:carry-bounded}}]
    \propcarrybounded{}
\end{proposition*}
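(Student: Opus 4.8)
The plan is to exploit the fact that whenever the carry is large the clip is saturated, so the carry experiences a strictly negative drift and a large value of $\abs{\Delta_t}$ can only arise from an unusually favourable run of noise. First I would pin down the threshold: if $\Delta_r \ge G + \gamma_+$ then $g_r + \Delta_r \ge \Delta_r - G \ge \gamma_+ \ge \gamma_r$, so the positive clip is active, $u_r = \gamma_r$, and we get the exact recursion $\Delta_{r+1} = \Delta_r + (\gbar_r - \gamma_r) + \xi_r$ with drift $\gbar_r - \gamma_r \le -\alpha$ (using $\gamma_r \ge \abs{\gbar_r} + \alpha$). An analogous statement holds on the negative side, so by symmetry it suffices to bound $\P(\Delta_t \ge \epsilon + G + \gamma_+)$ and double the result.

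Next I would set up a ``last return'' decomposition. On the event $\{\Delta_t \ge \epsilon + G + \gamma_+\}$, let $s$ be the largest index $< t$ with $\Delta_s \le G + \gamma_+$; this is well defined since $\Delta_1 = 0$. By maximality $\Delta_r > G + \gamma_+$ for every $s < r \le t$, so the saturated recursion applies on each step $r = s+1, \dots, t-1$. A short case analysis of $\Delta_{s+1} = (g_s + \Delta_s) - \clip(g_s + \Delta_s, \gamma_s)$ shows that, because $\Delta_{s+1} > 0$ on this event, the positive clip must also have been active at step $s$, so the recursion in fact holds from $r = s$ onward. Telescoping then gives
\[
    \Delta_t = \Delta_s + \sum_{r=s}^{t-1}(\gbar_r - \gamma_r) + \sum_{r=s}^{t-1}\xi_r
    \le (G + \gamma_+) - j\alpha + \sum_{r=t-j}^{t-1}\xi_r,
\]
where $j = t - s$. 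Hence the event forces $\sum_{r=t-j}^{t-1}\xi_r \ge \epsilon + j\alpha$, and since the admissible values of $s$ (equivalently $j \in \{1, \dots, t-1\}$) index disjoint sub-events, a union bound yields $\P(\Delta_t \ge \epsilon + G + \gamma_+) \le \sum_{j=1}^{t-1}\P(\sum_{r=t-j}^{t-1}\xi_r \ge \epsilon + j\alpha)$.

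To finish I would control each term with a sub-Gaussian tail bound. Each $\xi_r \mid x_r$ lies in a range of width $2G$, hence is sub-Gaussian with variance proxy $G^2$ by Hoeffding's lemma; conditional independence of the $\xi_r$ together with the tower property (exactly as in the proof of~\cref{lemma:gradient-transformation}) makes the $j$-term sum sub-Gaussian with proxy $jG^2$, giving $\P(\sum_{r=t-j}^{t-1}\xi_r \ge \epsilon + j\alpha) \le \exp(-(\epsilon + j\alpha)^2/(2jG^2))$. Expanding the square and using $\epsilon^2/(2jG^2) \ge \epsilon^2/(2tG^2)$ for $j \le t$ factors out the target exponent $\frac{\epsilon^2}{2G^2 t} + \frac{\alpha\epsilon}{G^2}$ and leaves the geometric series $\sum_{j\ge 1}\exp(-j\alpha^2/(2G^2)) = c_\alpha$, producing the stated one-sided bound; symmetrising supplies the factor $2$. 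The expectation bound then follows from $\E[\abs{\Delta_t}] = \int_0^\infty \P(\abs{\Delta_t}\ge u)\,du$, bounding the integrand by $1$ on $[0, G+\gamma_+]$ and by $2c_\alpha \exp(-\alpha\epsilon/G^2)$ at $u = \epsilon + G + \gamma_+$ (dropping the helpful $\epsilon^2$ term), and integrating the exponential to get $2c_\alpha G^2/\alpha$.

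The main obstacle is the backward-tracing decomposition: verifying that the saturated recursion genuinely holds on \emph{every} step of the excursion, including the initial step $s$ (which requires the small case analysis of the clip at $s$), and confirming that distinct excursion lengths $j$ index disjoint sub-events so the union bound is legitimate. The remaining work — Hoeffding's lemma for the variance proxy, the crude estimate $c_\alpha \le 2G^2\alpha^{-2}$ from $e^x - 1 \ge x$, the geometric summation matching $c_\alpha$, and the tail integral for the mean — is routine once that structure is in place.
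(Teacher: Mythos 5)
Your proposal is correct and follows essentially the same route as the paper's proof: a last-return decomposition over the excursion length $j$ (the paper's events $K_i$), the case analysis showing the positive clip must saturate on every step of the excursion including the initial one, Hoeffding's inequality via conditioning on the history, the geometric series yielding $c_\alpha$, and the tail integral for the expectation. The only cosmetic difference is that you define the return time relative to the fixed threshold $G + \gamma_+$ while the paper uses the time-varying threshold $G + \gamma_j$; both are valid and give the same bound.
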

\begin{proof}[Proof of~\cref{prop:carry-bounded}]
  For any time $t$ and for $i\in [0, t-2]$, we define the events
  \[
    K_i = \left\{
      \bigwedge\limits_{j=i+1}^{t -1} (\Delta_{j} > G + \gamma_j)
    \right\}
  \cap \{\Delta_i \le G + \gamma_i \}
  \]
  with $K_{t-1} = \{\Delta_{t-1} \le G + \gamma_{t-1} \}$.
  Note that $\{\Delta_t \ge \epsilon + G + \gamma_+\} \subset \cup_{i=0}^{t-1} K_i$.
  The update rule is
  \[
    \Delta_{i+1} = \Delta_i + g_i - \clip(\Delta_i + g_i, \gamma_i).
  \]
  If $K_i$ happens then $\Delta_{i+1} > G + \gamma_{i+1} > 0$.
  There is now only one possibility for $u_i$.
  In particular, if $\clip(\Delta_i + g_i, \gamma_i) = \Delta_i + g_i$ then 
  $\Delta_{i+1} = 0$, and similarly if $u_i = -\gamma_i$
  then $\Delta_i + g_i \le -\gamma_i$ so $\Delta_{i+1} < 0 $, hence
  we must instead have
  $
    \Delta_{i+1} = \Delta_i + g_i - \gamma_i
    $.
  Using this and the same idea iteratively, then using the expression for the gradients,
  it follows that $K_i$ implies
  \[
    \Delta_{t} = 
    \Delta_i - \sum_{j=i}^{t-1}(\gamma_j - \gbar_j) + \sum_{j=i}^{t-1} \xi_j 
  \]
  So
  \begin{align*}
    \P(\Delta_{t} \ge \epsilon + G + \gamma_+\cap{} K_i)
    &= \P\left(
    \Delta_i - \sum_{j=i}^{t-1}(\gamma_j - \gbar_j) + \sum_{j=i}^{t-1} \xi_j 
    \ge \epsilon + G + \gamma_+
    \;\bigcap\; 
    K_i\right) \\
    &\le 
    \P\left(\sum_{j=i}^{t-1} \xi_j \ge \epsilon + 
    \sum_{j=i}^{t-1}(\gamma_j - \gbar_j) 
    \;\bigcap\; 
    K_i\right) \\
    &\le 
    \P\left(\sum_{j=i}^{t-1} \xi_j \ge \epsilon + 
    \sum_{j=i}^{t-1}(\gamma_j - \gbar_j) 
    \right) \\
    &\le \P\left(\sum_{j=i}^{t-1} \xi_j \ge \epsilon + (t - i)\alpha \right).
  \end{align*}
  By conditioning on the history $x_1, \dots, x_T$ we may exploit
  independence and use Hoeffding's inequality with $\abs{\xi_i} \le G$ to
  get 
  \begin{align*}
    \P\left(\sum_{j=i}^{t-1} \xi_j \ge \epsilon + (t - i)\alpha \right)
    &= \E\left[
      \P\left(\sum_{j=i}^{t-1} \xi_j \ge \epsilon + (t - i)\alpha 
      \midvert{}
      x_1, \dots, x_T
        \right)
    \right]\\
    &\le \E\left[
    \exp\left(-\frac{2(\epsilon + (t-i)\alpha)^2}{4G^2(t-i)}\right)
    \right]\\
  &= \exp\left(-\frac{2(\epsilon + (t-i)\alpha)^2}{4G^2(t-i)}\right).
  \end{align*}
  Then
  \begin{align*}
    \P(\Delta_t \ge \epsilon + G + \gamma_+)
    &= \sum_{i=0}^{t-1} 
    \P(\Delta_t \ge \epsilon + G + \gamma_+ \cap{} K_i)
    \tag{$\dagger$}\label{eq:truncate}\\
    &\le \sum_{i=0}^{t-1} 
    \exp\left(-\frac{(\epsilon + (t-i)\alpha)^2}{2G^2(t-i)}\right)\\
    &= \sum_{j=1}^{t} 
    \exp\left(-\frac{(\epsilon + j\alpha)^2}{2G^2j}\right)\\
    &\le 
    \exp\left(-\frac{\epsilon^2 + 2\alpha\epsilon t}{2G^2t}\right)
    \sum_{j=1}^{t} \exp\left(-\frac{j\alpha^2}{2G^2}\right)\\
    &\le 
    \exp\left(-\frac{\epsilon^2 + 2\alpha\epsilon t}{2G^2t}\right)
    \frac{\exp\left(-\frac{\alpha^2}{2G^2}\right)}{
      1 - \exp\left(-\frac{\alpha^2}{2G^2}\right) } 
  \end{align*}
  Using the same method, one can calculate that
  \[
    \P(\Delta_t \le - \epsilon - G - \gamma_+)
    \le 
    \exp\left(-\frac{\epsilon^2 + 2\alpha\epsilon t}{2G^2t}\right)
    \frac{\exp\left(-\frac{\alpha^2}{2G^2}\right)}{
      1 - \exp\left(-\frac{\alpha^2}{2G^2}\right) } 
  \]
  and the result follows from a union bound.

  To calculate the expectation, we will use the weaker form
  \[
    \P(\abs{\Delta_t} \ge \epsilon + G + \gamma_+) 
    \le 
    2c_\alpha
    \exp\left(-\frac{\alpha\epsilon}{G^2}\right).
  \]
  Then
  \begin{align*}
    \E[\abs{\Delta_t}]
    &= \int_0^\infty \P(\abs{\Delta_t} > \epsilon)\dd \epsilon\\
    &= \int_0^\infty \P(\abs{\Delta_t} > \epsilon + G + \gamma )\dd \epsilon
    + \int_0^{G+\gamma} \P(\abs{\Delta_t} > \epsilon  )\dd \epsilon\\
    &\le  \frac{2c_\alpha G^2}{\alpha} + G + \gamma_+.
  \end{align*}
\end{proof}

\subsection{Proof of~\cref{prop:adaptive-clipping}}\label{sec:proof-adaptive-clipping}
\begin{proposition*}[{\cref{prop:adaptive-clipping}}]
    \propadaptiveclipping{}
\end{proposition*}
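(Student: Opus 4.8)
The plan is to reproduce the proof of~\cref{prop:carry-bounded} almost line for line, changing only the concentration step so that it is driven by the variance proxy $\sigma_j^2$ rather than by the crude envelope $\abs{\xi_j}\le G$. Fixing a time $t$, I would decompose on the last iteration $i$ at which the carry was small, using the events $K_i = \{\bigwedge_{j=i+1}^{t-1}(\Delta_j > G + \gamma_j)\}\cap\{\Delta_i \le G + \gamma_i\}$, so that $\{\Delta_t \ge \epsilon + G + \gamma_+\}\subset\bigcup_{i=0}^{t-1}K_i$. On $K_i$ clipping is forced at every step $j=i,\dots,t-1$ (since $\Delta_j > G+\gamma_j$ and $\abs{g_j}\le G$ imply $g_j+\Delta_j>\gamma_j$), so unrolling the recursion gives $\Delta_t = \Delta_i - \sum_{j=i}^{t-1}(\gamma_j-\gbar_j) + \sum_{j=i}^{t-1}\xi_j$. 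The only structural input from the adaptive region is the per-step drift bound $\gamma_j-\gbar_j \ge \beta\sigma_j^2$, which holds for either sign of $\gbar_j$ because $\gamma_j=\abs{\gbar_j}+\beta\sigma_j^2$. Combined with $\Delta_i\le G+\gamma_+$ on $K_i$, the event $\Delta_t\ge\epsilon+G+\gamma_+$ forces $\sum_{j=i}^{t-1}\xi_j \ge \epsilon + \beta\sum_{j=i}^{t-1}\sigma_j^2$.

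The crucial deviation is the concentration inequality. Writing $S=\sum_{j=i}^{t-1}\sigma_j^2$ and conditioning on the history to exploit conditional independence, the partial sum $\sum_{j=i}^{t-1}\xi_j$ is sub-Gaussian with variance proxy $S$, so the tail bound reads $\P(\sum\xi_j \ge \epsilon+\beta S)\le \exp\bigl(-(\epsilon+\beta S)^2/(2S)\bigr)$. Expanding the square produces the key cancellation $(\epsilon+\beta S)^2/(2S) = \epsilon^2/(2S) + \beta\epsilon + \beta^2 S/2$, in which the drift $\beta S$ and the fluctuation scale $S$ are compared on the same ($\sigma^2$) footing; this is exactly what replaces the $G^2$ of~\cref{prop:carry-bounded} by a quantity governed by the noise level. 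Discarding the $\epsilon^2/(2S)$ term and using $S \ge (t-i)\sigmamin^2$ leaves a factor that decays in $\epsilon$ together with a factor $\exp\bigl(-\beta^2(t-i)\sigmamin^2/2\bigr)$ that decays geometrically in the run length $t-i$.

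Summing $\P(\Delta_t \ge \epsilon+G+\gamma_+\cap K_i)$ over $i$ then amounts to summing a geometric series in the run length $k=t-i$; bounding each term by $1$ gives the truncation at $t$, while summing to infinity gives the convergent bound $\sim \beta^{-2}\sigmamin^{-2}$, and the two combine into the prefactor $m_t=\min\{t,\beta^{-2}\sigmamin^{-2}/2\}$. A symmetric argument controls the lower tail, a union bound merges them, and inverting the resulting tail at level $\delta$ yields the $\log(2m_t/\delta)$ term; substituting $G+\gamma_+ \le G+(1+\beta)G=(2+\beta)G$ gives the stated high-probability bound. The expectation bound then follows exactly as in~\cref{prop:carry-bounded}, by integrating $\E[\abs{\Delta_t}]=\int_0^\infty\P(\abs{\Delta_t}>\epsilon)\dd\epsilon$ and splitting the integral at $\epsilon=(2+\beta)G$.

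I expect the genuinely substantive step to be the concentration matching in the second paragraph: recognising that using sub-Gaussianity at variance proxy $S=\sum\sigma_j^2$, rather than boundedness, lets the drift $\beta S$ and the stochastic fluctuations be weighed on the same scale is precisely what produces the improved dependence on $G$ advertised in the statement. The remaining difficulties are bookkeeping rather than conceptual, chiefly pinning down the constants in the geometric sum so that the $\min\{t,\cdot\}$ truncation emerges cleanly as $m_t$, and confirming that the drift inequality $\gamma_j-\gbar_j\ge\beta\sigma_j^2$ is robust to the sign of $\gbar_j$.
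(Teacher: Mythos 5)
Your proposal is correct and follows essentially the same route as the paper's proof: the same events $K_i$, the same forced-clipping unrolling, the drift bound $\gamma_j-\gbar_j\ge\beta\sigma_j^2$, sub-Gaussian concentration at variance proxy $\sum_j\sigma_j^2$ yielding the $\exp(-\epsilon\beta-\tfrac12(t-i)\beta^2\sigmamin^2)$ tail, the geometric sum giving $m_t$, and the same integration for the expectation. The only detail the paper adds that you omit is the degenerate case $\sum_{j=i}^{t-1}\sigma_j^2=0$ (where $\sum_j\xi_j=0$ almost surely, so the bound still holds), which is a triviality.
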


\begin{proof}[Proof of~\cref{prop:adaptive-clipping}]
  Define $\gamma_+ = \sup_i \gamma_i \le (1 + \beta)G$.
  For any time $t$ and for $i\in [0, t-2]$, we define the events
  \[
    K_i = \left\{
      \bigwedge\limits_{j=i+1}^{t -1} (\Delta_{j} > G + \gamma_j)
    \right\}
  \cap \{\Delta_i \le G + \gamma_i \}
  \]
  with $K_{t-1} = \{\Delta_{t-1} \le G + \gamma_{t-1} \}$.
  Note that $\{\Delta_t \ge \epsilon + G + \gamma_+\} \subset \cup_{i=0}^{t-1} K_i$.
  The update rule is
  \[
    \Delta_{i+1} = \Delta_i + g_i - \clip(\Delta_i + g_i, \gamma_i).
  \]
  If $K_i$ happens then $\Delta_{i+1} > G + \gamma_{i+1} > 0$.
  There is now only one possibility for $u_i$.
  In particular, if $\clip(\Delta_i + g_i, \gamma_i) = \Delta_i + g_i$ then 
  $\Delta_{i+1} = 0$, and similarly if $u_i = -\gamma_i$
  then $\Delta_i + g_i \le -\gamma_i$ so $\Delta_{i+1} < 0 $, hence
  we must instead have
  $
    \Delta_{i+1} = \Delta_i + g_i - \gamma_i
    $.
  Using this and the same idea iteratively, then using the expression for the gradients,
  it follows that $K_i$ implies
  \[
    \Delta_{t} = 
    \Delta_i - \sum_{j=i}^{t-1}(\gamma_j - \gbar_j) + \sum_{j=i}^{t-1} \xi_j 
  \]
  So
  \begin{align*}
    \P(\Delta_{t} \ge \epsilon + G + \gamma_+\cap{} K_i)
    &= \P\left(
    \Delta_i - \sum_{j=i}^{t-1}(\gamma_j - \gbar_j) + \sum_{j=i}^{t-1} \xi_j 
    \ge \epsilon + G + \gamma_+
    \;\bigcap\; 
    K_i\right) \\
    &\le 
    \P\left(\sum_{j=i}^{t-1} \xi_j \ge \epsilon + 
    \sum_{j=i}^{t-1}(\gamma_j - \gbar_j) 
    \;\bigcap\; 
    K_i\right) \\
    &\le 
    \P\left(\sum_{j=i}^{t-1} \xi_j \ge \epsilon + 
    \sum_{j=i}^{t-1}(\gamma_j - \gbar_j) 
    \right) \\
    &\le 
    \P\left(
    \sum_{j=i}^{t-1} \xi_j \ge \epsilon + 
    \beta\sum_{j=i}^{t-1}\sigma_j^2
    \right).
  \end{align*}
  Assume temporarily that $\sum_{j=i}^{t-1}\sigma_j^2 > 0$, then
  by assumption on $\xi_i \vert{} x_i$ and
  Hoeffding's inequality we have 
  \[
    \P\left(\sum_{j=i}^{t-1} \xi_j \ge \epsilon + 
    \beta\sum_{j=i}^{t-1}\sigma_j^2
    \midvert{}
    x_i, \dots, x_{t-1}
    \right)
    \le \exp\left(-\frac{(\epsilon + 
    \beta\sum_{j=i}^{t-1}\sigma_j^2)^2
    }{2 \sum_{j=i}^{t-1}\sigma_j^2
    }\right)
    \le \exp\left(- \epsilon\beta - \frac12(t-i)\beta^2 \sigmamin^2\right)
  \]
  Where $\sigmamin^2 = \inf_i \sigma_i^2$.
  Now note that if $\sum_{j=i}^{t-1}\sigma_j^2 = 0$ then
  $\sum_{j=i}^{t-1} \xi_j =0 $ almost surely and the 
  above result remains true.
  Using $\P(A) = \E[\1\{A\}] = \E[\E[\1\{A\} \vert{} \F]] = \E[\P(A \vert{} \F)]$ 
  for any event $A$ and $\sigma$-field
  $\F$ we integrate out $x_i, \dots, x_{t-1}$ and get
  \begin{align*}
    \P(\Delta_t \ge \epsilon + G + \gamma_+)
    &= \sum_{i=1}^{t-1} 
    \P(\Delta_t \ge \epsilon + G + \gamma_+ \cap{} K_i)\\
    &\le \exp(-\epsilon\beta) \sum_{i=1}^{t-1}\exp( -(t-i)\beta^2 \sigmamin^2/2)\\
    &\le \exp(-\epsilon\beta) \sum_{j=1}^{t-1}\exp( -j\beta^2 \sigmamin^2/2)\\
    &\le \exp(-\epsilon\beta) 
    \min\left\{t, \frac{1}{1 - \exp(\beta^2\sigmamin^2/2)} \right\}
    \\
    &\le \exp(-\epsilon\beta) \min\{t, \beta^{-2}\sigmamin^{-2}/2\} 
  \end{align*}
  Using the same method one can calculate that
  $
    \P(\Delta_t \le - \epsilon - G - \gamma_+)
    \le \exp(-\epsilon\beta) \min\{t, \beta^{-2}\sigmamin^{-2}/2\}
  $.
  Then $\gamma_+ \le (1+\beta)G$ and a union bound gives
  \[
    \P(\abs{\Delta_t} \ge  \epsilon + (2+\beta)G)
    \le
    \P(\abs{\Delta_t} \le  \epsilon + G + \gamma_+)
    \le 2\min\{t, \beta^{-2}\sigmamin^{-2}/2\} \exp(-\epsilon\beta) 
  \]
  from which the first result is immediate. 
  Doing the same integration
  at end of the proof of~\cref{prop:carry-bounded} and using 
  $\min\{t, \beta^{-2}\sigmamin^{-2}/2\}  \le \beta^{-2}\sigmamin^{-2}/2$
  gives 
  \[
    \E[\abs{\Delta_t}] \le \frac{8}{\beta^3\sigmamin^2} + (2+\beta)G.
  \]
\end{proof}

\section{Additional Experimental Details}

\subsection{CIFAR10}\label{sec:cifar-exp-details}
The network has two convolutional layers with 32 and 64 features respectively,
initialized with the LeCun normal initialization, also known as the truncated normal 
initialization~\citep{klambauer2017self}, with initial bias 0.
Each convolutional later has a $3\times3$ kernel and followed by average pooling over
a $2\times2$ window with $2\times2$ strides.
The convolution layers are follows my fully connected layers with features
of width $250, 250, 10$, initialized with Gaussian Glorot 
initialization~\citep{glorot2010understanding} and initial bias of $0.01$.
All of the activations are ReLU.

Each experiment is run for 5 random seeds on a V100 GPU, with learning rates
$\enum{1}{-1}$, $\enum{1}{-2}$, $\enum{1}{-3}$,$\enum{1}{-4}$,
$\enum{1}{-5}$ and on
on batch sizes 1, 5, 10, 50, 100, 500, 1000, 5000, 10000, 15000
for epochs 100, 300, 300, 300, 300, 500, 500, 500, 500, 500 per batch size
(to give enough steps to converge).
An experiment consists of training the model using one of the base optimizers
either alone, with clipping (no carry) or with \uclip{}.
In the two cases where clipping is used, we consider 
component and norm clipping with constant clip region (over time and parameters)
$\gamma = .25, .5, 1, 5, 10$, and adaptive clipping with parameters
$a, b = (2, 0), (10, 0), (1, 1), (1, 2), (1, 3)$ with each of Welford and
EWMA estimation.

\subsection{ImageNet}\label{sec:imagenet-exp-details}
The base setup is the Haiku implementation of ResNet50v2 with batch norm (decay rate 0.9)
on ImageNet. The network is trained to minimize
cross-entropy loss with weight decay ($\enum{1}{-4}$) and label smoothing ($0.1$) 
with Nesterov momentum (decay 0.9) for 90 epochs and a cosine schedule
learning rate with initial learning rate 0.1 and 5 warm-up epochs.
We train on on a 4x4 TPU pod.
Batch size is 128 per device, so 4096 in total.

\section{Parallelization}

In reference to single program, multiple data parallelization of \uclip{} there 
are two options.\footnote{
In the single program, multiple data paradigm, the model is copied across multiple
devices, gradients are computed on each device using different training data
and the gradients are aggregated on a separate device. This allows for a batch size
$\texttt{per\_device\_batch\_size} \times \texttt{num\_devices}$, sidestepping memory
constraints on a single device.}
One could clip the gradients on each device individually then aggregate,
or aggregate the gradients then clip the result.

The benefit of the former is that effect of any spurious/outsized gradients on
any device on the aggregate is limited. For instance, maybe only one example on
one device is causing issues. In this case, only the
examples on the offending device would be affected.
One could perform multiple rounds of gradient computations in this way before updating
the weights, increasing the overall batch size further.

On the other hand, the latter has the benefit that noise/large values can
cancel out ``naturally'', maybe obviating the need to clip at all on that iteration.
One might also prefer this approach since it interferes with the gradient statistics
at the latest possible moment.

\section{Additional Experimental Results: CIFAR10}\label{sec:cifar-additional-results}

\begin{figure}[htpb]
    
    \centering

    \begin{subfigure}[t]{\threefigwidth}
        \centering
  \includegraphics[width=\textwidth]{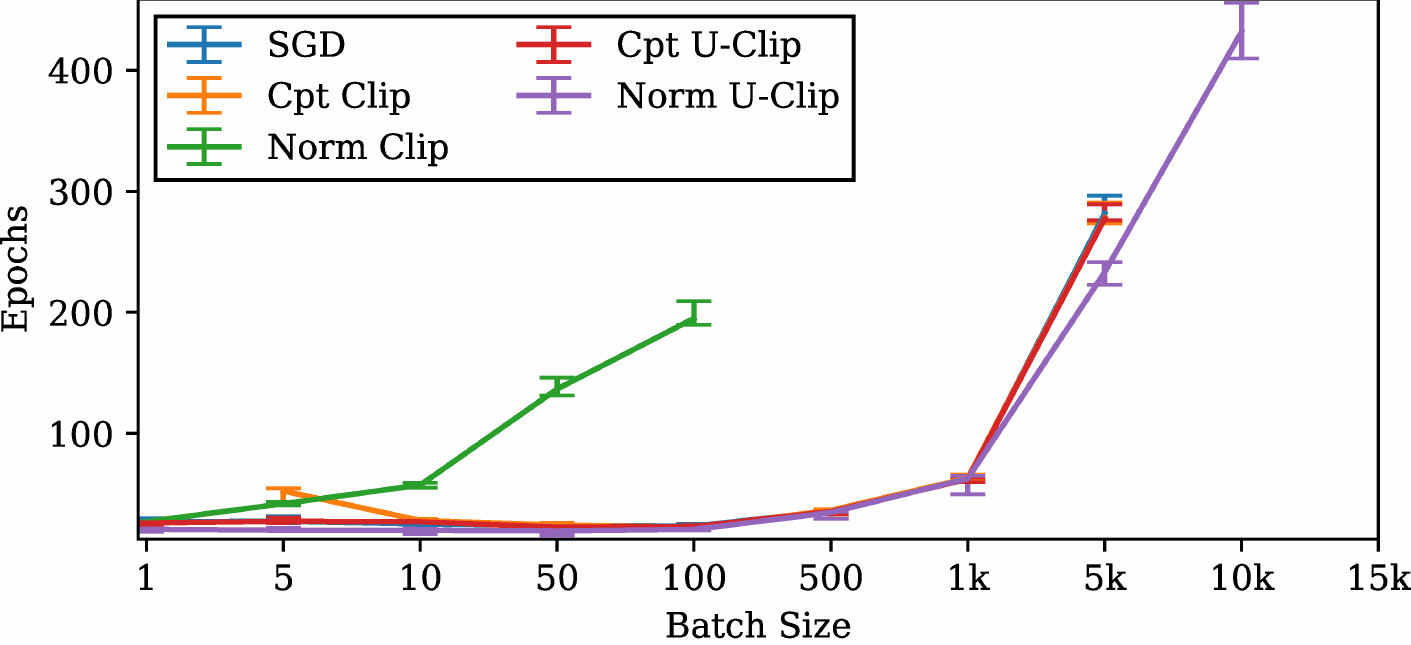}
    \end{subfigure}%
    ~ 
    \begin{subfigure}[t]{\threefigwidth}
        \centering
  \includegraphics[width=\textwidth]{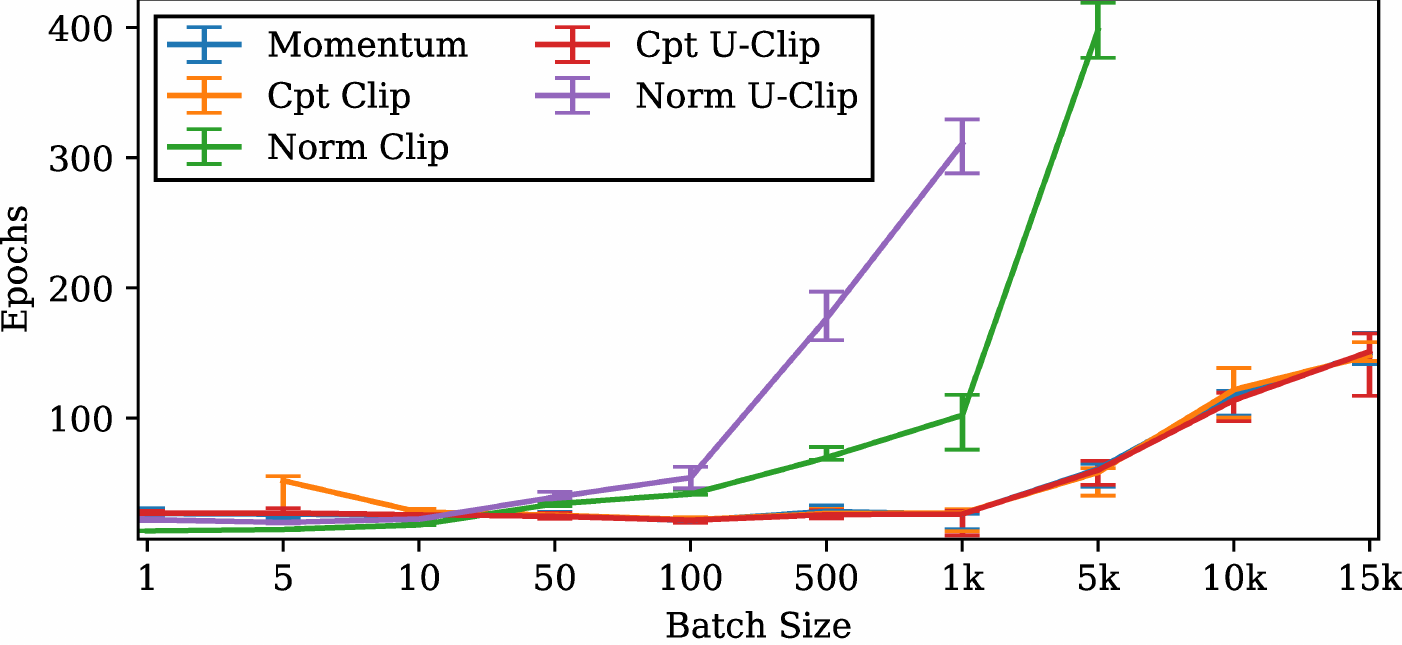}
    \end{subfigure}%
    ~ 
    \begin{subfigure}[t]{\threefigwidth}
        \centering
  \includegraphics[width=\textwidth]{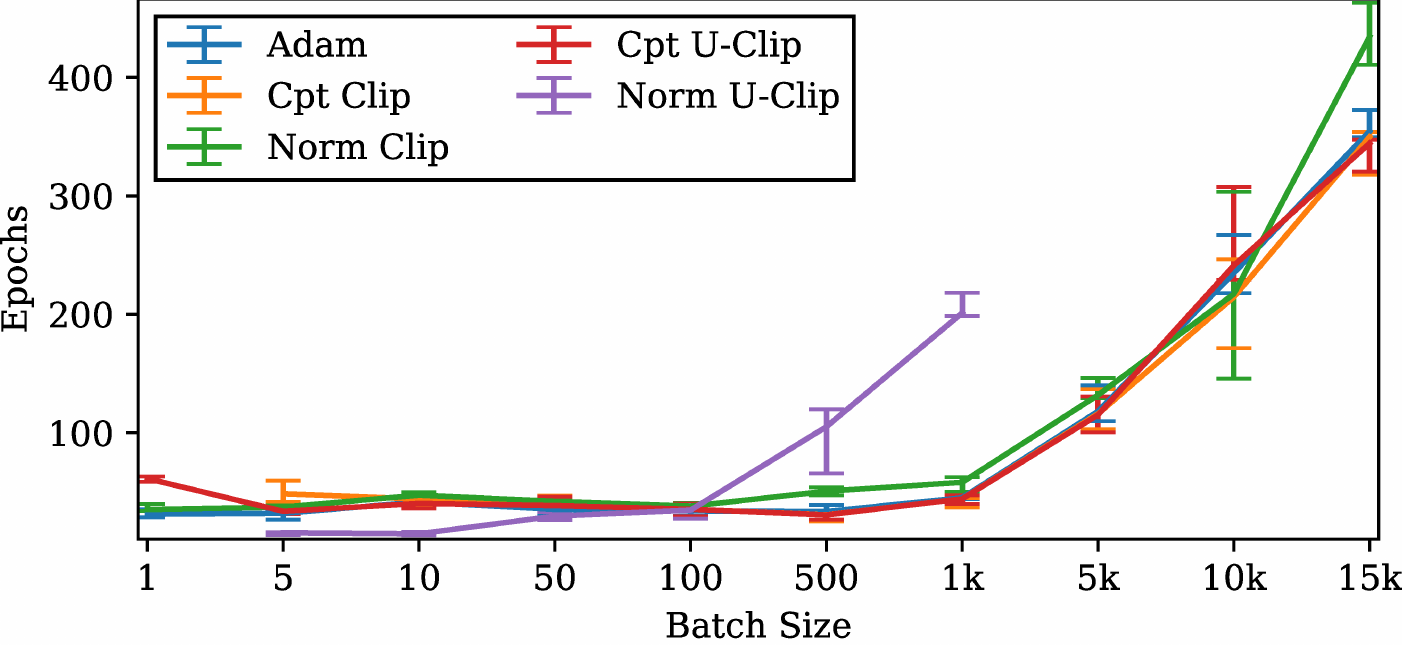}
    \end{subfigure}

    \begin{subfigure}[t]{\threefigwidth}
        \centering
  \includegraphics[width=\textwidth]{images/cifar10-epochs-to-99-sgd-constant-0.5--shortlabel.pdf}
    \end{subfigure}%
    ~ 
    \begin{subfigure}[t]{\threefigwidth}
        \centering
  \includegraphics[width=\textwidth]{images/cifar10-epochs-to-99-momentum-constant-0.5--shortlabel.pdf}
    \end{subfigure}%
    ~ 
    \begin{subfigure}[t]{\threefigwidth}
        \centering
  \includegraphics[width=\textwidth]{images/cifar10-epochs-to-99-adam-constant-0.5--shortlabel.pdf}
    \end{subfigure}

    \begin{subfigure}[t]{\threefigwidth}
        \centering
  \includegraphics[width=\textwidth]{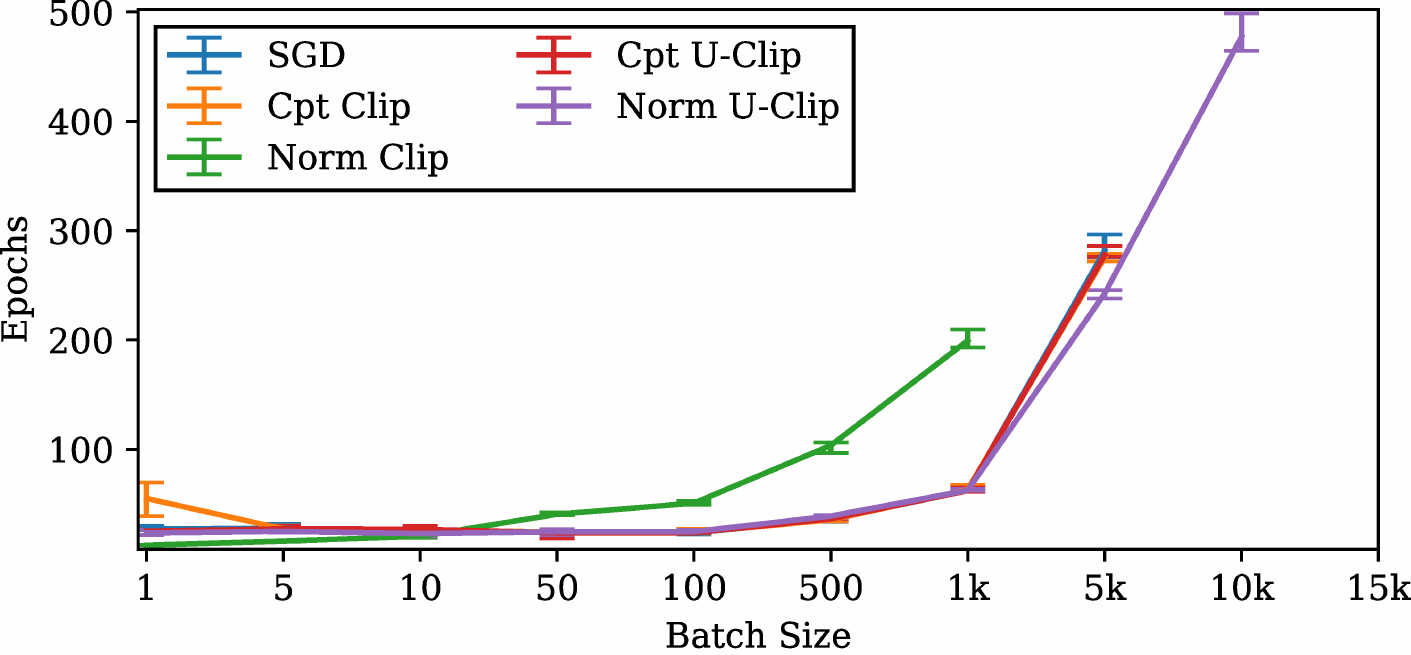}
    \end{subfigure}%
    ~ 
    \begin{subfigure}[t]{\threefigwidth}
        \centering
  \includegraphics[width=\textwidth]{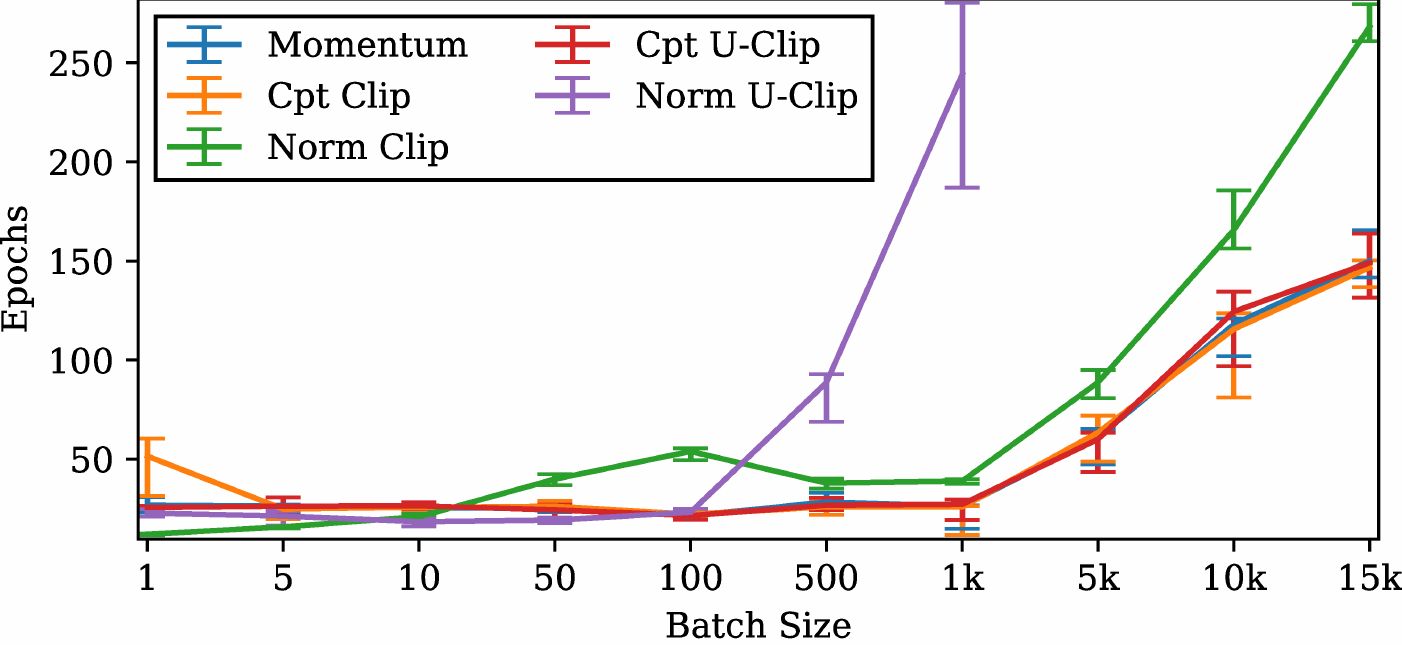}
    \end{subfigure}%
    ~ 
    \begin{subfigure}[t]{\threefigwidth}
        \centering
  \includegraphics[width=\textwidth]{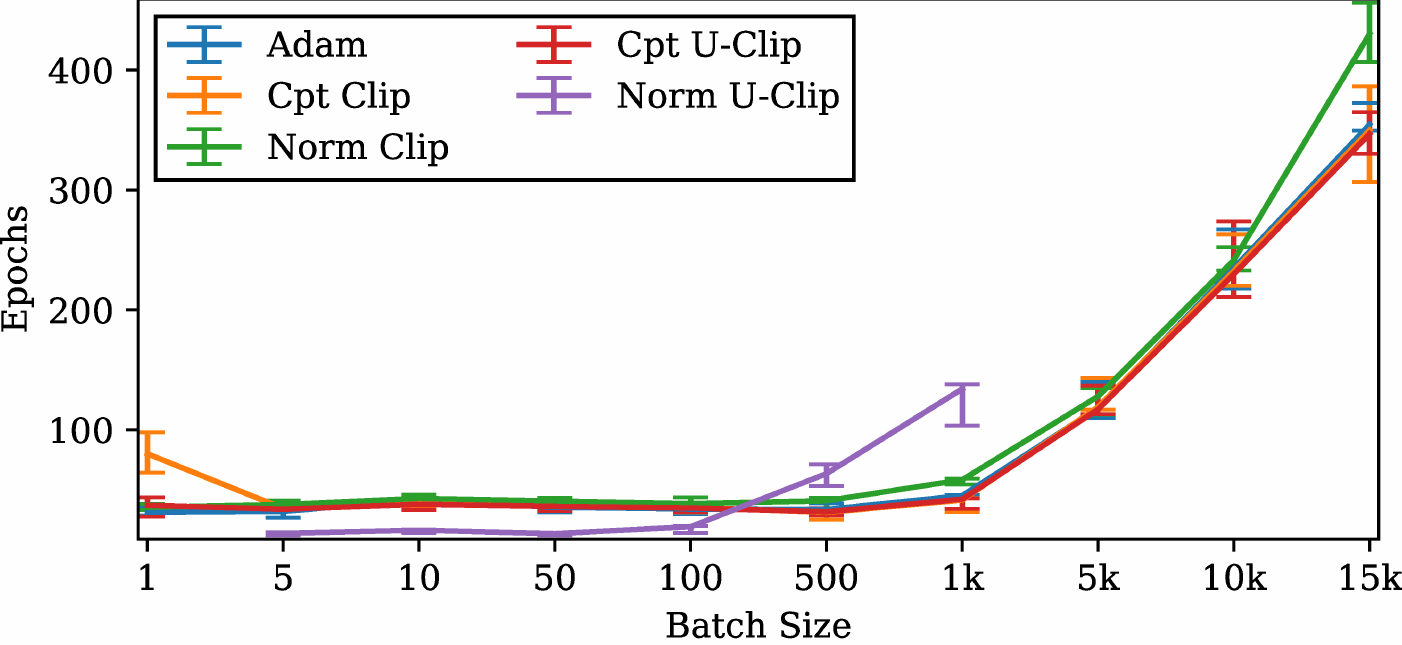}
    \end{subfigure}

    \begin{subfigure}[t]{\threefigwidth}
        \centering
  \includegraphics[width=\textwidth]{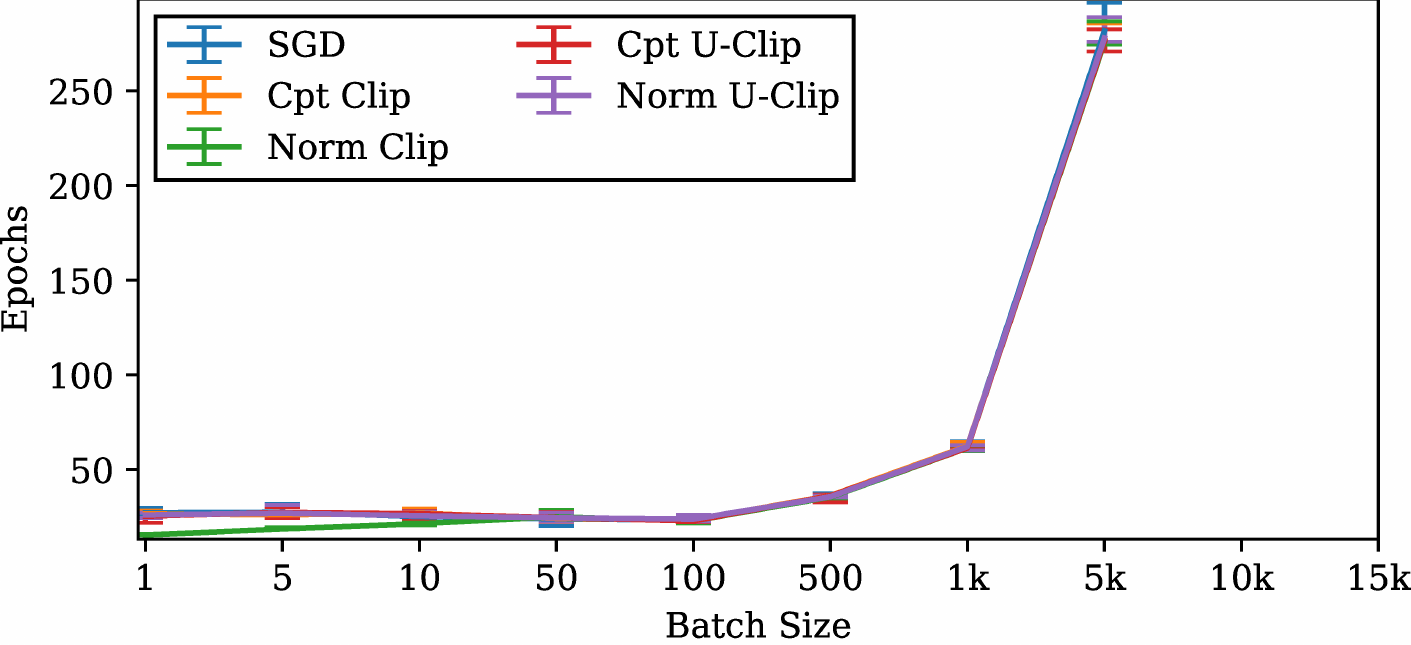}
    \end{subfigure}%
    ~ 
    \begin{subfigure}[t]{\threefigwidth}
        \centering
  \includegraphics[width=\textwidth]{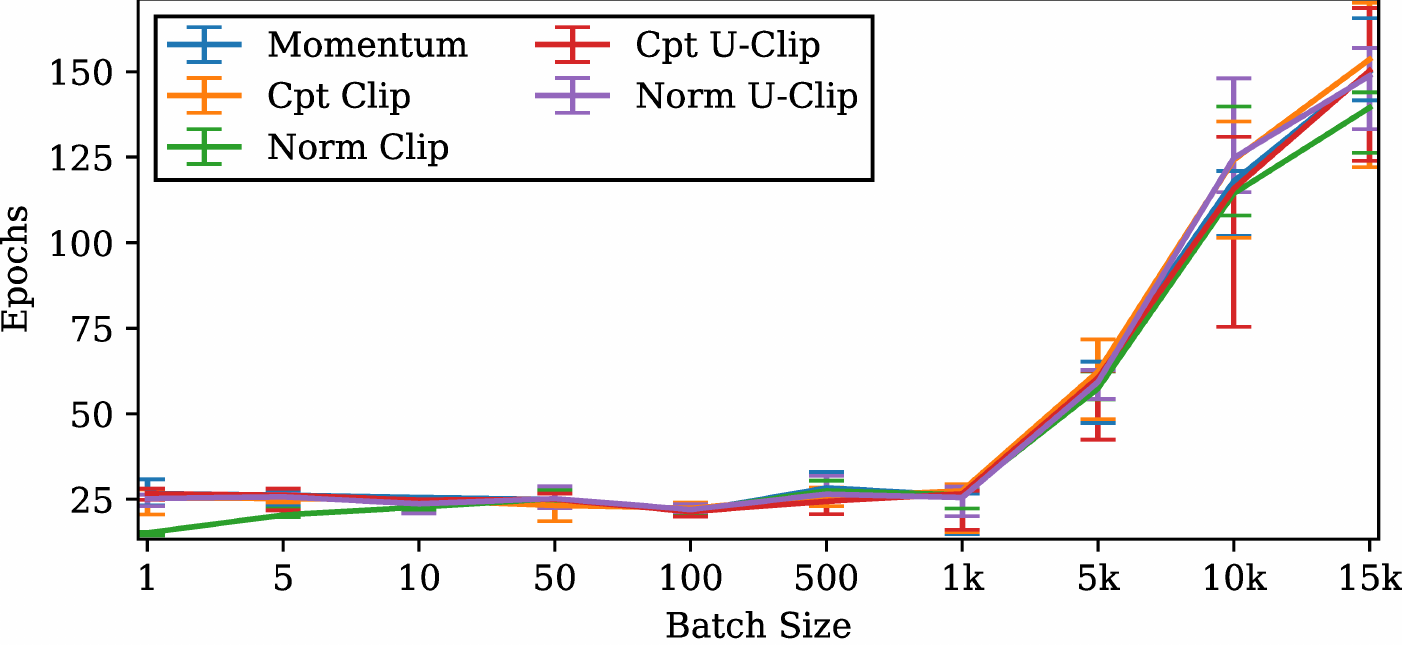}
    \end{subfigure}%
    ~ 
    \begin{subfigure}[t]{\threefigwidth}
        \centering
  \includegraphics[width=\textwidth]{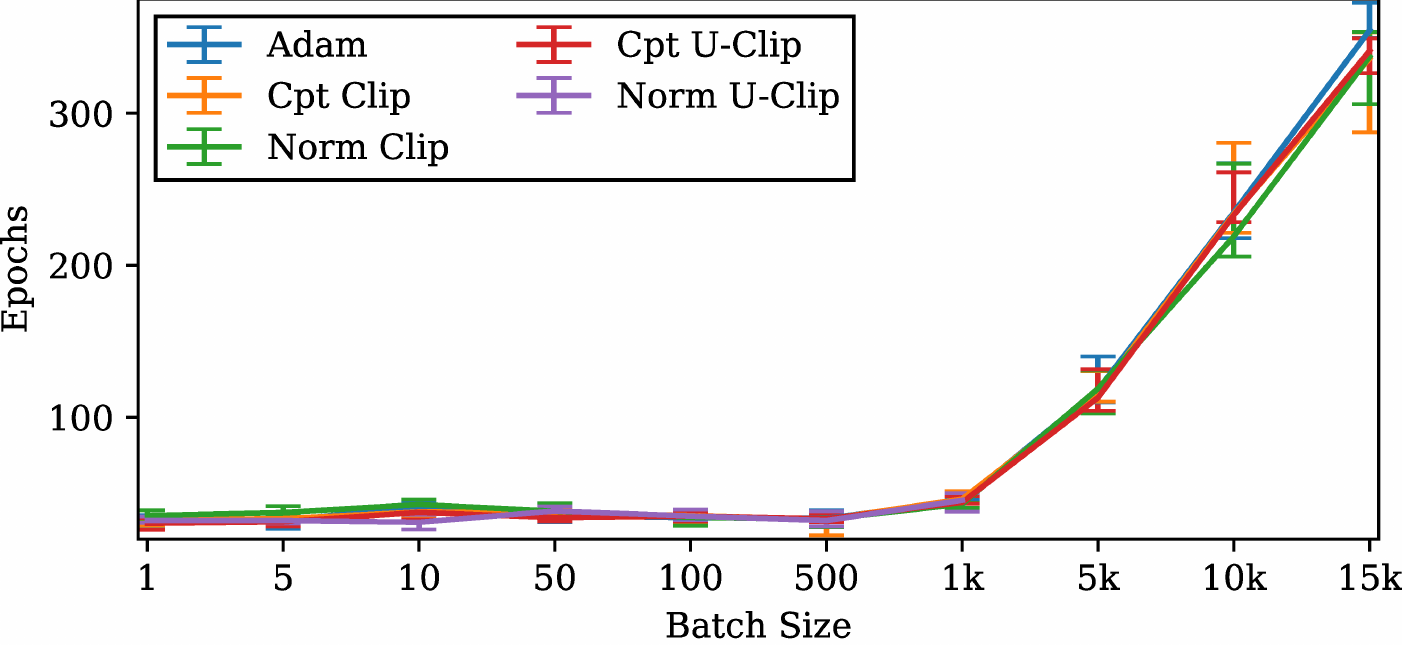}
    \end{subfigure}

    \begin{subfigure}[t]{\threefigwidth}
        \centering
  \includegraphics[width=\textwidth]{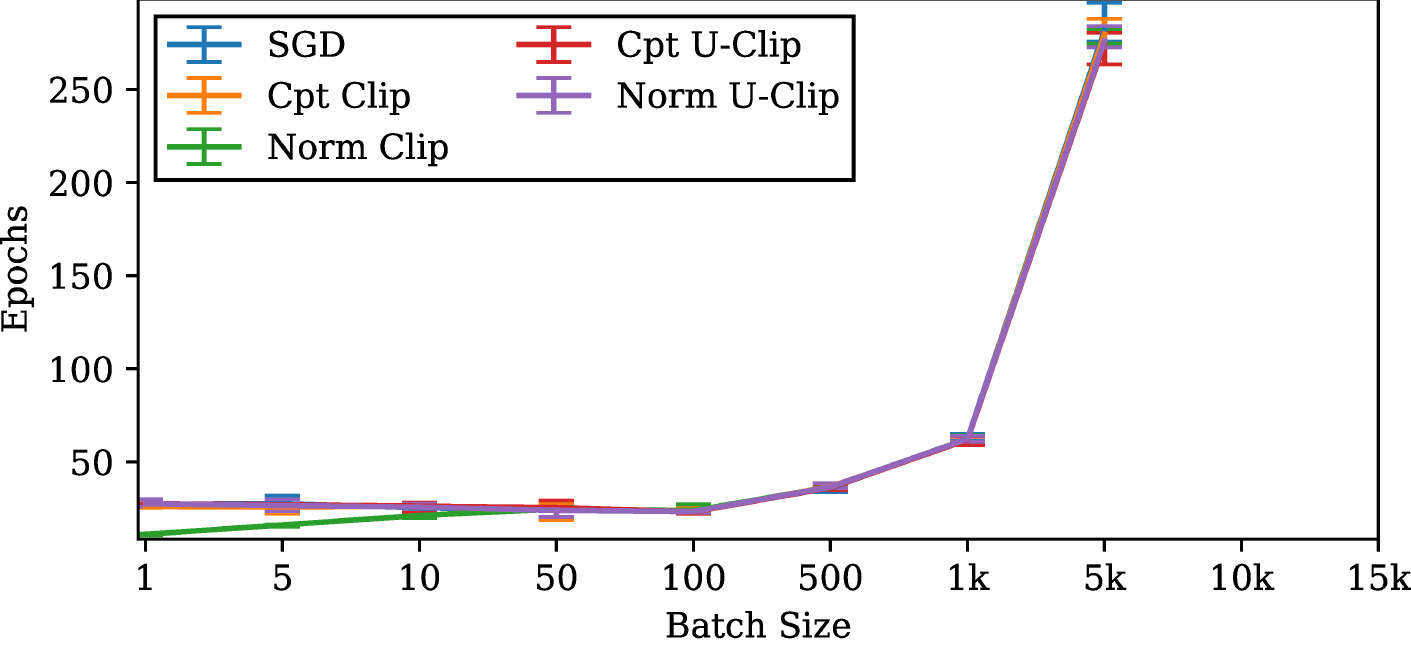}
    \end{subfigure}%
    ~ 
    \begin{subfigure}[t]{\threefigwidth}
        \centering
  \includegraphics[width=\textwidth]{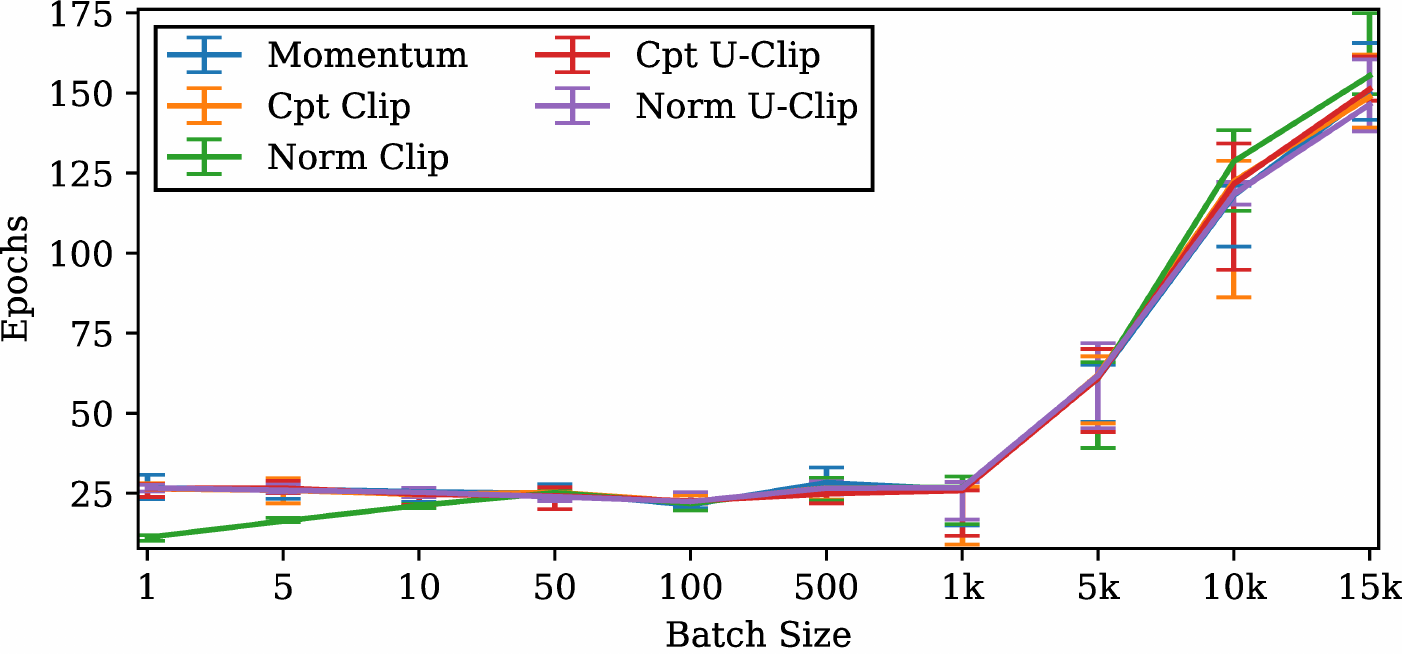}
    \end{subfigure}%
    ~ 
    \begin{subfigure}[t]{\threefigwidth}
        \centering
        \includegraphics[width=\textwidth]{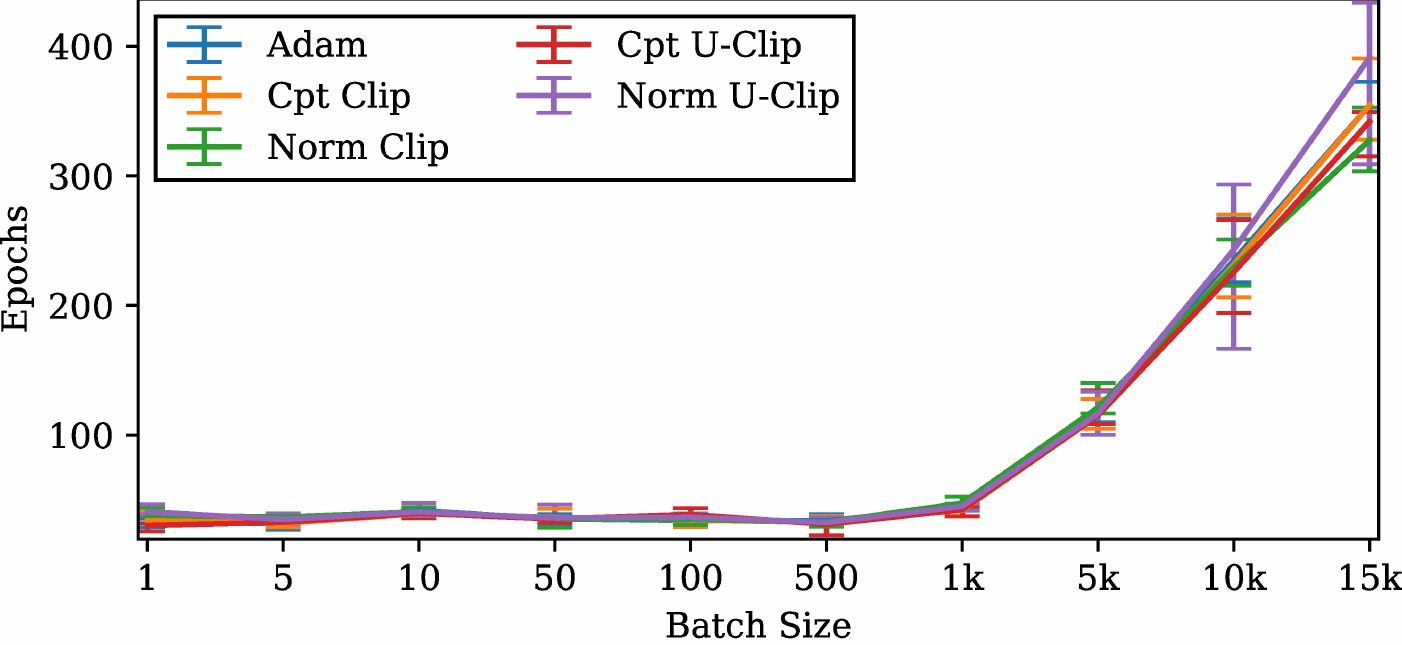}
    \end{subfigure}

    \caption{
      CIFAR10 results for constant clip regions with base optimizers.
      From left to right the base optimizer is SGD, momentum, Adam.
      From top to bottom the clip region is
      $\gamma = 0.25, 0.5, 1, 5, 10$.
      For small batches we see a benefit from norm \uclip{} with small $\gamma$.
      The same chart zoomed in on smaller batch sizes see~\cref{fig:cifar10-constant-short}.
    }
    \label{fig:cifar10-constant}
\end{figure}

\begin{figure}[htpb]
    \centering

    \begin{subfigure}[t]{\threefigwidth}
        \centering
  \includegraphics[width=\textwidth]{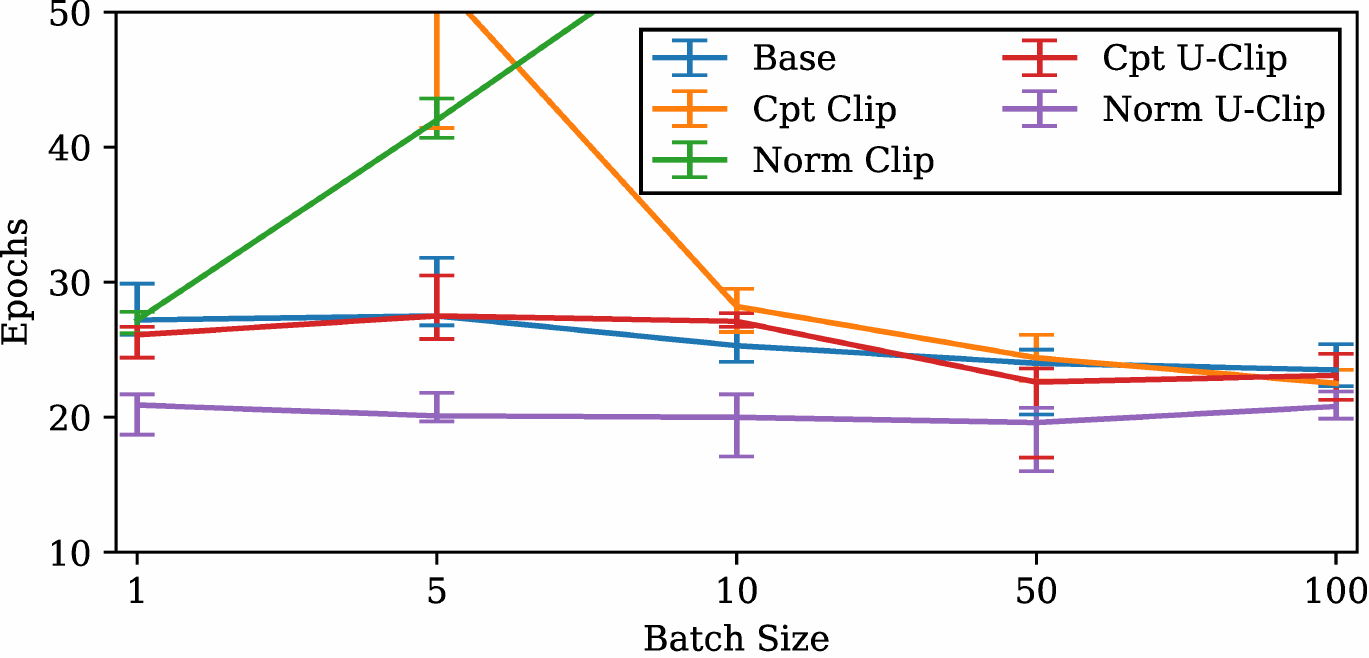}
    \end{subfigure}%
    ~ 
    \begin{subfigure}[t]{\threefigwidth}
        \centering
  \includegraphics[width=\textwidth]{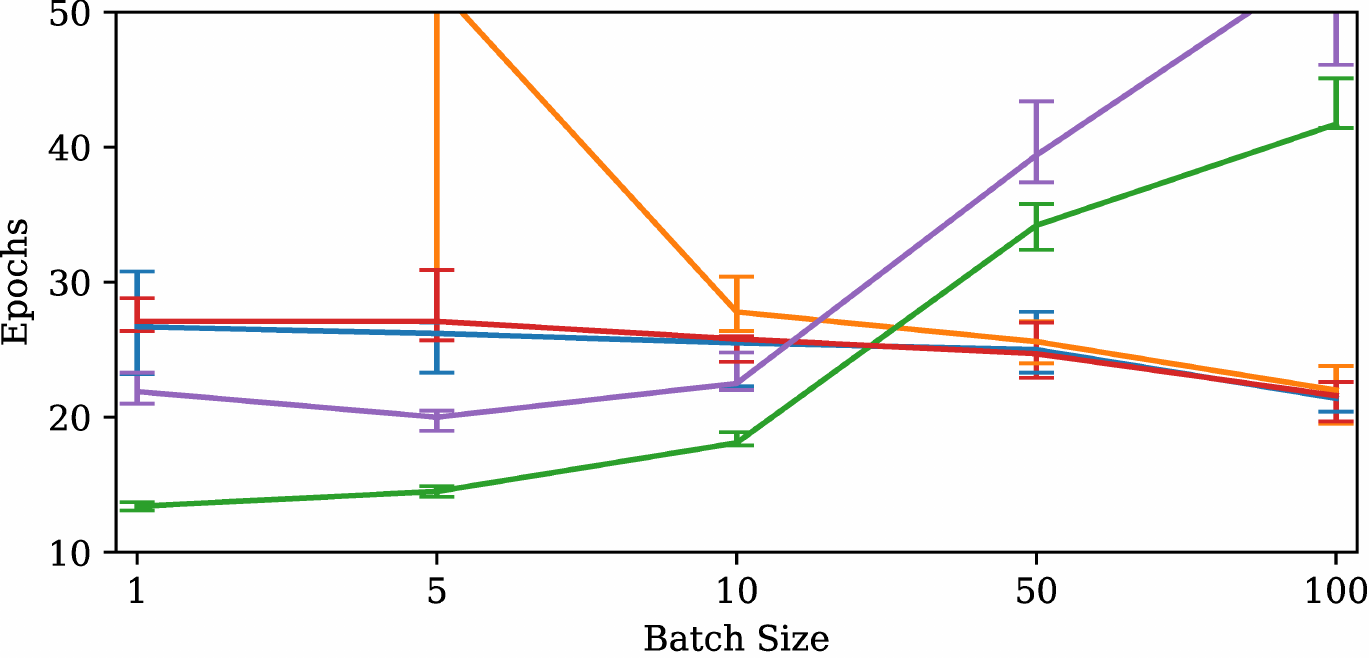}
    \end{subfigure}%
    ~ 
    \begin{subfigure}[t]{\threefigwidth}
        \centering
  \includegraphics[width=\textwidth]{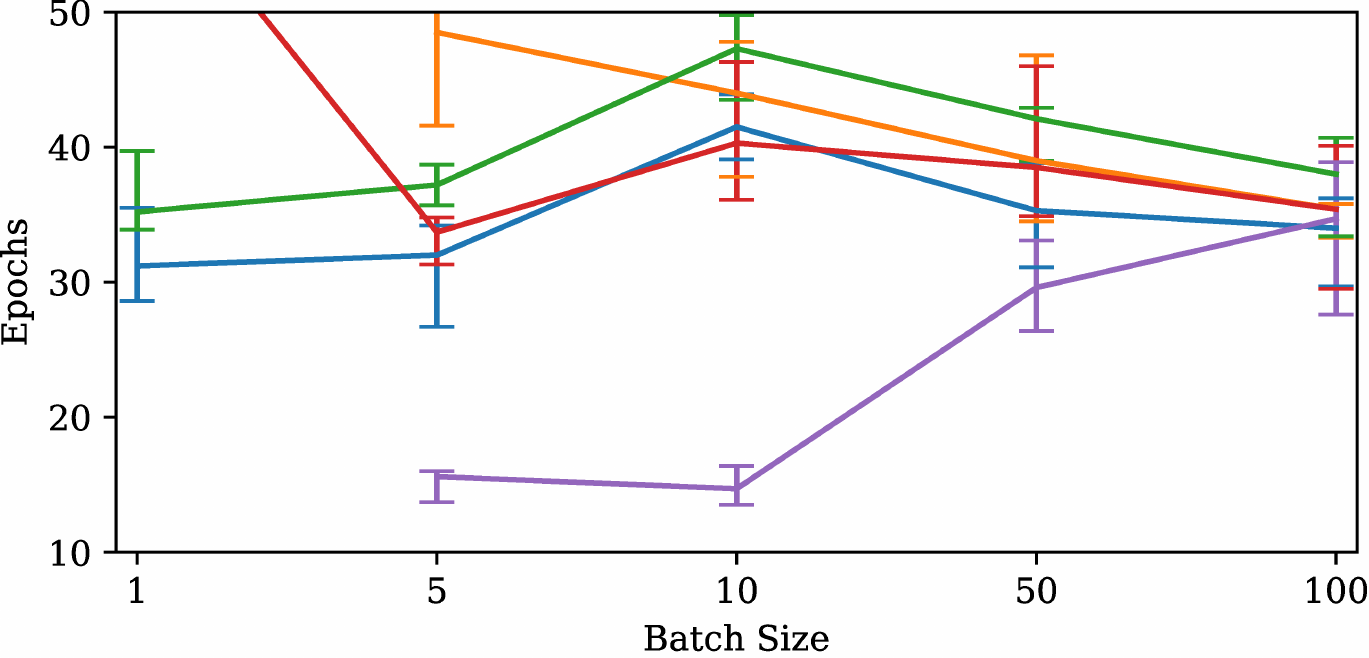}
    \end{subfigure}

    \begin{subfigure}[t]{\threefigwidth}
        \centering
  \includegraphics[width=\textwidth]{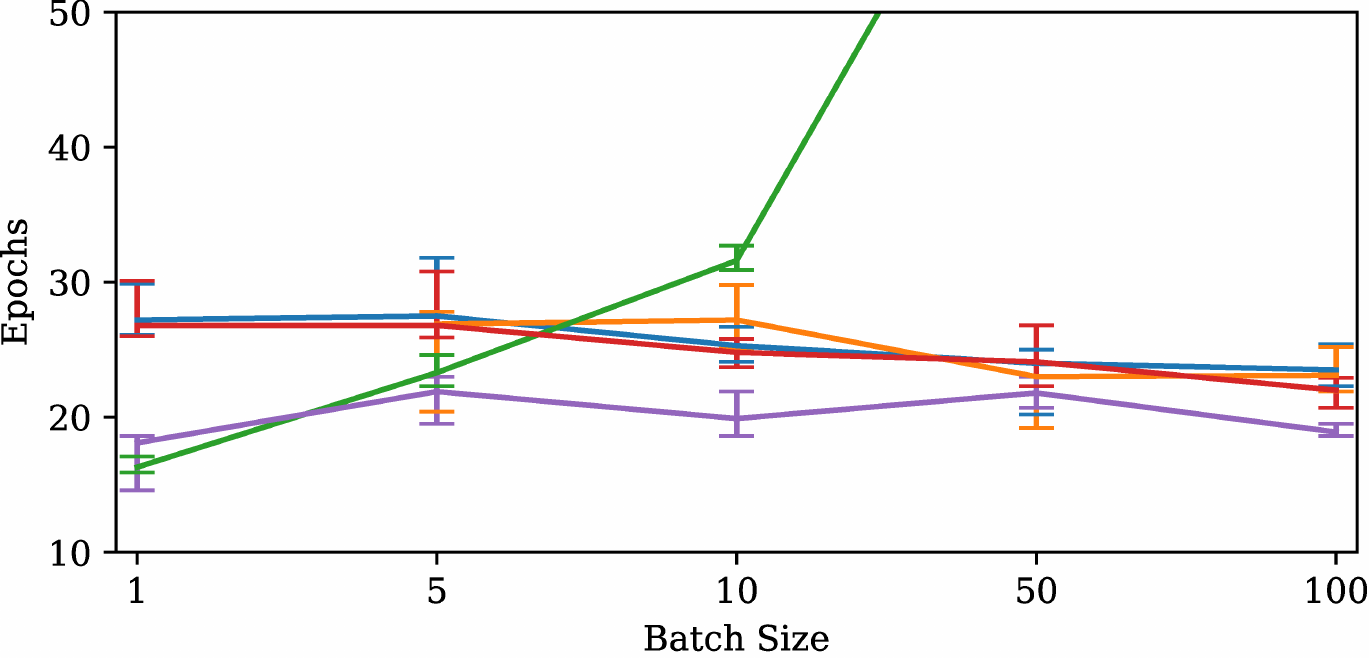}
    \end{subfigure}%
    ~ 
    \begin{subfigure}[t]{\threefigwidth}
        \centering
  \includegraphics[width=\textwidth]{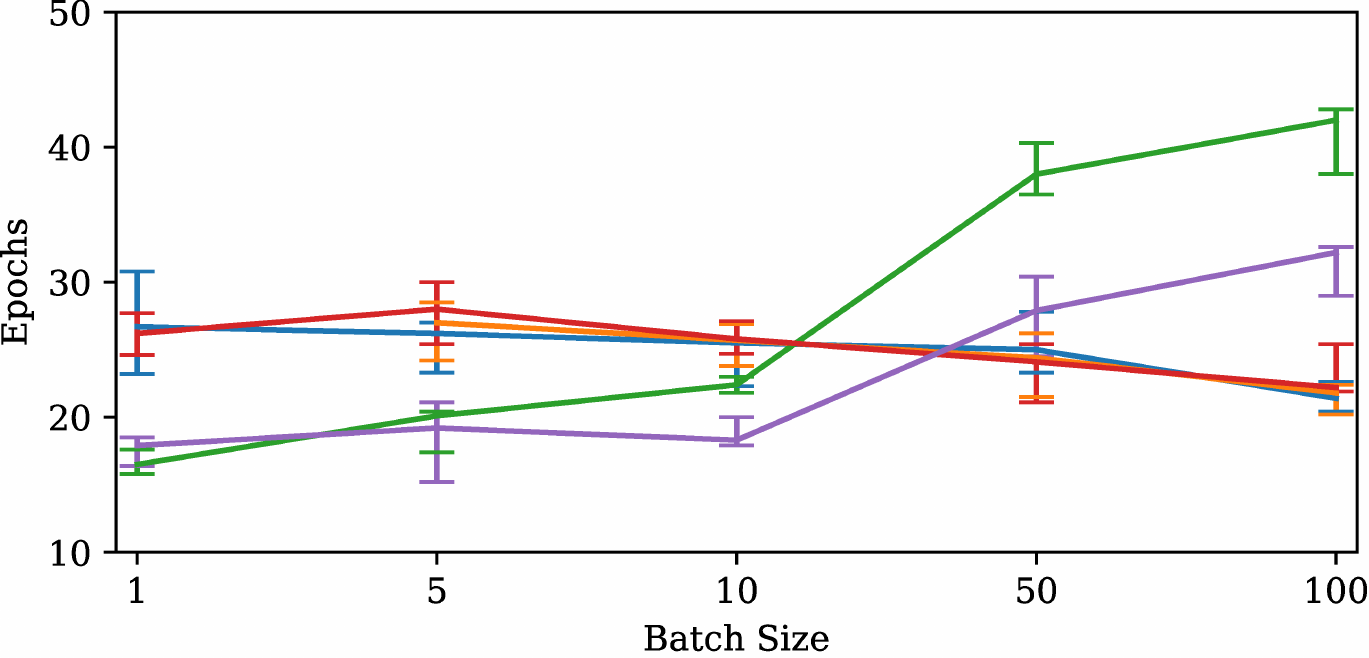}
    \end{subfigure}%
    ~ 
    \begin{subfigure}[t]{\threefigwidth}
        \centering
  \includegraphics[width=\textwidth]{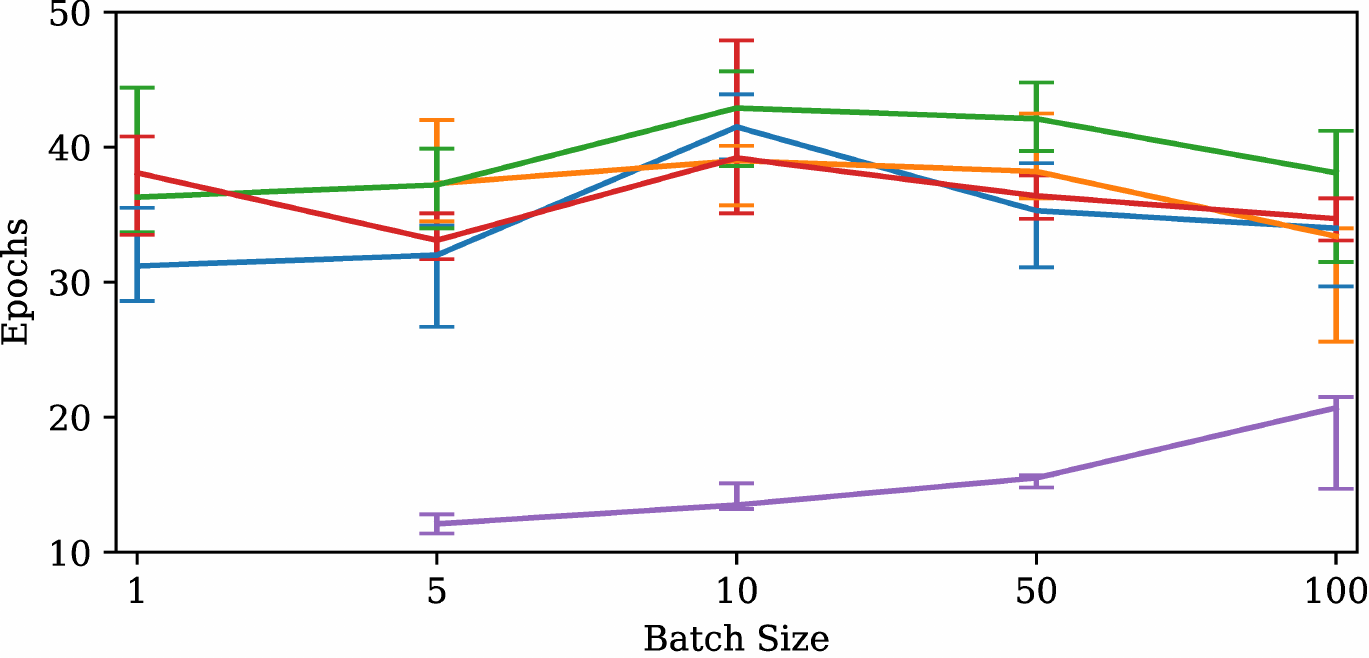}
    \end{subfigure}

    \begin{subfigure}[t]{\threefigwidth}
        \centering
  \includegraphics[width=\textwidth]{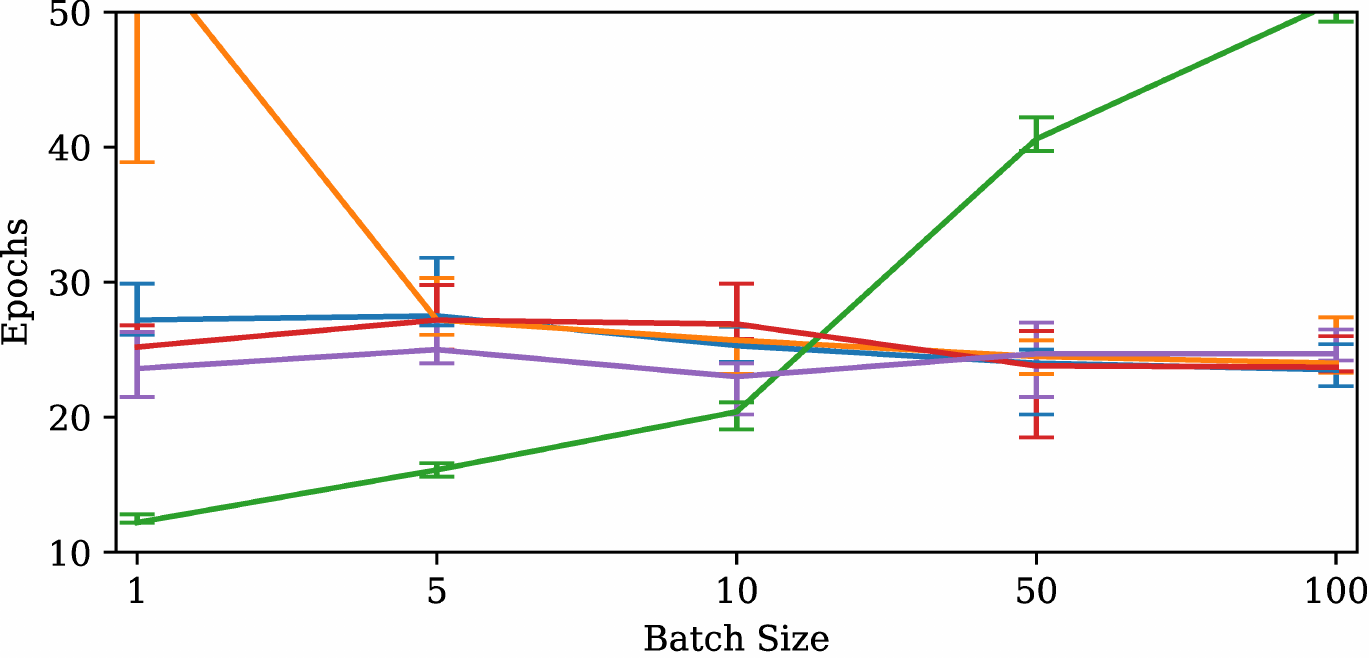}
    \end{subfigure}%
    ~ 
    \begin{subfigure}[t]{\threefigwidth}
        \centering
  \includegraphics[width=\textwidth]{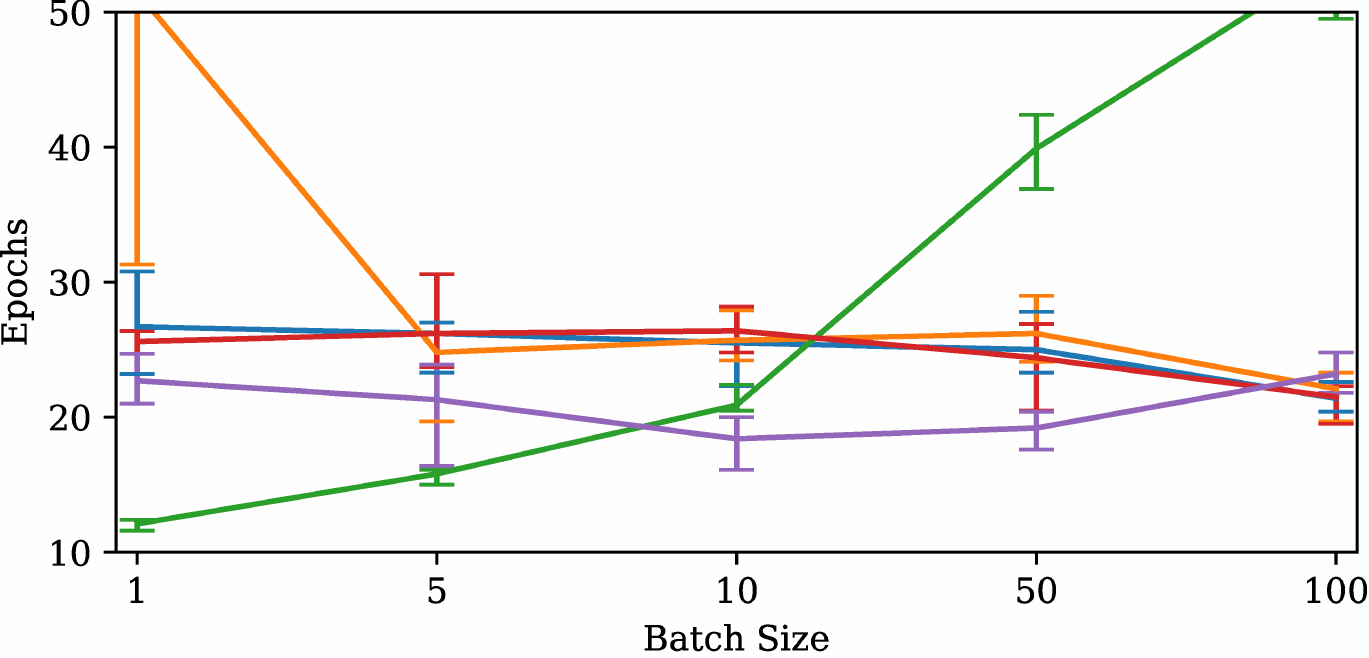}
    \end{subfigure}%
    ~ 
    \begin{subfigure}[t]{\threefigwidth}
        \centering
  \includegraphics[width=\textwidth]{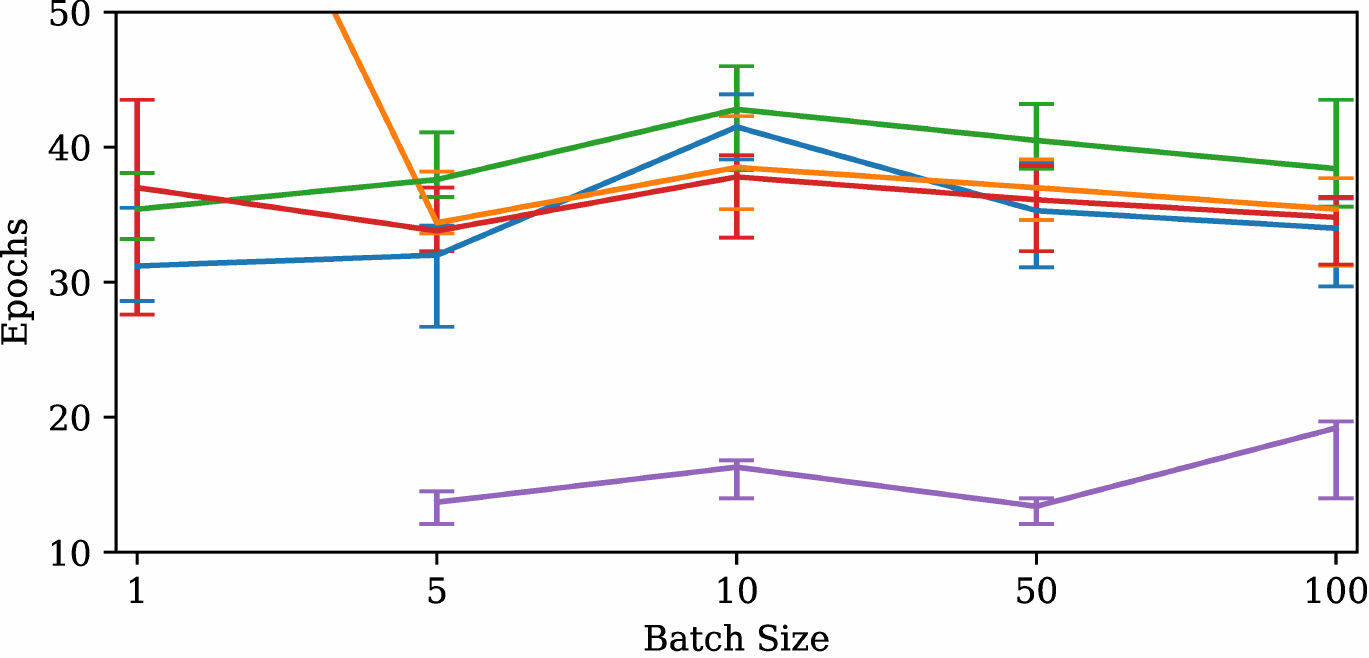}
    \end{subfigure}

    \begin{subfigure}[t]{\threefigwidth}
        \centering
  \includegraphics[width=\textwidth]{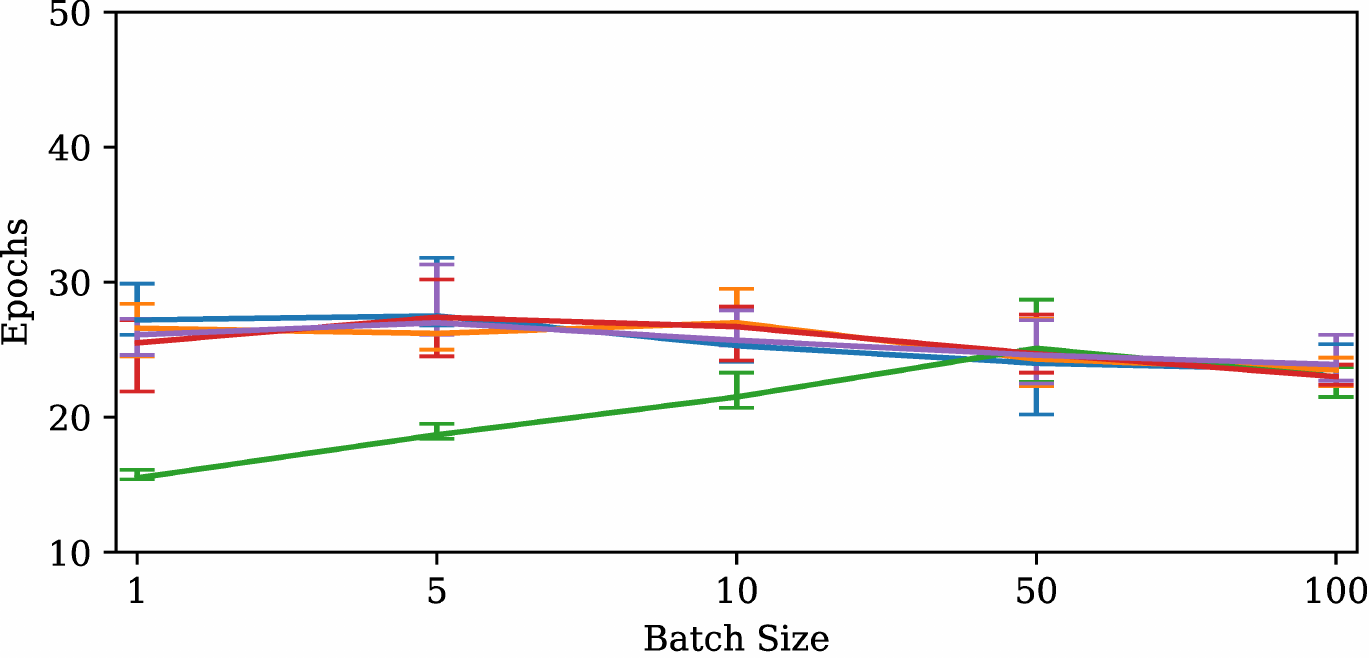}
    \end{subfigure}%
    ~ 
    \begin{subfigure}[t]{\threefigwidth}
        \centering
  \includegraphics[width=\textwidth]{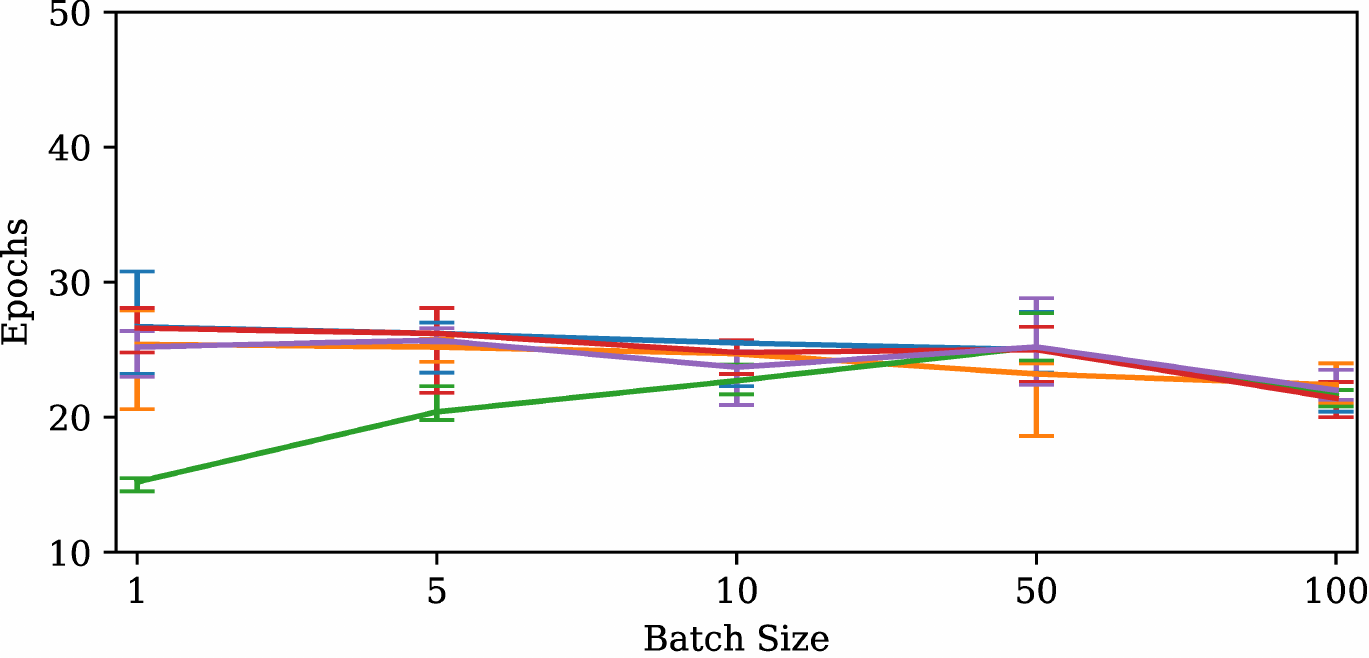}
    \end{subfigure}%
    ~ 
    \begin{subfigure}[t]{\threefigwidth}
        \centering
  \includegraphics[width=\textwidth]{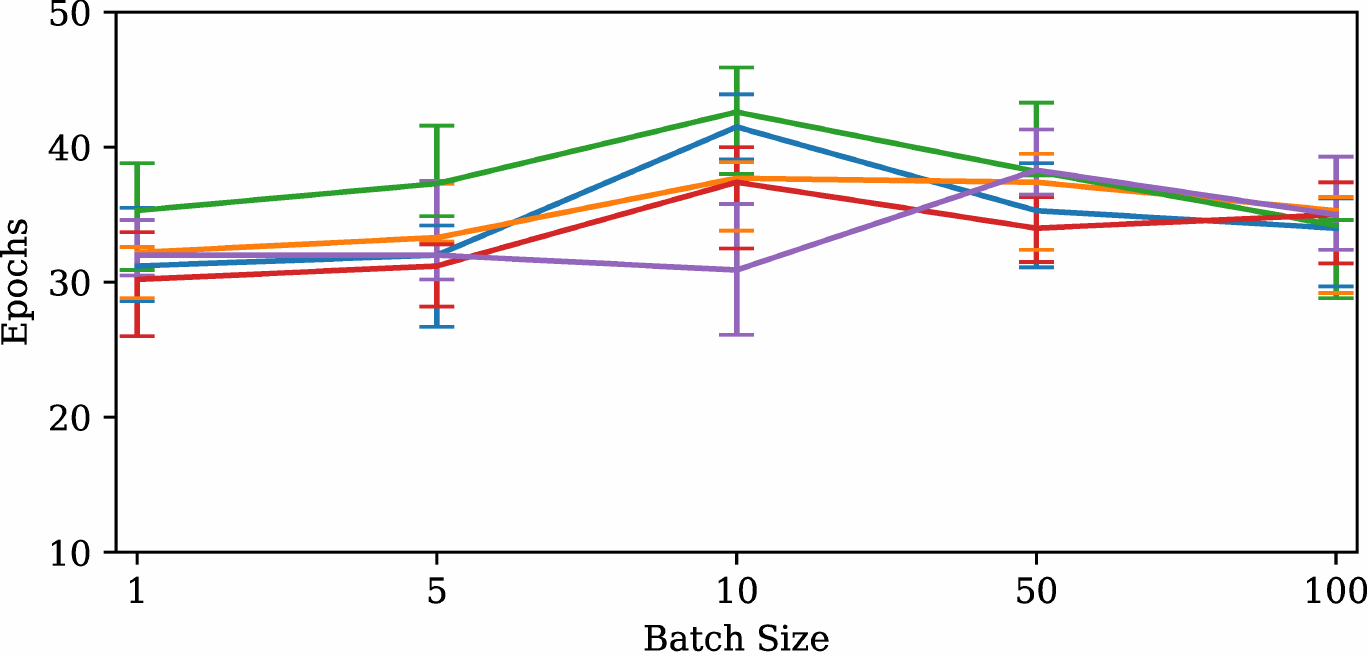}
    \end{subfigure}

    \begin{subfigure}[t]{\threefigwidth}
        \centering
  \includegraphics[width=\textwidth]{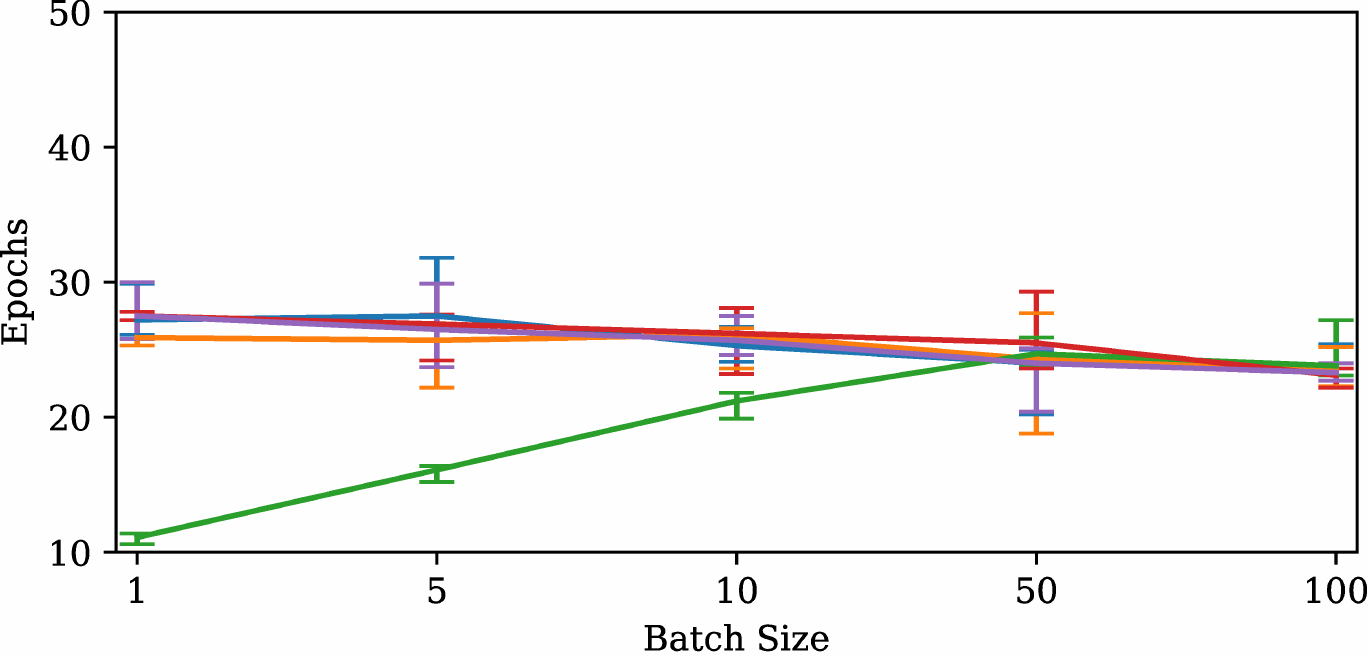}
    \end{subfigure}%
    ~ 
    \begin{subfigure}[t]{\threefigwidth}
        \centering
  \includegraphics[width=\textwidth]{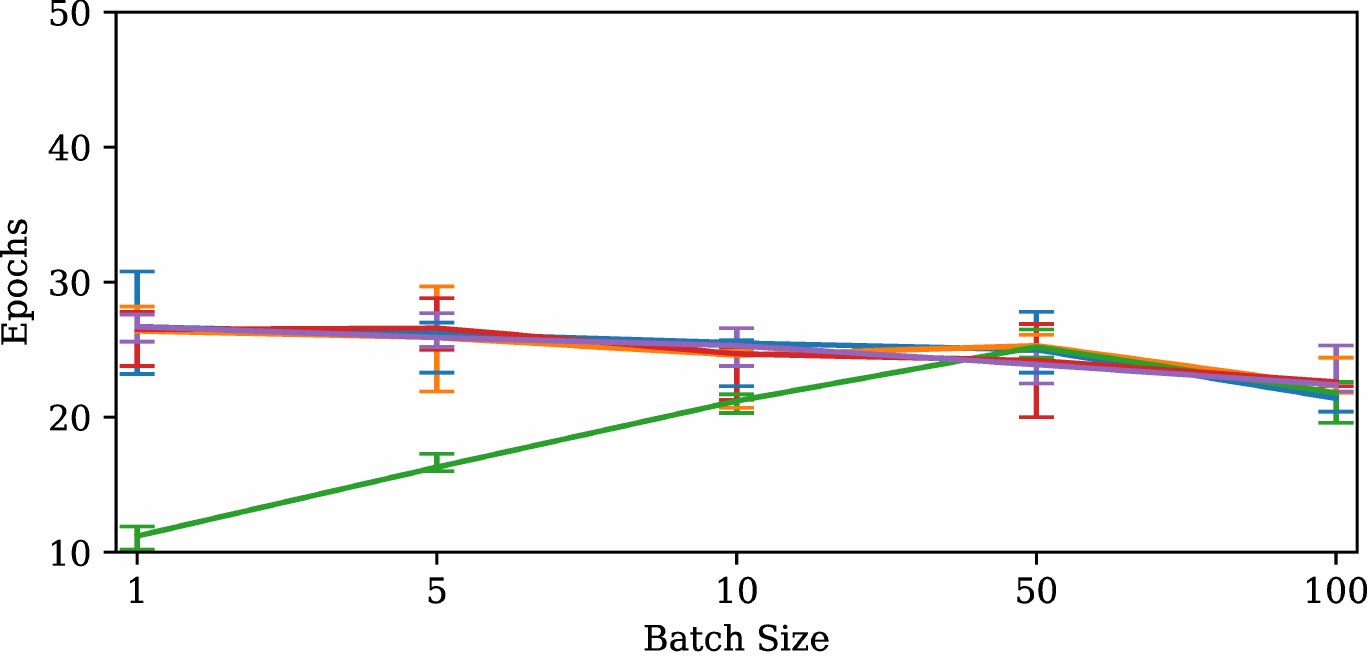}
    \end{subfigure}%
    ~ 
    \begin{subfigure}[t]{\threefigwidth}
        \centering
        \includegraphics[width=\textwidth]{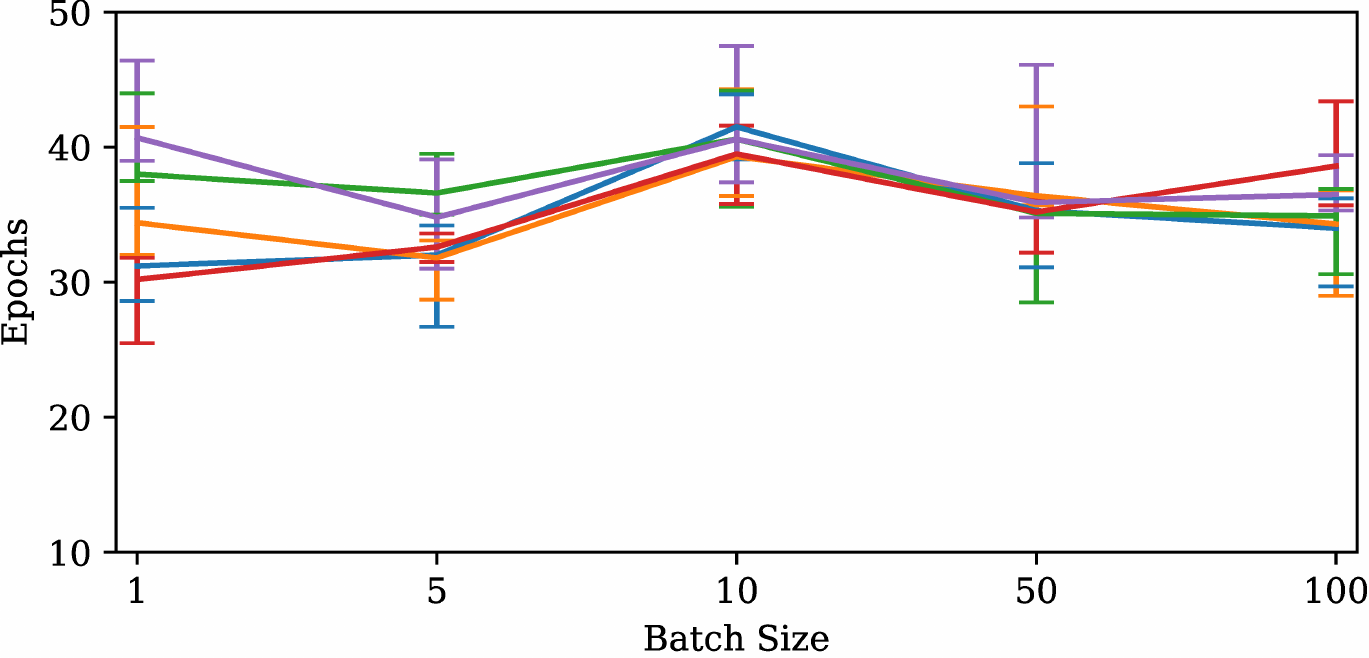}
    \end{subfigure}

    \caption{
      CIFAR10 results for constant clip regions with base optimizers, 
      zoomed in for small batches.
      From left to right the base optimizer is SGD, momentum, Adam.
      From top to bottom the clip region is
      $\gamma = 0.25, 0.5, 1, 5, 10$.
      For small batches we see a benefit from norm \uclip{} with small $\gamma$.
      For all the batch sizes see~\cref{fig:cifar10-constant}.
    }
    \label{fig:cifar10-constant-short}
\end{figure}

\begin{figure}[htpb]
    \centering
    \begin{subfigure}[t]{\twofigwidth}
        \centering
  \includegraphics[width=\textwidth]{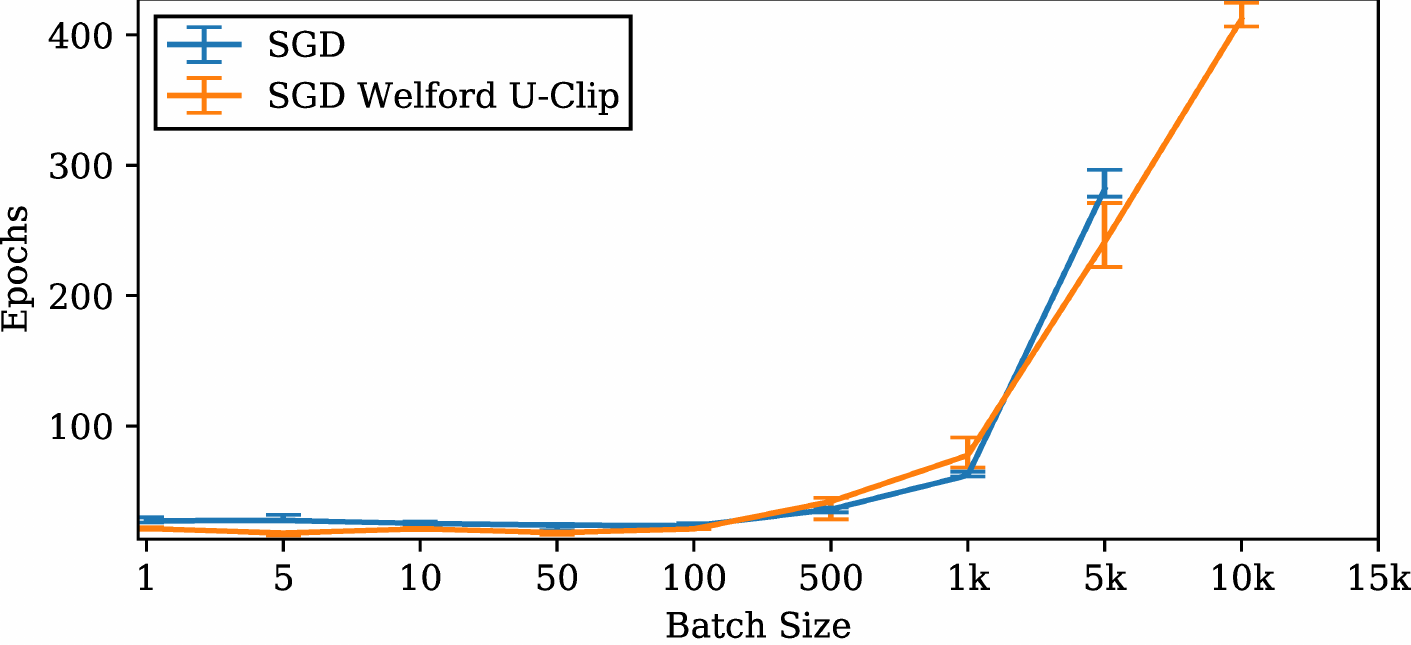}
    \end{subfigure}%
    ~ 
    \begin{subfigure}[t]{\twofigwidth}
        \centering
  \includegraphics[width=\textwidth]{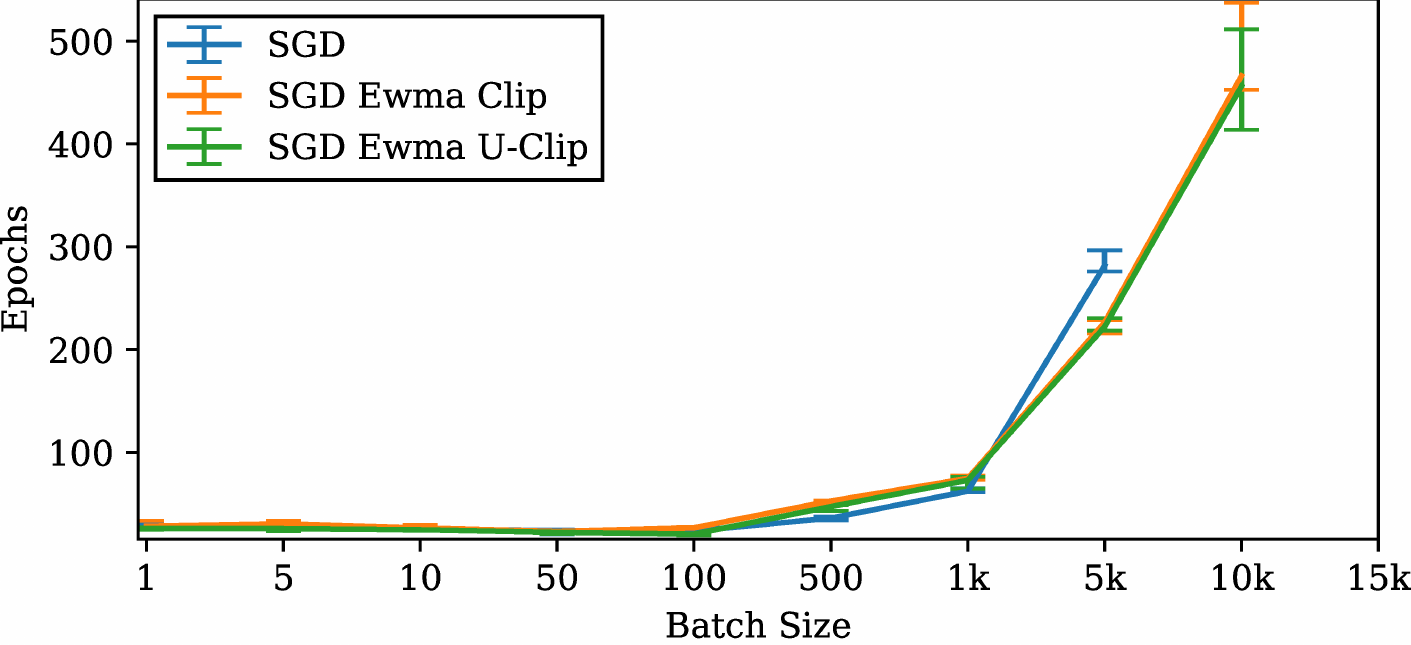}
    \end{subfigure}

    \begin{subfigure}[t]{\twofigwidth}
        \centering
  \includegraphics[width=\textwidth]{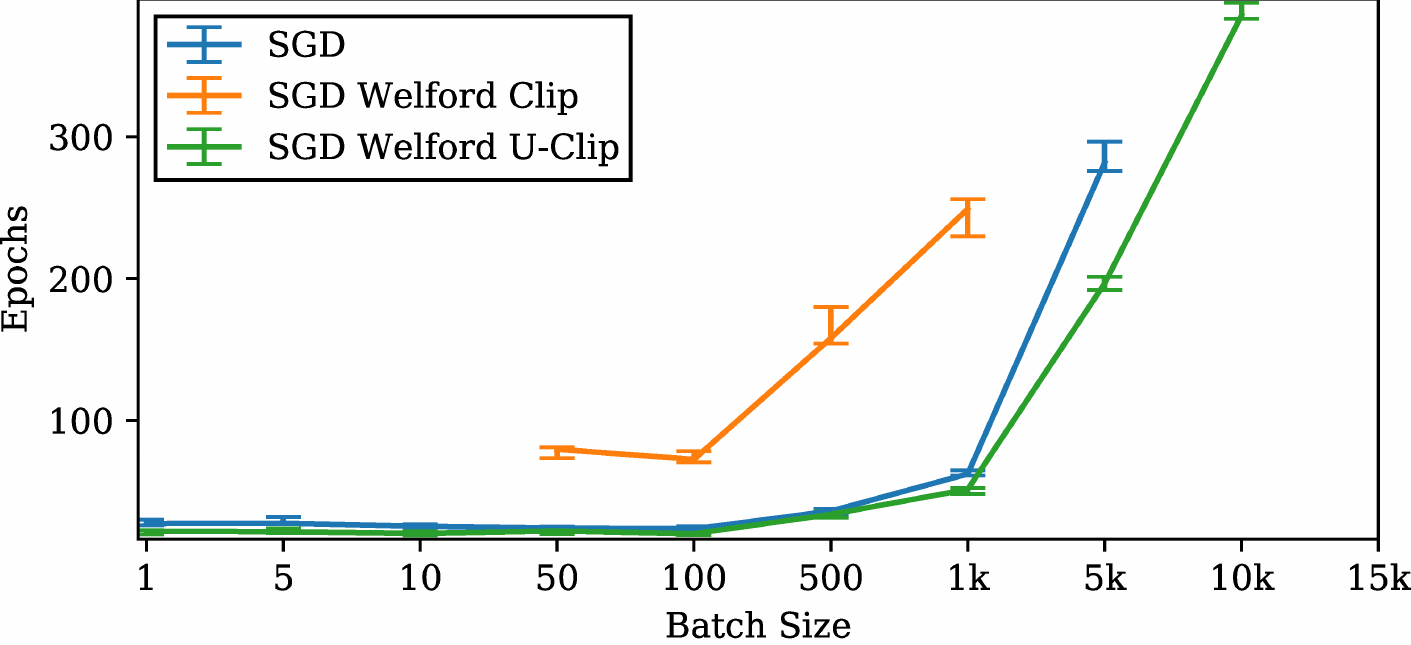}
    \end{subfigure}%
    ~ 
    \begin{subfigure}[t]{\twofigwidth}
        \centering
  \includegraphics[width=\textwidth]{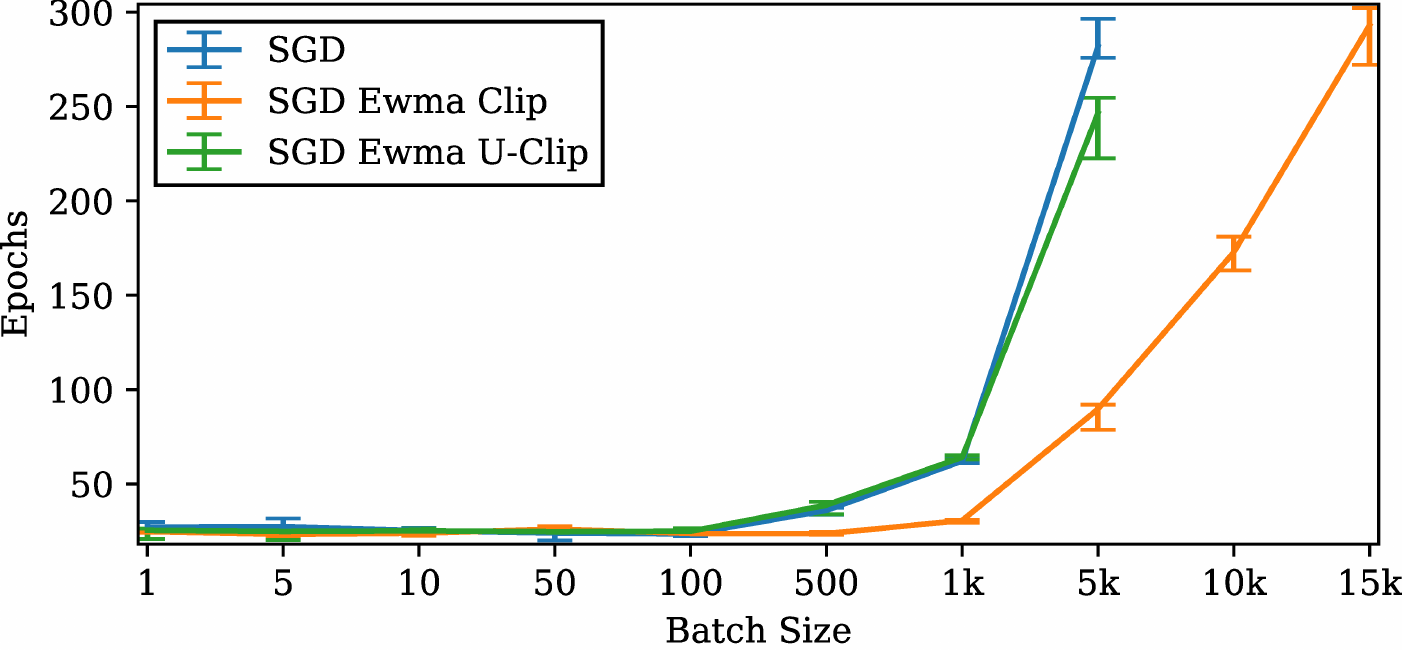}
    \end{subfigure}

    \begin{subfigure}[t]{\twofigwidth}
        \centering
  \includegraphics[width=\textwidth]{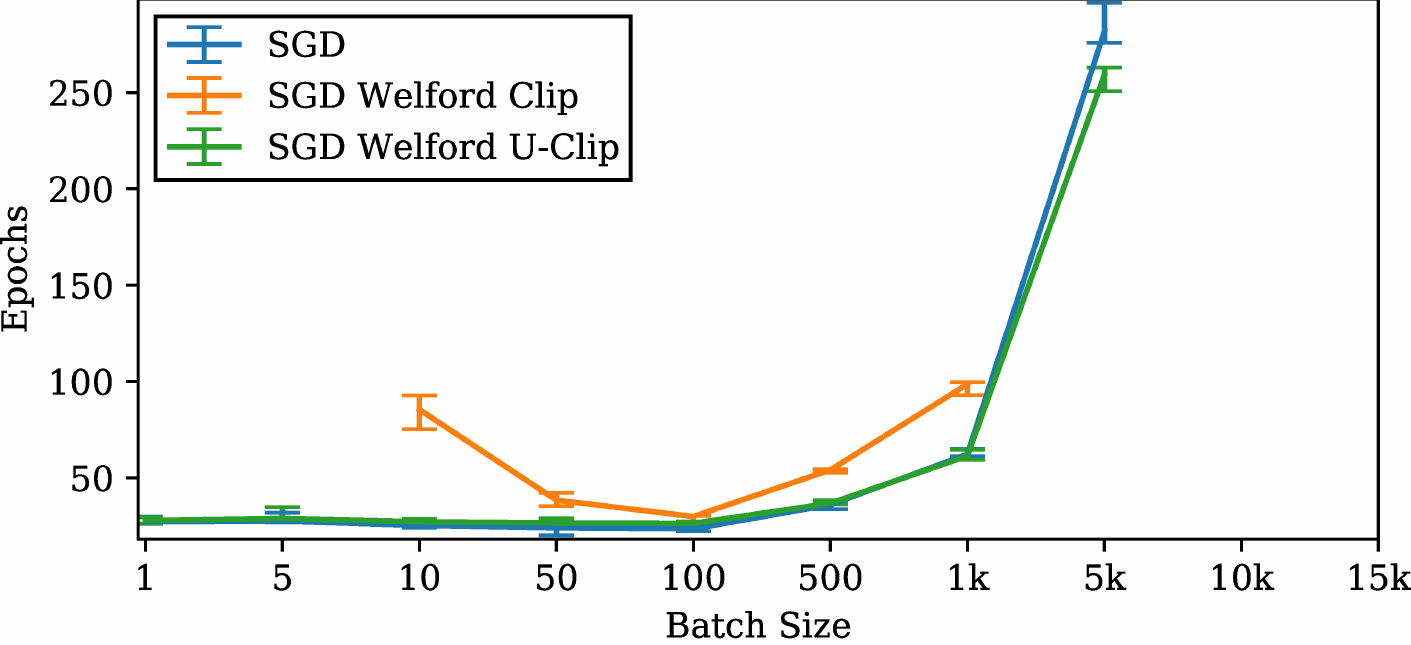}
    \end{subfigure}%
    ~ 
    \begin{subfigure}[t]{\twofigwidth}
        \centering
  \includegraphics[width=\textwidth]{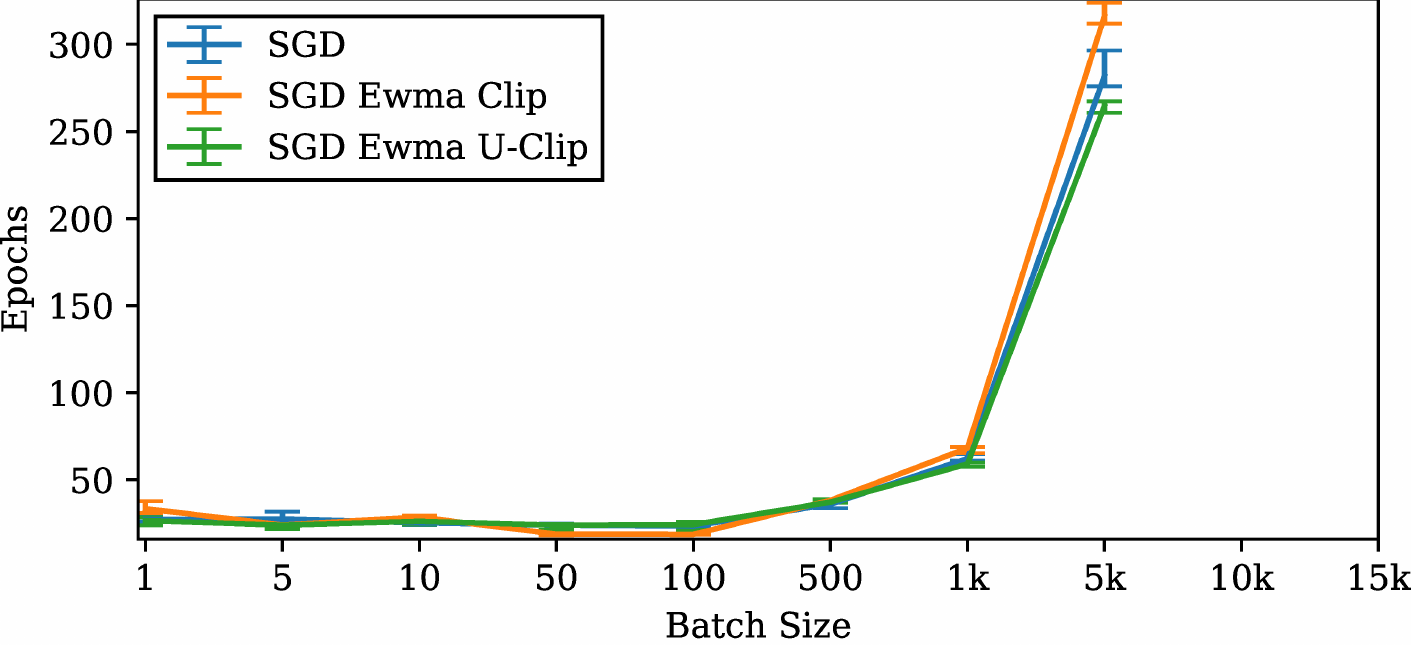}
    \end{subfigure}

    \begin{subfigure}[t]{\twofigwidth}
        \centering
  \includegraphics[width=\textwidth]{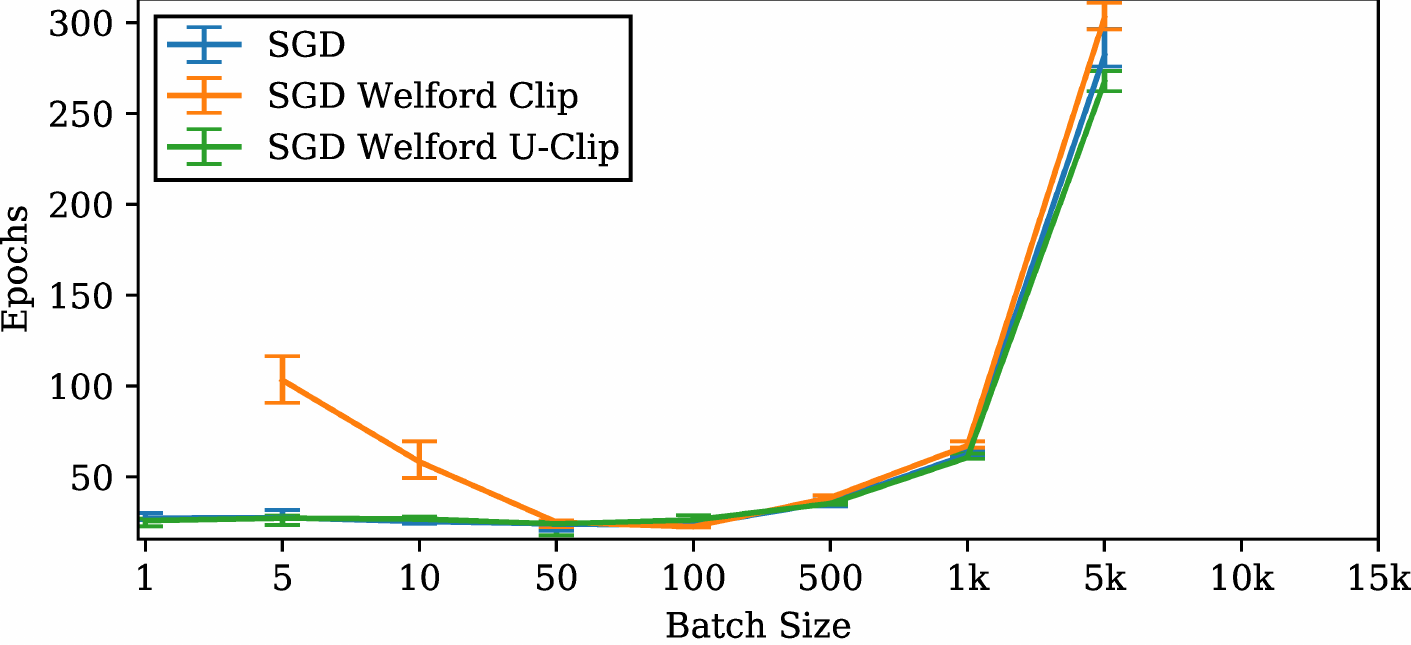}
    \end{subfigure}%
    ~ 
    \begin{subfigure}[t]{\twofigwidth}
        \centering
  \includegraphics[width=\textwidth]{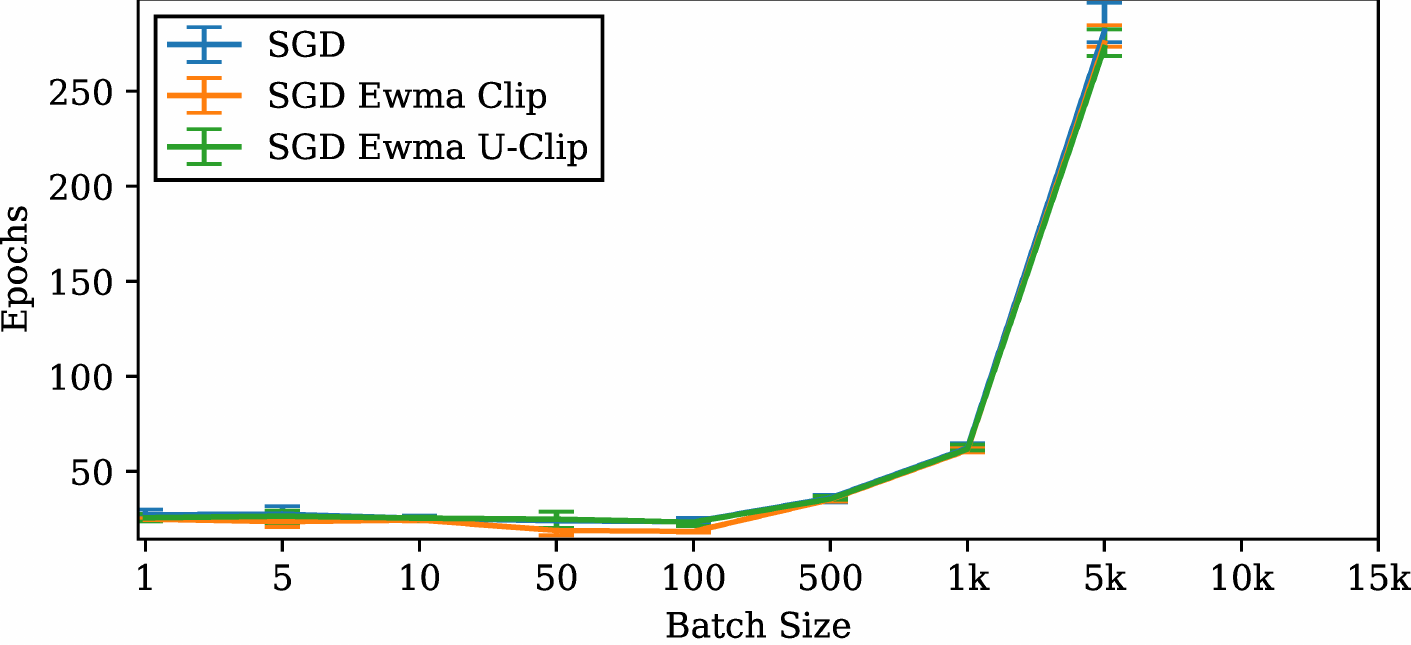}
    \end{subfigure}

    \begin{subfigure}[t]{\twofigwidth}
        \centering
  \includegraphics[width=\textwidth]{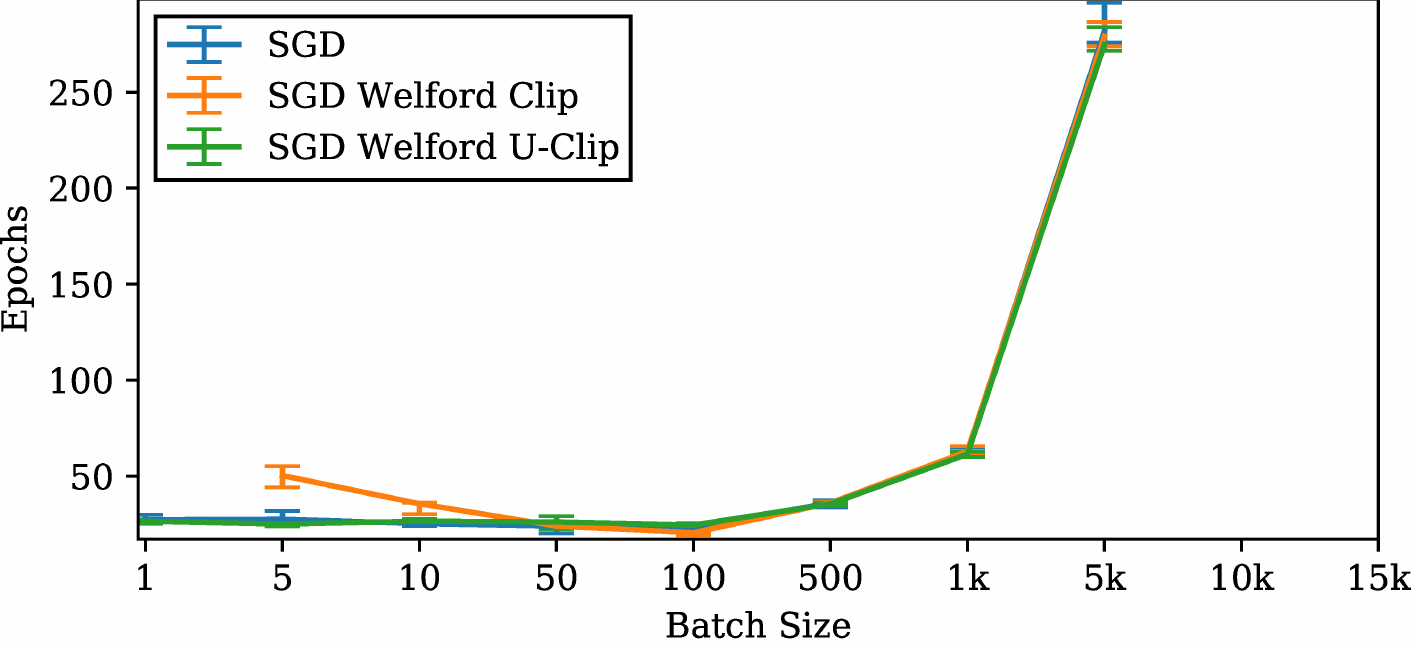}
    \end{subfigure}%
    ~ 
    \begin{subfigure}[t]{\twofigwidth}
        \centering
  \includegraphics[width=\textwidth]{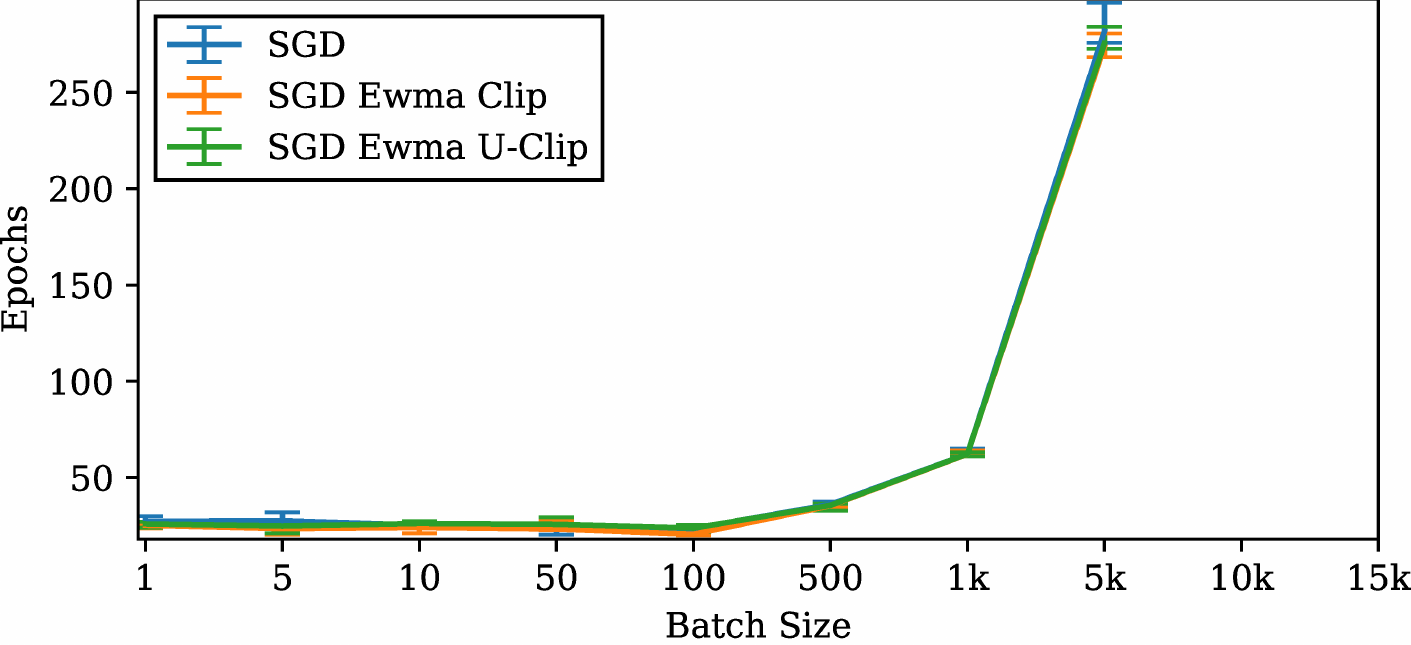}
    \end{subfigure}

    \caption{
      CIFAR10 results for adaptive clip regions with base optimizer SGD. 
      Top to bottom, $(a,b)=(2, 0), (10, 0), (1, 1), (1, 2), (1,3)$.
      Left is Welford estimation, right is EWMA estimation (decay = 0.95).
      Neither adaptive clipping not adaptive \uclip{} adds much over the base
      optimizer, although the \uclip{} seems to perform better than simply clipping
      (no carry) in general. The exception is Welford estimation with $(a,b)=(2, 0)$
  where SGD with adaptive \uclip{} never reaches 99\% train accuracy.}
    \label{fig:cifar10-adaptive-sgd}
\end{figure}

\begin{figure}[htpb]
    \centering
    \begin{subfigure}[t]{\twofigwidth}
        \centering
  \includegraphics[width=\textwidth]{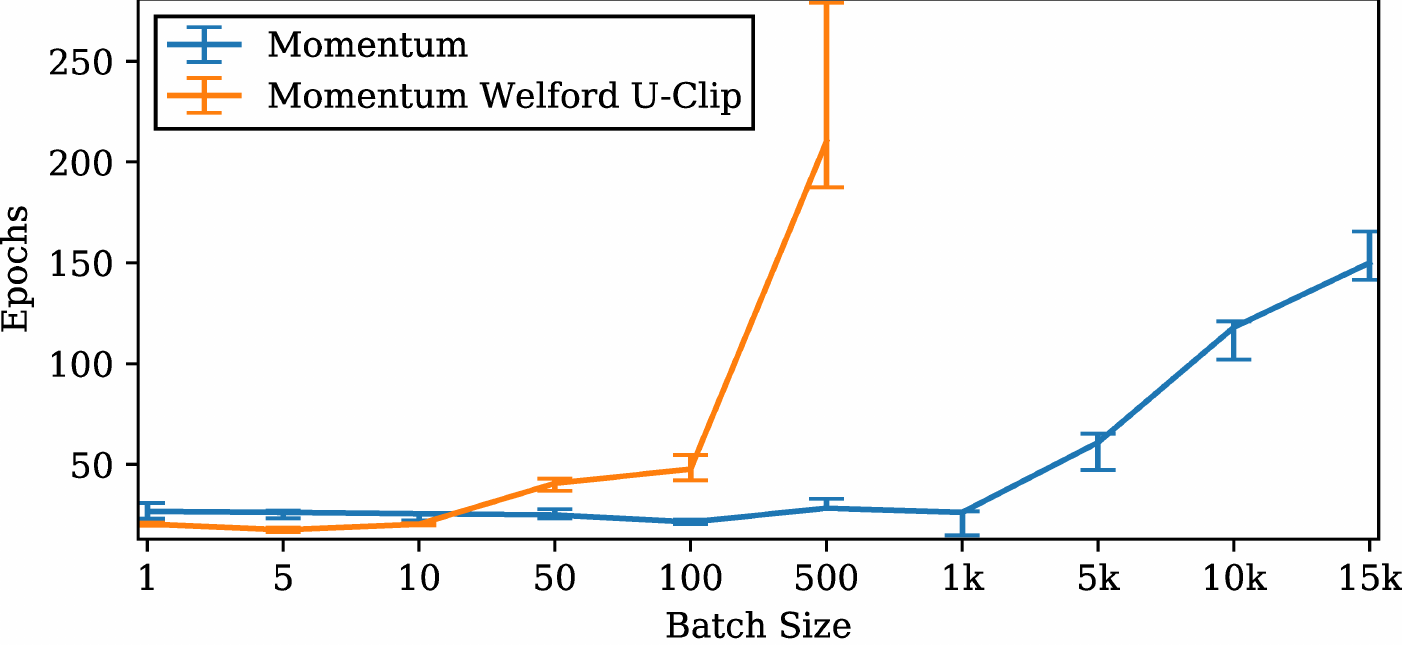}
    \end{subfigure}%
    ~ 
    \begin{subfigure}[t]{\twofigwidth}
        \centering
  \includegraphics[width=\textwidth]{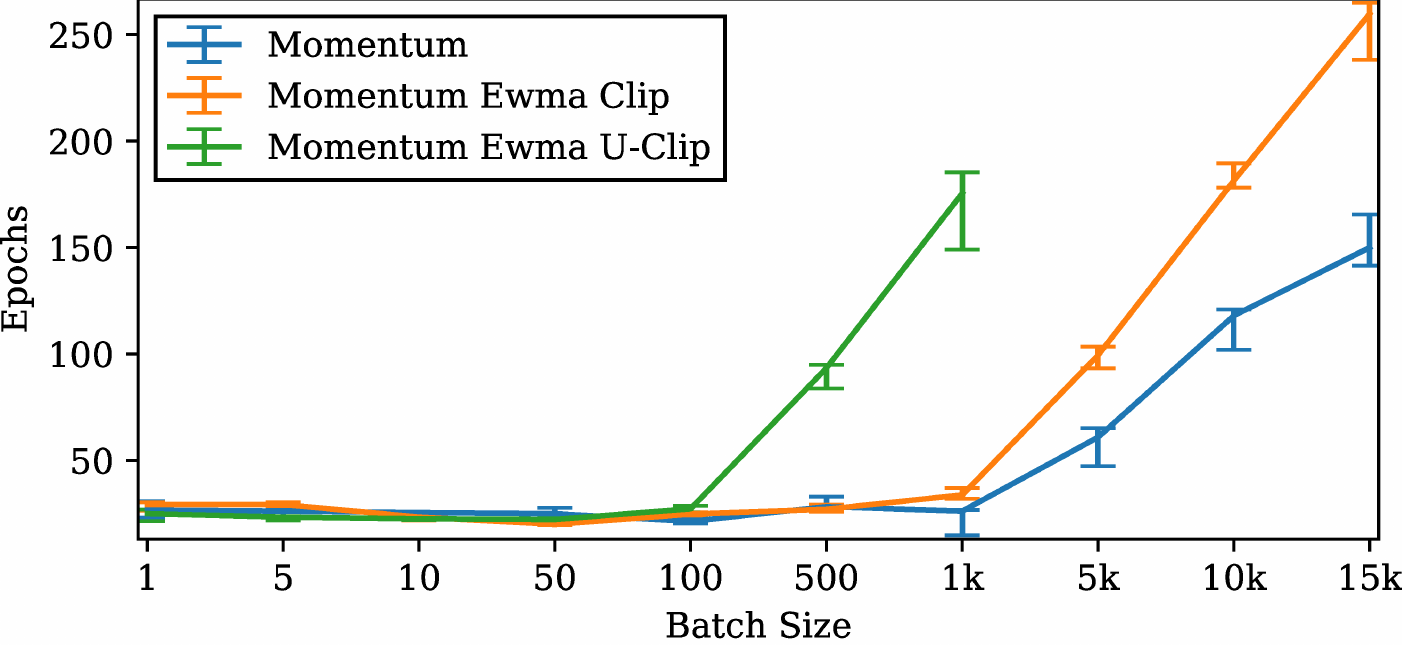}
    \end{subfigure}

    \begin{subfigure}[t]{\twofigwidth}
        \centering
  \includegraphics[width=\textwidth]{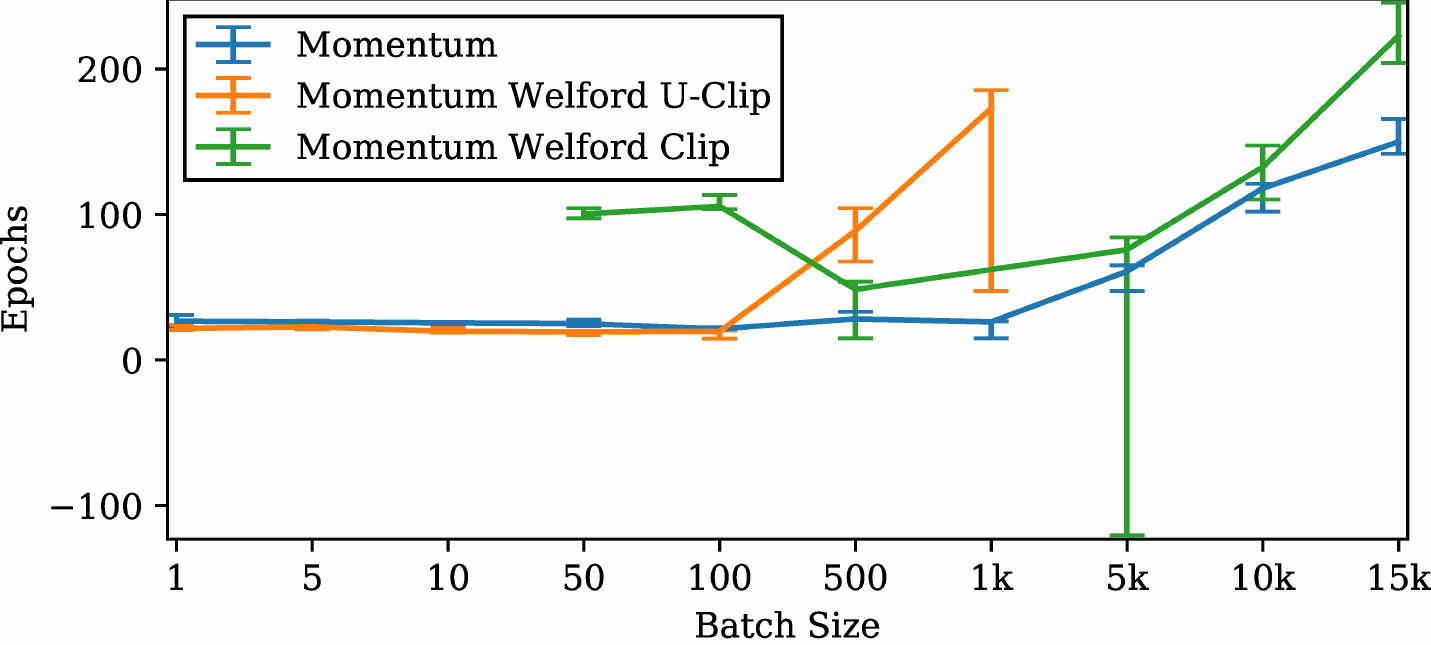}
    \end{subfigure}%
    ~ 
    \begin{subfigure}[t]{\twofigwidth}
        \centering
  \includegraphics[width=\textwidth]{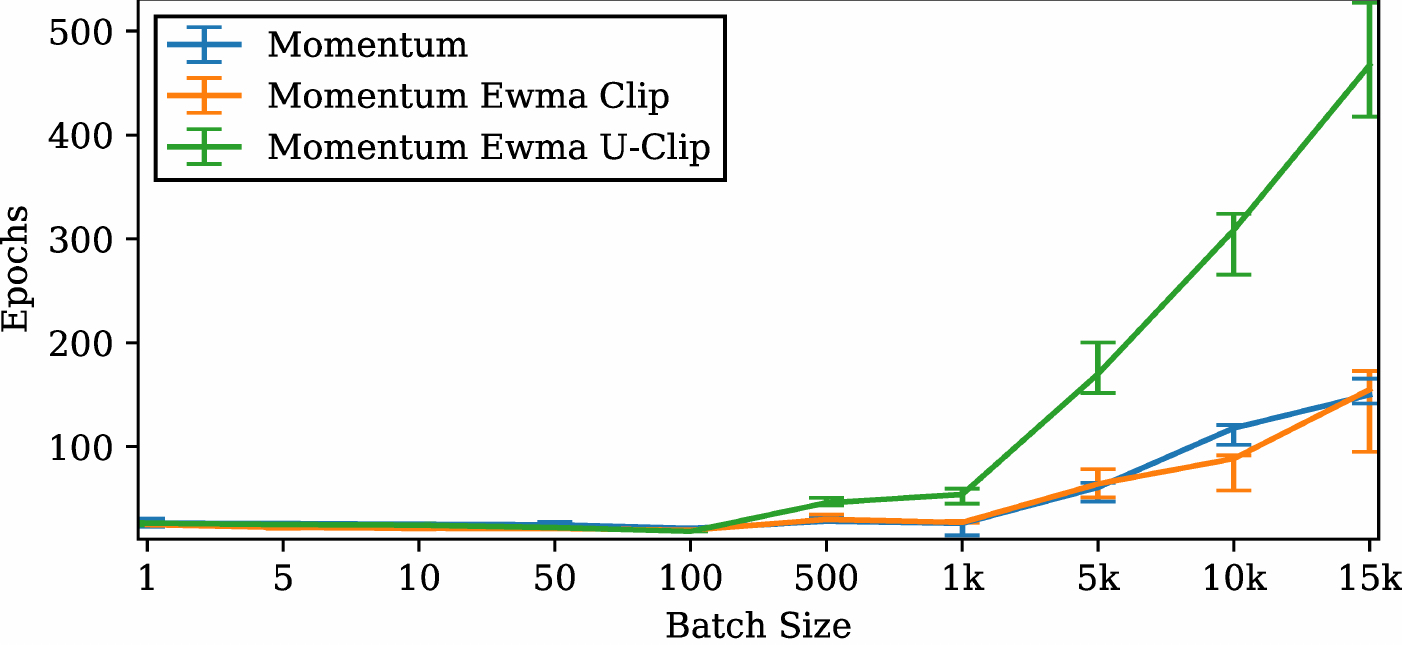}
    \end{subfigure}

    \begin{subfigure}[t]{\twofigwidth}
        \centering
  \includegraphics[width=\textwidth]{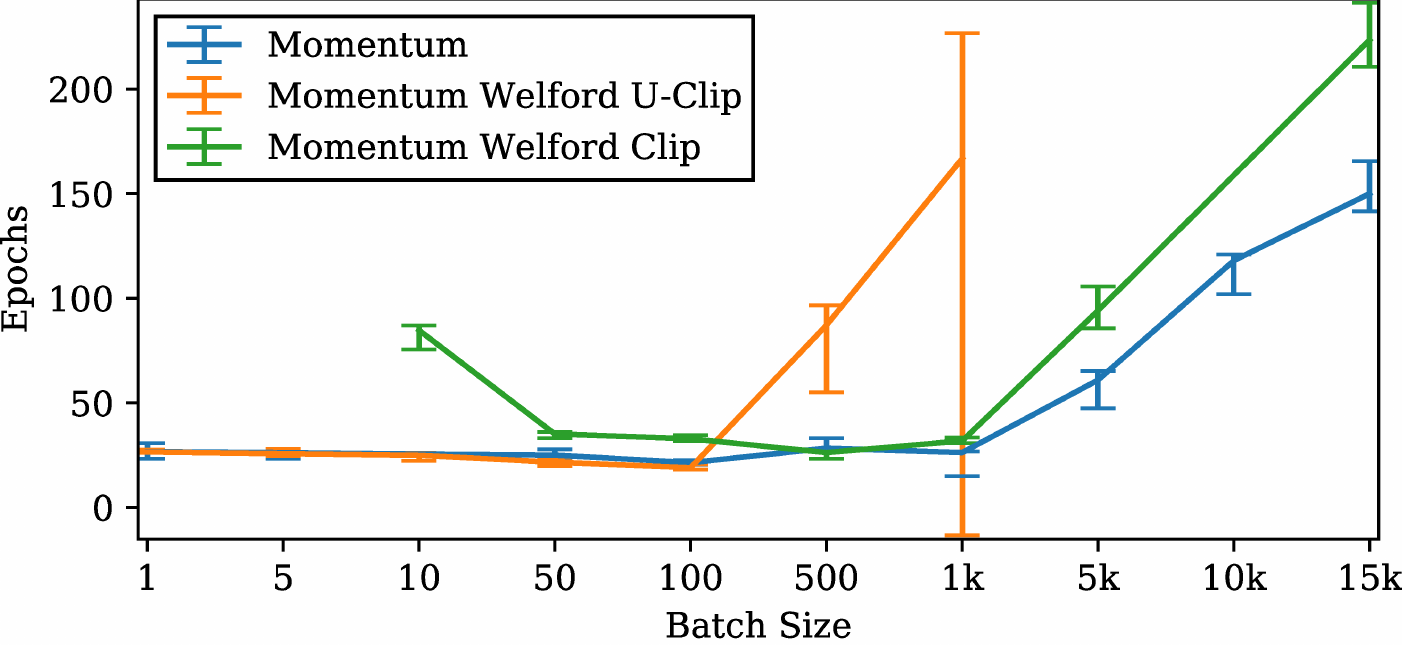}
    \end{subfigure}%
    ~ 
    \begin{subfigure}[t]{\twofigwidth}
        \centering
  \includegraphics[width=\textwidth]{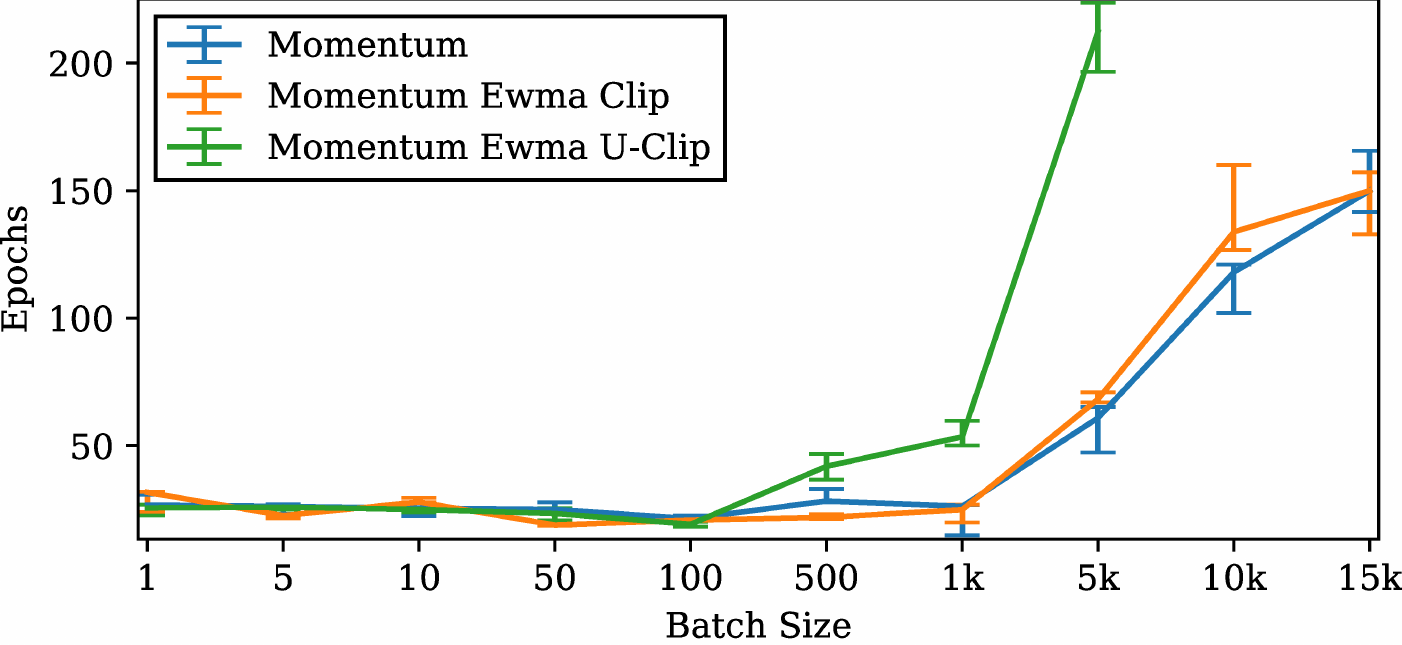}
    \end{subfigure}

    \begin{subfigure}[t]{\twofigwidth}
        \centering
  \includegraphics[width=\textwidth]{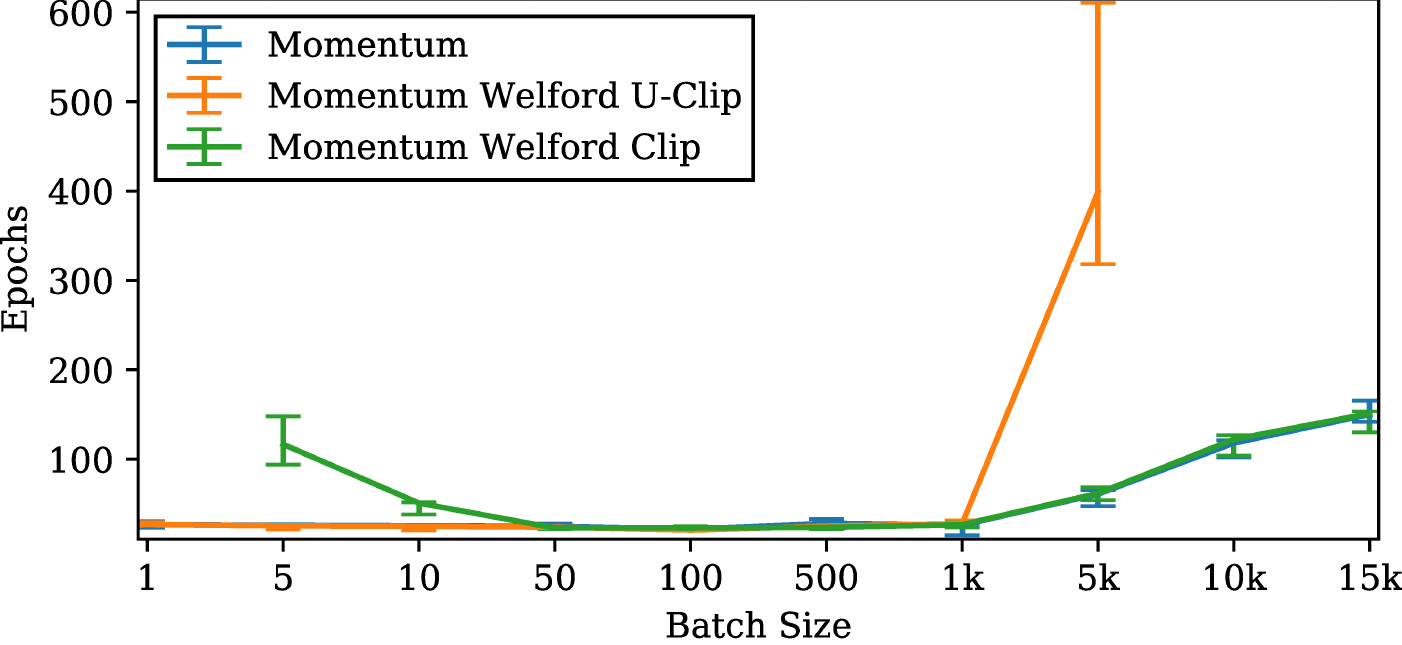}
    \end{subfigure}%
    ~ 
    \begin{subfigure}[t]{\twofigwidth}
        \centering
  \includegraphics[width=\textwidth]{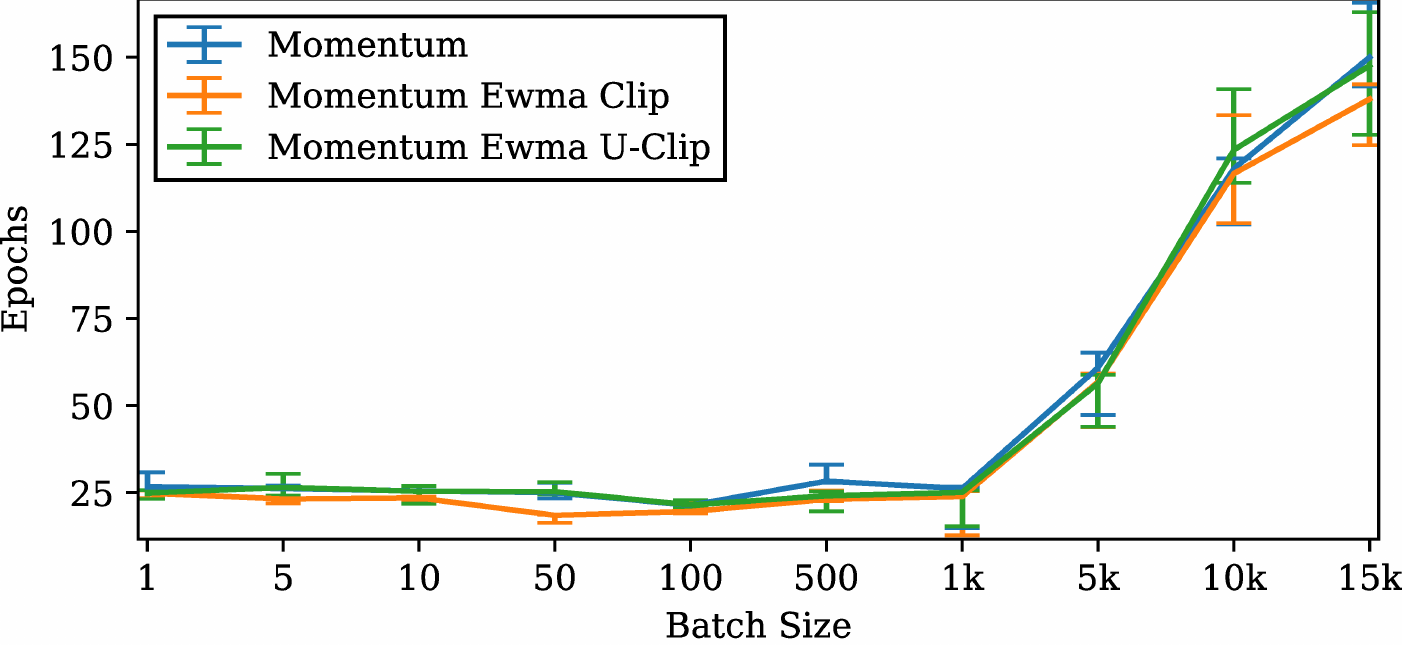}
    \end{subfigure}

    \begin{subfigure}[t]{\twofigwidth}
        \centering
  \includegraphics[width=\textwidth]{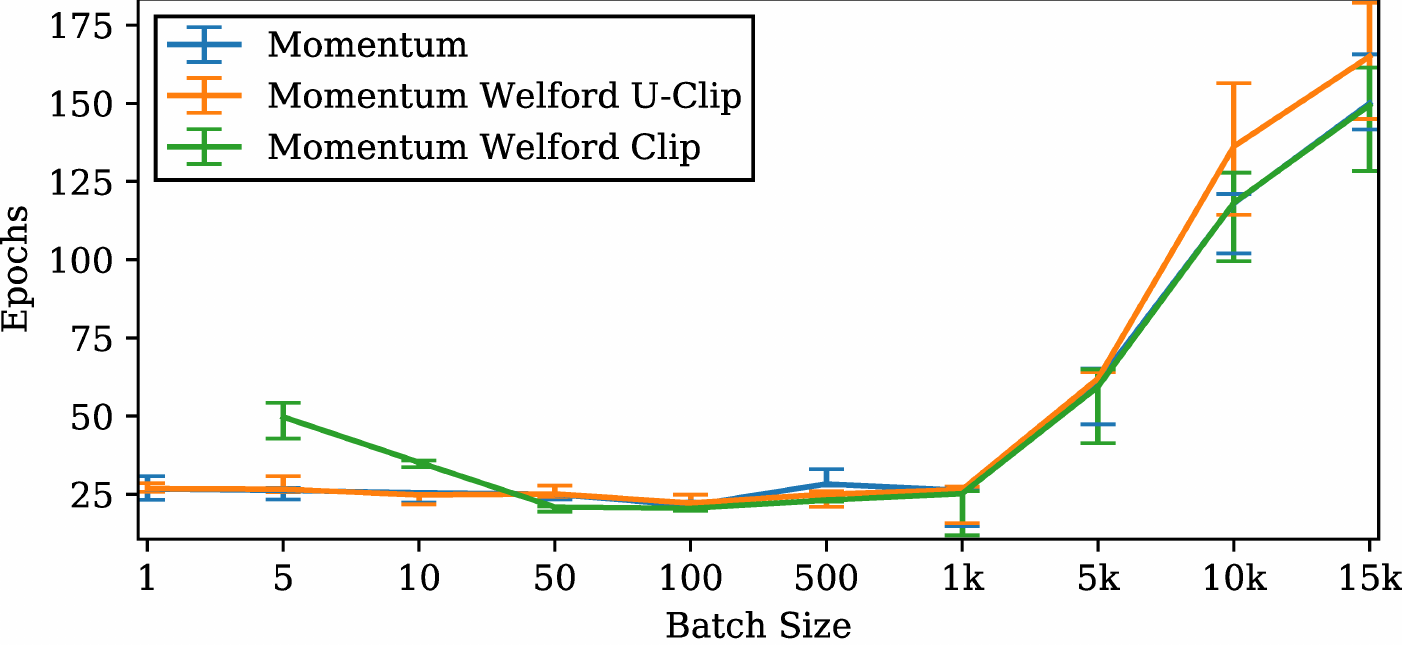}
    \end{subfigure}%
    ~ 
    \begin{subfigure}[t]{\twofigwidth}
        \centering
  \includegraphics[width=\textwidth]{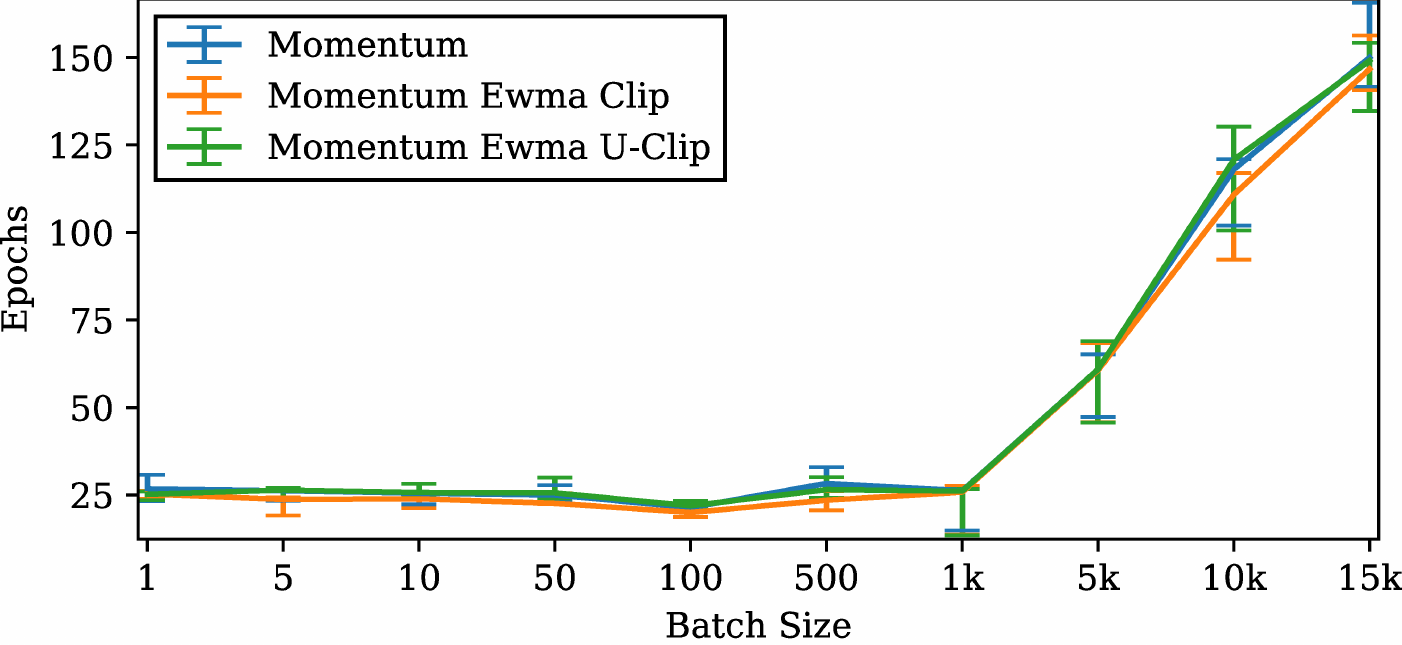}
    \end{subfigure}

    \caption{
      CIFAR10 results for adaptive clip regions with base optimizer momentum. 
      Overall momentum with adaptive \uclip{} does not outperform momentum alone.
}
    \label{fig:cifar10-adaptive-momentum}
\end{figure}

\begin{figure}[htpb]
    \centering
    \begin{subfigure}[t]{\twofigwidth}
        \centering
  \includegraphics[width=\textwidth]{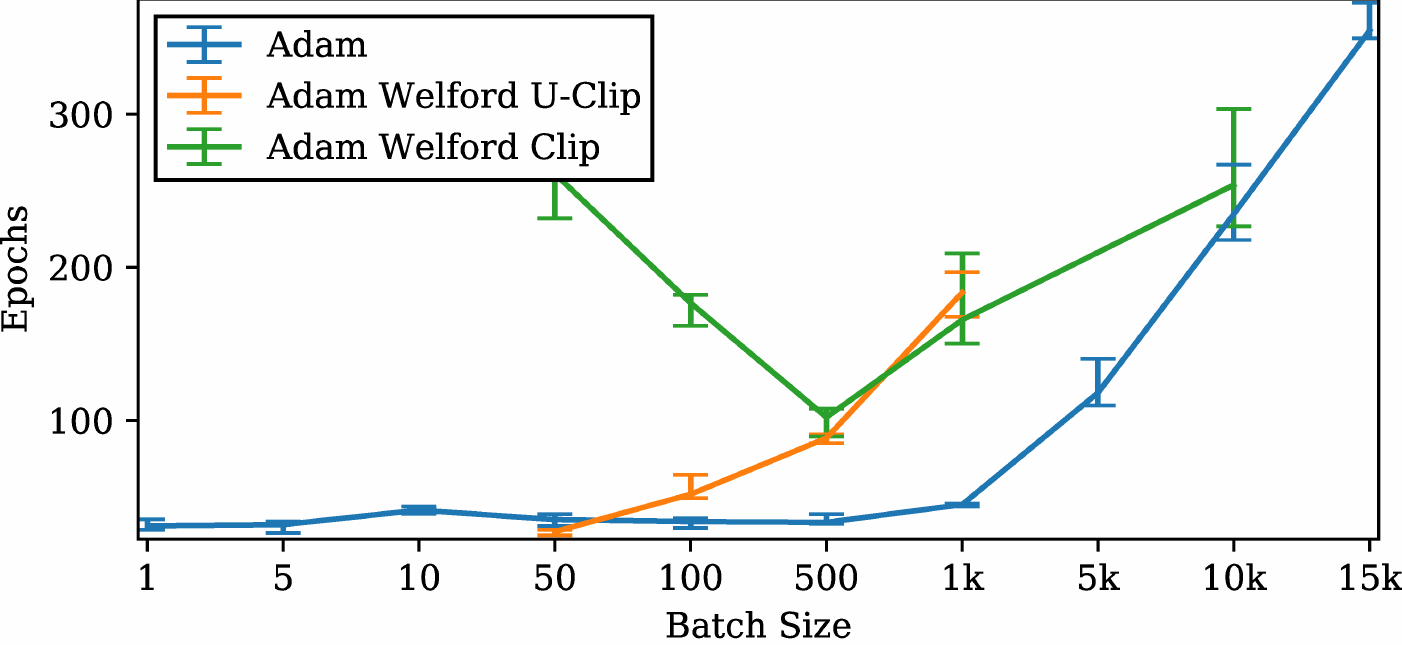}
    \end{subfigure}%
    ~ 
    \begin{subfigure}[t]{\twofigwidth}
        \centering
  \includegraphics[width=\textwidth]{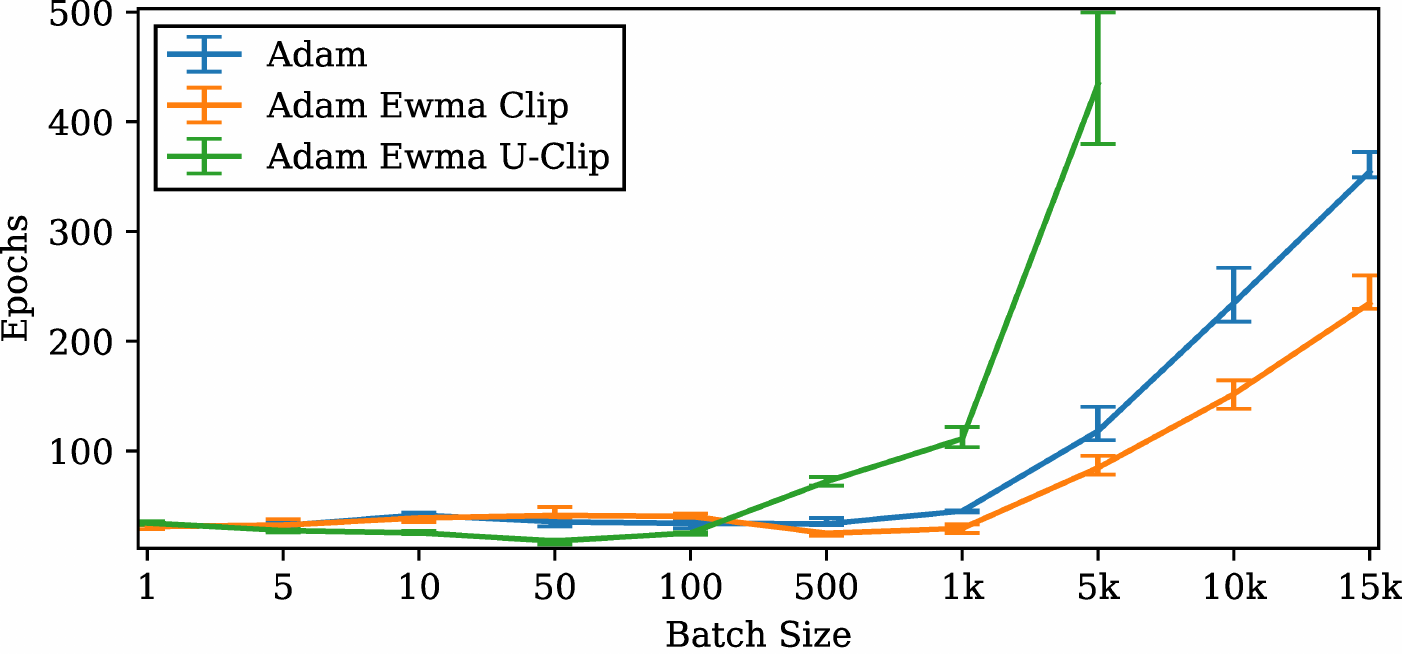}
    \end{subfigure}

    \begin{subfigure}[t]{\twofigwidth}
        \centering
  \includegraphics[width=\textwidth]{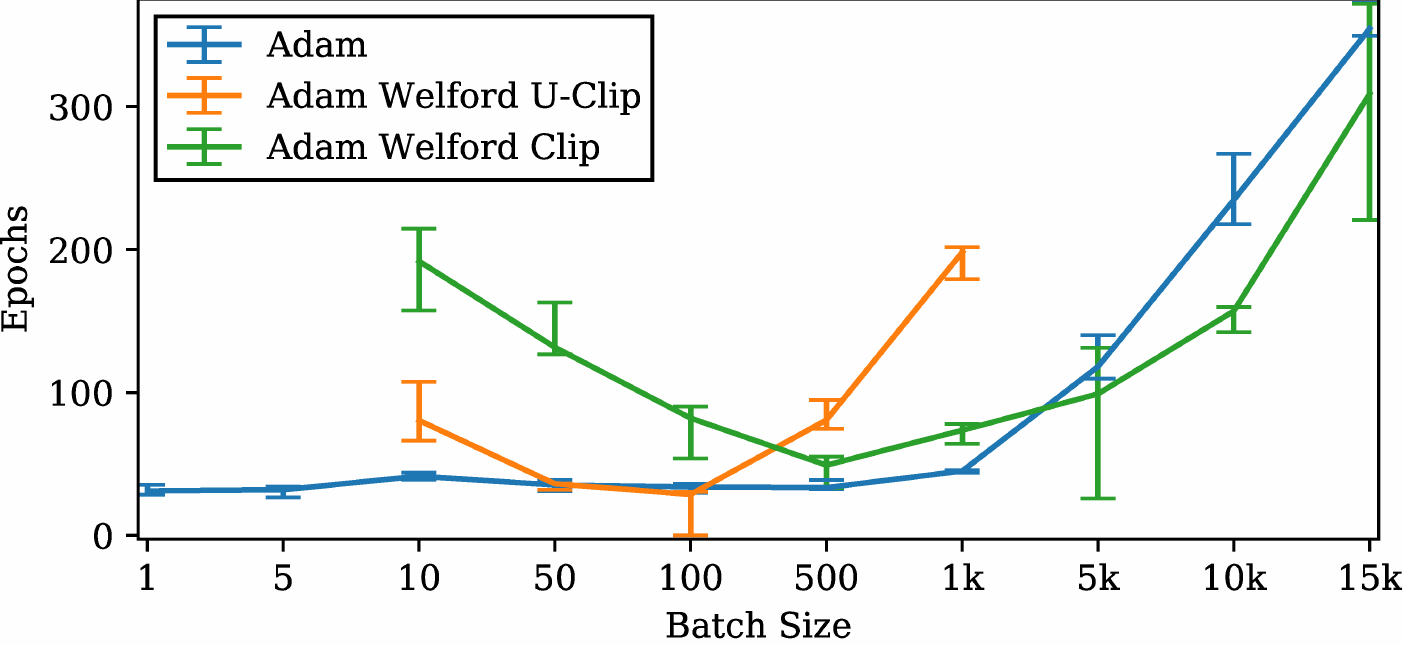}
    \end{subfigure}%
    ~ 
    \begin{subfigure}[t]{\twofigwidth}
        \centering
  \includegraphics[width=\textwidth]{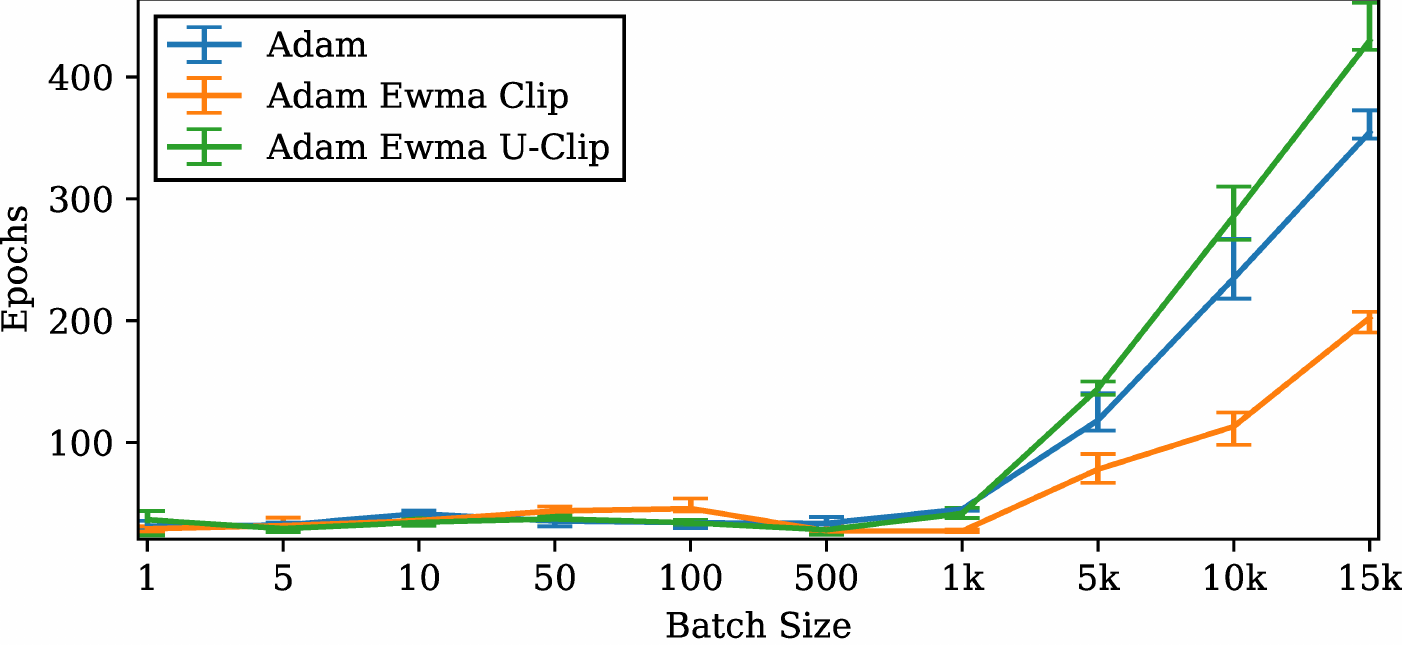}
    \end{subfigure}

    \begin{subfigure}[t]{\twofigwidth}
        \centering
  \includegraphics[width=\textwidth]{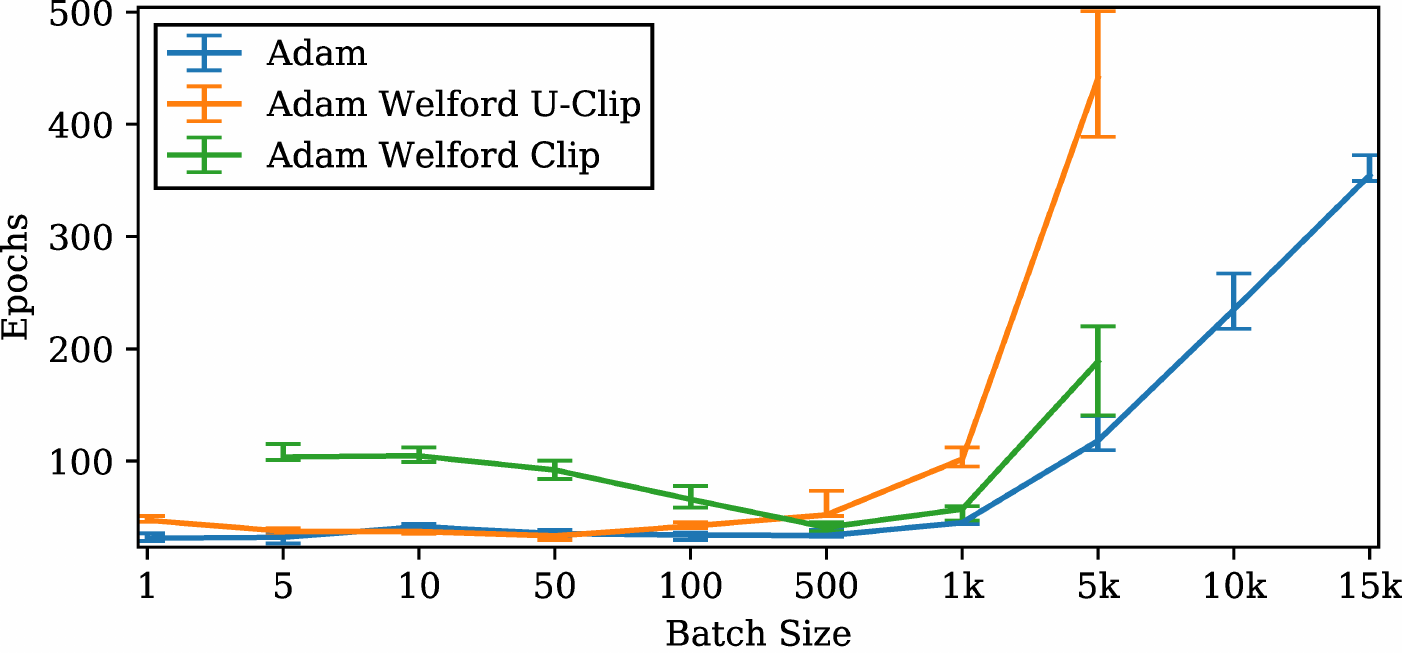}
    \end{subfigure}%
    ~ 
    \begin{subfigure}[t]{\twofigwidth}
        \centering
  \includegraphics[width=\textwidth]{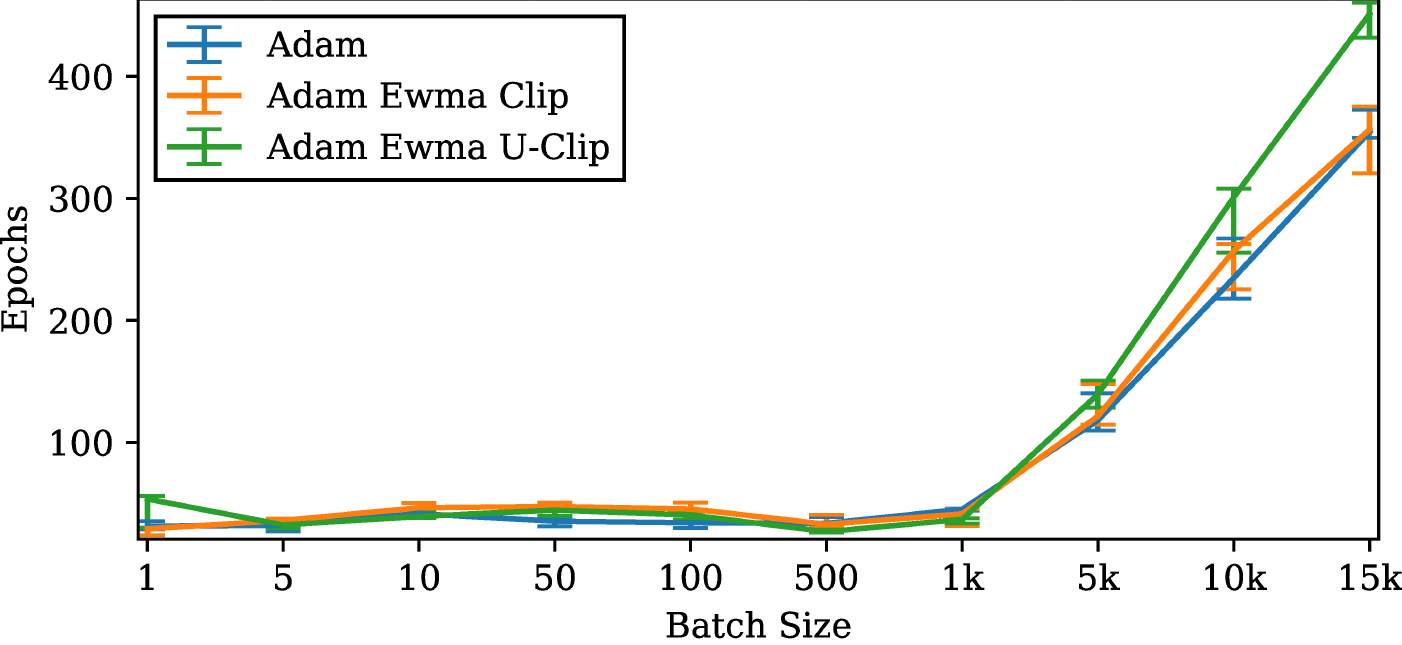}
    \end{subfigure}

    \begin{subfigure}[t]{\twofigwidth}
        \centering
  \includegraphics[width=\textwidth]{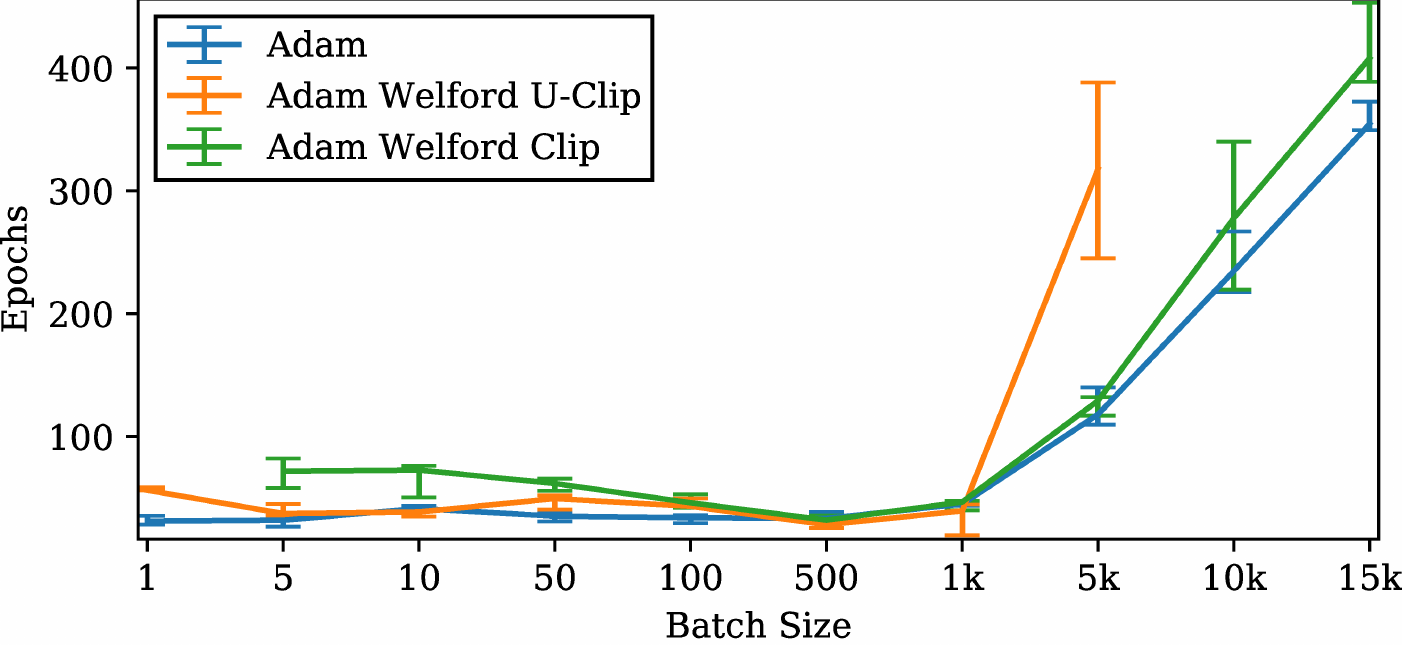}
    \end{subfigure}%
    ~ 
    \begin{subfigure}[t]{\twofigwidth}
        \centering
  \includegraphics[width=\textwidth]{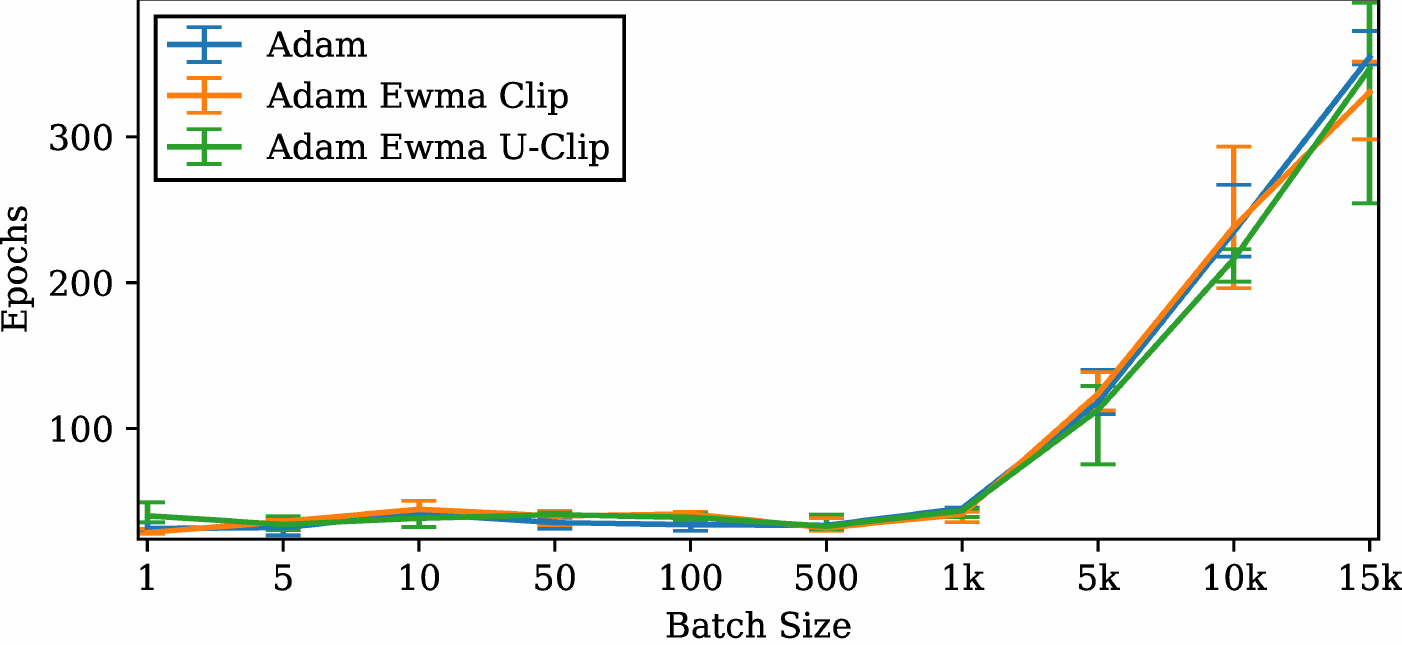}
    \end{subfigure}

    \begin{subfigure}[t]{\twofigwidth}
        \centering
  \includegraphics[width=\textwidth]{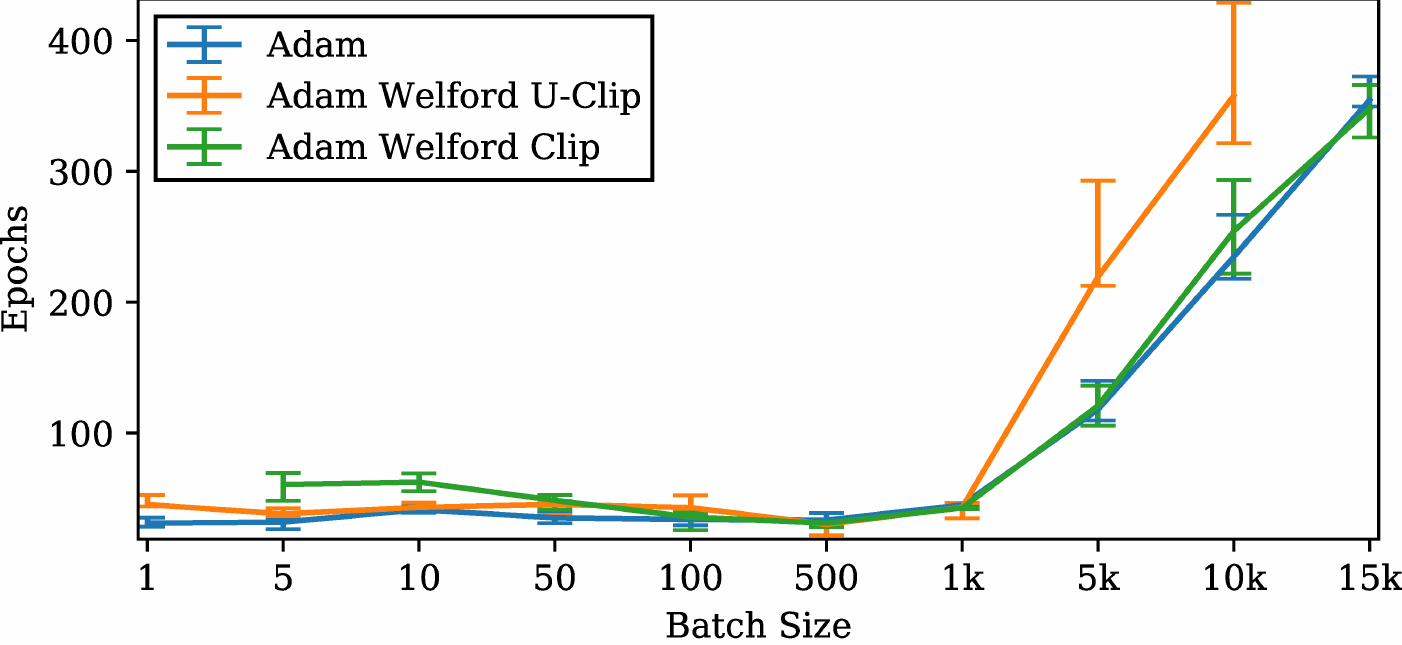}
    \end{subfigure}%
    ~ 
    \begin{subfigure}[t]{\twofigwidth}
        \centering
  \includegraphics[width=\textwidth]{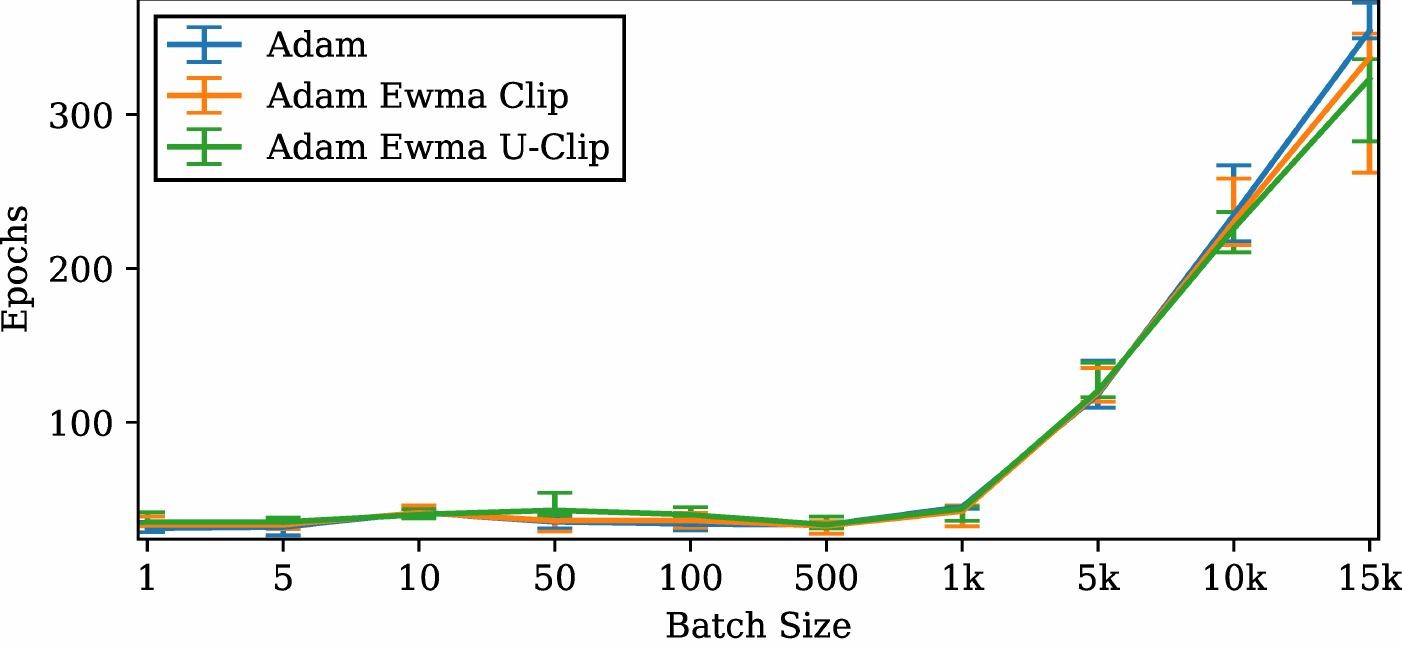}
    \end{subfigure}

    \caption{
      CIFAR10 results for adaptive clip regions with base optimizer Adam. 
      Adaptive \uclip{} gives no clear benefit in combination with Adam in this
      setting.
}
    \label{fig:cifar10-adaptive-adam}
\end{figure}
\end{document}